\documentclass[twoside]{article}
\usepackage{graphicx}
\usepackage{subcaption}
\usepackage[accepted]{aistats2026}
\usepackage[round]{natbib}

\usepackage{url}
\usepackage{xcolor}
\definecolor{aistatsblue}{RGB}{0, 20, 115}

\usepackage[colorlinks=true,
            citecolor=aistatsblue,
            linkcolor=aistatsblue,
            urlcolor=aistatsblue]{hyperref}
\usepackage{algorithm}
\usepackage{algorithmic}
\usepackage{amsfonts}
\usepackage{amsmath}
\usepackage{amsthm}
\usepackage{amssymb}

\newcommand{\mc}{\mathcal}

\usepackage{amsmath}

\usepackage{booktabs}
\usepackage{graphicx}
\usepackage{thmtools, thm-restate} 
\declaretheorem[name=Theorem,numberwithin=section]{theorem}
\declaretheorem[name=Proposition,sibling=theorem]{proposition}
\declaretheorem[name=Lemma,sibling=theorem]{lemma}
\declaretheorem[name=Corollary,sibling=theorem]{corollary}

\declaretheorem[name=Definition,sibling=theorem]{definition}

\usepackage{optidef}
\newtheorem{problem}{Problem}
\usepackage{booktabs}
\usepackage{tabularx}
\usepackage{makecell}
\usepackage{array}
\usepackage{longtable}

\newcommand{\notationgroup}[1]{%
    \multicolumn{2}{@{}l}{\textbf{#1}}\\[0.15em]
}

\usepackage[utf8]{inputenc}
\usepackage{bbding}
\usepackage{pifont}
\usepackage{wasysym}
\newtheorem{remark}[theorem]{Remark}

\begin{document}

%

%

\twocolumn[

\aistatstitle{Online Learning-to-Defer with Varying Experts}

\runningauthor{Dang, Montreuil, Meyer, Carlier, Ng, and Tsang}

\aistatsauthor{
Dang Hoang Duy$^{*, 1}$ \And
Yannis Montreuil$^{*,1,4,5}$ \And
Maxime Meyer$^{*,4,6}$ \AND
Axel Carlier$^{2,4}$ \And
Lai Xing Ng$^{3,4}$ \And
Wei Tsang Ooi$^{1,4}$
}

\aistatsaddress{
$^{1}$School of Computing, National University of Singapore, Singapore \\
$^{2}$Fédération ENAC ISAE-SUPAERO ONERA, Université de Toulouse, France \\
$^{3}$Institute for Infocomm Research, A*STAR , Singapore \\
$^{4}$IPAL, IRL 2955, Singapore \\
$^{5}$CNRS@CREATE LTD, 1 Create Way, Singapore \\
$^{6}$Department of Mathematics, National University of Singapore, Singapore, 117543 \\

}

]

\begin{abstract}
Learning-to-Defer (L2D) methods route each query either to a predictive model or to external experts. Real-world deployments require handling streaming data, changing expert availability, shifting expert reliability, and feedback observed only for the selected action. We introduce an online multiclass L2D algorithm that combines queried-action bandit feedback with a dynamically varying pool of experts. Let $N=n+n_e$, let $B$ bound the Frobenius norm of the linear score matrix, and let $\rho$ bound the augmented input norm. Assuming linear calibration and zero surrogate minimizability gap for the projected comparator class, our method achieves expected true-deferral regret $O((BN^{3/2}\rho+1)T^{2/3})$, improving to $O(BN^{3/2}\rho\sqrt T+B^2N^3\rho^2)$ under a concentrated-score condition. The analysis combines an online $\mathcal H$-consistency transfer bound with projected online convex optimization. Experiments on synthetic and real-world datasets demonstrate selective routing under varying expert availability and reliability.

\end{abstract}

\section{INTRODUCTION}

Learning-to-Defer (L2D) is a framework for integrating machine learning models with human experts---or more broadly, auxiliary decision-makers---to enhance the reliability of automated systems. 
The central idea is to let a predictive model handle routine or low-risk inputs autonomously, while deferring uncertain or high-stakes cases to more competent agents such as domain experts, larger models, or specialized subsystems~\citep{madras2018predict, mozannar2021consistent, Verma2022LearningTD}. 
This deference mechanism enables L2D systems to balance predictive accuracy against consultation cost, leveraging the scalability of machine learning while preserving safety through expert oversight. 
For instance, in medical diagnostics, a model may issue a confident prediction for straightforward cases but defer ambiguous ones to a clinician, thereby reducing risk without overburdening human resources~\citep{johnson2016,Johnson2023,strong2024towards}. 

While L2D has shown promise, existing formulations rely on assumptions that rarely hold in practice. 
Standard fully supervised score-based formulations commonly posit a fixed pool of experts available throughout deployment and use predictions from every expert in training, although limited-expert-prediction variants reduce this requirement~\citep{hemmer2023limited}. In reality, expert availability and reliability are dynamic: clinicians may be available only during scheduled shifts, their accuracy may degrade under fatigue, and automated subsystems may be restricted by budget or resource constraints. 
Moreover, many applications generate data sequentially, demanding decisions in an online rather than batch setting. 
These conditions expose a fundamental gap between classical L2D assumptions and the requirements of real-world deployment.

To our knowledge, we present the first multiclass online Learning-to-Defer framework that combines queried-action bandit correctness feedback with a dynamically varying multi-expert pool. Both the data distribution and the available pool may evolve over time. 
The formulation further accounts for experts whose reliability may drift and requires feedback only for the action actually queried, rather than full expert feedback on each round. 
To assess the theoretical performance of our approach, we analyze regret, which measures the gap between the learner’s cumulative performance and that of the best strategy in hindsight over $T$ rounds. 
We prove that our algorithm achieves regret sublinear in $T$, ensuring that the average per-round regret vanishes asymptotically.

Our main contributions are as follows: (\textbf{i}) We formulate multiclass online Learning-to-Defer with queried-action bandit feedback, time-varying expert availability, and drifting conditional performance. (\textbf{ii}) We derive an online $\mathcal{H}$-consistency transfer bound that explicitly accounts for minimizability gaps and exploration. (\textbf{iii}) Assuming linear calibration and zero surrogate minimizability gap for the projected comparator class, for $N=n+n_e$, Frobenius radius $B$, and augmented input radius $\rho$, our projected first-order algorithm achieves expected true-deferral regret $O((BN^{3/2}\rho+1)T^{2/3})$, improving to $O(BN^{3/2}\rho\sqrt T+B^2N^3\rho^2)$ under a concentrated-score condition. (\textbf{iv}) We empirically validate selective routing on synthetic and real-world datasets across fixed and drifting expert scenarios.

\section{RELATED WORK}

\paragraph{Learning-to-Defer.} 
Learning-to-Defer (L2D) generalizes selective prediction~\citep{Chow_1970, Bartlett_Wegkamp_2008, cortes, Geifman_El-Yaniv_2017} by allowing a model not only to abstain on uncertain inputs but also to defer them to external experts~\citep{madras2018predict, mozannar2021consistent, Verma2022LearningTD}. Much of the foundational score-based L2D literature focuses on batch learning; our setting instead combines sequential decisions, queried-action feedback, and a varying multi-expert pool.

Recent work also studies online deferral with budgeted partial feedback
\citep{reid2024online} and sequential human review with endogenous delays
\citep{lykouris2024content}. Our formulation is complementary: it focuses on
multiclass score-based learning with a dynamically varying pool of multiple
experts and observes correctness only for the queried action.

The score-based formulation of \citet{mozannar2021consistent} introduced the first Bayes-consistent surrogate, which has since become the standard for one-stage frameworks. Subsequent work has refined calibration~\citep{Verma2022LearningTD, Cao_Mozannar_Feng_Wei_An_2023}, improved surrogate design~\citep{charusaie2022sample, mao2024principledapproacheslearningdefer, montreuil2026why, montreuil2026beyond, montreuil2026learningtodeferexpertconditionedadvice}, and established guarantees such as $\mathcal{H}$-consistency and realizability~\citep{Mozannar2023WhoSP, mao2024realizablehconsistentbayesconsistentloss, mao2025mastering, mao2025thesis}. Further refinements address budgeted or imbalanced deferral regimes~\citep{desalvo2025budgeted, cortes2026optimized} and non-stationary streams~\citep{montreuil2026learningdefernonstationarytime}. Applications span diverse classification tasks~\citep{Kerrigan, Keswani, Hemmer, Verma2022LearningTD, Benz, Cao_Mozannar_Feng_Wei_An_2023, Tailor, liu2024mitigating}, while a special case of this formulation is the \emph{two-stage} framework, in which the base predictor and experts are trained offline and only the allocation function is learned~\citep{Narasimhan, mao2023twostage}. Extensions of two-stage approaches further address regression~\citep{mao2024regressionmultiexpertdeferral}, multi-task learning~\citep{montreuil2024twostagelearningtodefermultitasklearning}, adversarial robustness~\citep{montreuil2025adversarialrobustnesstwostagelearningtodefer, montreuil2026adversarial}, and applied systems~\citep{strong2024towards, palomba2025a, montreuil2025optimalqueryallocationextractive}. The $\mathcal{H}$-consistency bounds we invoke to convert surrogate regret into target deferral regret originate in a broad line of consistency theory~\citep{Awasthi_Mao_Mohri_Zhong_2022_multi, mao2024h}, with instantiations for abstention and rejection~\citep{theoretically, Mao_Mohri_Zhong_2023, mohri2024learningreject}, adversarial losses~\citep{Grounded}, ranking and cardinality-aware prediction~\citep{mao2023pairwisemisranking, mao2023rankingabstention, cortes2024cardinalityaware}, structured prediction~\citep{mao2023structuredprediction}, and recent generalized-metric and robust generative algorithms~\citep{mohri2026generalized, mohri2026principled, cortes2026theoretical}.

\paragraph{Online-Learning.} 

The most relevant line of work to our paper is bandit multiclass classification. A common approach employs \emph{proper learning algorithms} that fix a hypothesis before the next input \citep{kakade2008efficient}. Subsequent papers refine this idea \citep{hazan2011newtron, beygelzimer2017efficient, van2020exploiting, van2021beyond}, with \citet{van2020exploiting} exploiting the gap between surrogate and zero-one loss. We also use a proper learner, but we evaluate performance directly in terms of the zero-one loss rather than only through a surrogate. Our distributional assumptions are broader than those in \citet{crammer2013multiclass}, which treat a specific label distribution, while our results cover arbitrary label distributions.

Another relevant line of work is online logistic regression. \citet{foster2018logistic} give a statistically optimal strategy that is not computationally practical. \citet{jezequel2020efficient} provide an efficient algorithm in the binary case. More recent work \citep{jezequel2021mixability,agarwal2022efficient} develops algorithms with near-optimal regret and efficient running time, and shows how to use them for bandit classification. Our analysis follows this template. We first study the underlying online convex optimization problem and then combine it with a separate argument to obtain guarantees for multiclass bandit classification.

With a fixed number of experts per round, our setting contains contextual multi-armed bandit instances \citep{auer2002nonstochastic,li2010}. Under the usual richness assumptions on the comparator class, these instances inherit the classical $\sqrt T$ bandit scale. This motivates exploiting the linear score structure while controlling the true zero-one loss within a proper-learner framework. Further discussion is deferred to Appendix~\ref{appendix:related_works}.



\section{PRELIMINARIES}
\label{section:preliminaries}

\paragraph{Multiclass Classification.} We consider the standard multiclass classification setting. The input space is $\mathcal{X} = \{x \in \mathbb{R}^{d} : \|x\|_{2} \leq R\}$ for some radius $R > 0$, and the label space is $\mathcal{Y} = [n] := \{1, \ldots, n\}$ with $n \geq 2$ classes \citep{Foundations}. Fix a class $\mathcal H$ of score functions $h:\mathcal X\times\mathcal Y\to\mathbb R$; $\mathcal H_{\mathrm{all}}$ denotes all measurable such functions. Predictions use $h(x)=\arg\max_{j\in\mathcal Y}h(x,j)$. The $0$–$1$ loss is $\ell_{01}(h,x,y)=\mathbf1\{h(x)\neq y\}$. Since it is non-differentiable, learning algorithms typically minimize a surrogate \citep{Statistical,Steinwart2007HowTC}. We denote by $\Phi_{01}:\mathcal H\times\mathcal X\times\mathcal Y\to\mathbb R_+$ a general surrogate loss. Two prominent families are the cross-entropy family $\Phi^{\mathrm{cse}}_{01}$ and constrained surrogates $\Phi^{\mathrm{cstnd}}_{01}$ \citep{awasthi2021calibrationconsistencyadversarialsurrogate,Awasthi_Mao_Mohri_Zhong_2022_multi,mao2023crossentropylossfunctionstheoretical}. A surrogate is $\mathcal H$-\emph{consistent} if its excess risk controls the target excess risk relative to $\mathcal H$.
For any loss $L$, define $\mathcal E_L(h)=\mathbb E_P[L(h,X,Y)]$, $\mathcal E^*_{L,\mathcal H}=\inf_{h\in\mathcal H}\mathcal E_L(h)$, and $\mathcal U_{L,\mathcal H}=\mathcal E^*_{L,\mathcal H}-\mathbb E_X[\inf_{h\in\mathcal H}\mathbb E[L(h,X,Y)\mid X]]$.

\begin{definition}[$\mathcal{H}$-Consistency Bounds] Suppose the surrogate $\Phi_{01}$ is $\mathcal{H}$-consistent with respect to $\ell_{01}$ for every joint data distribution $P$. Then there exists a non-decreasing function $\Gamma : \mathbb{R}_+ \to \mathbb{R}_+$ such that, for all $h \in \mathcal{H}$,
\begin{equation*} \begin{aligned} \mc{E}_{\ell_{01}}(h)& - \mc{E}^\ast_{\ell_{01}, \mc{H}} + \mathcal{U}_{\ell_{01}, \mc{H}} \\ & \leq \Gamma\!\left( \mc{E}_{\Phi_{01}}(h) - \mc{E}^\ast_{\Phi_{01}, \mc{H}} + \mathcal{U}_{\Phi_{01}, \mc{H}} \right). \end{aligned} \end{equation*}
\end{definition}
The quantity $\mathcal U_{L,\mathcal H}$ is the \emph{minimizability gap}. It vanishes for sufficiently rich classes, such as $\mathcal H=\mathcal H_{\mathrm{all}}$, recovering standard Bayes consistency~\citep{Steinwart2007HowTC,Awasthi_Mao_Mohri_Zhong_2022_multi}.

\paragraph{Batch Learning-to-Defer.} Standard batch L2D assumes access to a fixed pool of $n_e$ experts $\mc{G}=\{g_1, g_2, \dots, g_{n_e}\}$, each being a mapping $g_i : \mathcal{X} \to \mathcal{Y}$~\citep{madras2018predict, mozannar2021consistent}. Training examples $(x,y)$ are drawn i.i.d.\ from an unknown joint distribution $P$, and for every input $x$ we observe the tuple $(y,g_1(x), g_2(x), \dots, g_{n_e}(x))$.

The learner then has to decide, for each input, whether to predict directly or defer to an expert.
To formalize this, we define the augmented label space $\overline{\mathcal{Y}} = \mathcal{Y} \cup \{n+j: g_j \in \mc{G}\}$, where indices $1,\dots,n$ correspond to class predictions and indices $n+j$ correspond to deferral to expert $g_j$.
Let $\mathcal H_{\mathrm{def}}^{\mathrm{batch}}\subseteq\mathbb R^{\mathcal X\times\overline{\mathcal Y}}$ be the batch deferral score class, and let $\Phi_{01}^{\mathrm{batch}}:\mathcal H_{\mathrm{def}}^{\mathrm{batch}}\times\mathcal X\times\overline{\mathcal Y}\to\mathbb R_+$ be its augmented multiclass surrogate.

The goal of the learner is to find a hypothesis $h : \mathcal{X} \times \overline{\mathcal{Y}} \to \mathbb{R}$ that minimizes the true score-based loss:
\begin{restatable}[True Score-Based Loss]{definition}{claloss}\label{def_cla_l2d}
Let $x \in \mathcal{X}$, $y \in \mathcal{Y}$, and $h \in \mathcal H_{\mathrm{def}}^{\mathrm{batch}}$. The true score-based loss is
\[
\ell_{\mathrm{def}}^{\mathrm{batch}}(h(x),y) =
\begin{cases}
\mathbf{1}\{h(x)\neq y\} & \text{if } h(x)\in\mathcal{Y},\\[0.4em]
c_j(x,y) & \text{if } h(x)=n+j.
\end{cases}
\]
\end{restatable}
Here, \(c_j(x,y)\) accounts for both the misclassification and the query cost of expert \(g_j\). Throughout the theory, expert costs are normalized to \([0,1]\): we denote by \(0\leq\underline c_j\leq\overline c_j\leq1\) bounds satisfying \(c_j(x,y)\in[\underline c_j,\overline c_j]\) for every \((x,y)\).

In practice, let the raw expert cost be
\(r_j(x,y)=\alpha_j\mathbf 1\{g_j(x)\neq y\}+\beta_j\), where
\(\alpha_j,\beta_j\geq0\). We use the normalized cost
\[
c_j(x,y)=\frac{r_j(x,y)}{Q_j},\qquad
Q_j=\max\{1,\alpha_j+\beta_j\}.
\]
This leaves the classifier mistake cost equal to one and guarantees
\(c_j\in[0,1]\). In the common case \(\alpha_j=1\), a wrong expert action has
normalized cost one, while a correct expert action still incurs the positive
query cost \(\beta_j/(1+\beta_j)\).

Directly minimizing $\ell_{\mathrm{def}}^{\mathrm{batch}}$ is intractable due to its non-differentiability \citep{Statistical, Steinwart2007HowTC}. 
Following \citet{mozannar2021consistent, mao2024principledapproacheslearningdefer}, we instead optimize a convex surrogate loss that is Bayes and $\mathcal H_{\mathrm{def}}^{\mathrm{batch}}$-consistent for $\ell_{\mathrm{def}}^{\mathrm{batch}}$. 
\begin{restatable}[Score-Based Surrogate]{definition}{surrclaloss}\label{def_surr_l2d}
Let $x \in \mathcal{X}$, $y \in \mathcal{Y}$, and $h \in \mathcal H_{\mathrm{def}}^{\mathrm{batch}}$. The score-based surrogate is
\begin{equation*}
\begin{aligned}
\Phi_{\mathrm{def}}^{\mathrm{batch}}(h&,x,y) \; = \; \Phi_{01}^{\mathrm{batch}}(h,x,y) \\
& +\sum_{j=1}^{n_e} (1-c_j(x,y))\,\Phi_{01}^{\mathrm{batch}}(h,x,n+j),
\end{aligned}
\end{equation*}
\end{restatable}

\section{EXTENDING LEARNING-TO-DEFER TO THE ONLINE-LEARNING SETTING}
\label{section:extending_to_online}

\subsection{Problem Formulation}
\label{section:problem_formulation}
\paragraph{Problem Setup.} Prior work assumes access to a fixed pool of experts~\citep{madras2018predict, mozannar2021consistent, Verma2022LearningTD}, but this assumption rarely holds in practice. In realistic deployments, the set of available experts may evolve over time: human annotators can join or leave, domain specialists may be consulted only intermittently, and automated subsystems may become unavailable due to resource or reliability constraints. This motivates the introduction of the online framework, in which the learner receives information and makes decisions sequentially. To capture this, we denote by $\mathcal{A} \subseteq \mathcal{P}(\mathcal{G})$ the collection of feasible expert subsets, and $\overline{\mathcal X}=\mathcal X\times \mathcal A$ the augmented input space. Thus the augmented inputs $\overline x=(x,A)$ consist of both the classical input $x\in\mathcal X$ and the set of available experts $A\in\mathcal A$.

The interaction protocol is as follows: (\textbf{i}) At each round $t$, the adversary selects an augmented input $\overline{x}_t \in \overline{\mathcal{X}}$ adversarially \citep{introductiononlineconvex}, which is then revealed to the learner. (\textbf{ii}) A label $y_t$ is drawn from a fixed but unknown conditional kernel $p(\cdot \mid \overline{x}_t)$. (\textbf{iii}) The learner commits to a prediction $y_t'$ from the augmented label set with experts available at round $t$, namely $\overline{\mathcal{Y}}_{A_t} = [n] \cup \{n+j : g_j \in A_t\}$. (\textbf{iv}) The learner then incurs a loss determined by its prediction.

We allow expert predictions and costs to vary with time. Formally, the
predictable system state, including the round index whenever needed, is included
in $x_t$; hence $g_j(x_t)$ and $c_j(x_t,y_t)$ remain well-defined functions of
the augmented input. The input, availability, cost, and conditional-distribution
sequences are fixed before the learner's randomization, but may vary arbitrarily
with $t$. We use bold notation for sequences, so
$(\overline x_1,\ldots,\overline x_T)$ is denoted $\overline{\mathbf x}$.

\paragraph{Bandit Feedback.} We work in the bandit feedback setting, where the learner only observes feedback about the chosen action. If it predicts a label $y_t' \in [n]$, it does not receive the true label $y_t$ but only a binary signal indicating correctness, $\mathbf{1}\{y_t' = y_t\}$. If it defers to expert $g_j$ (i.e., $y_t' = n+j$), it observes whether the expert was correct, $\mathbf{1}\{g_j(x_t) = y_t\}$, together with the associated cost $c_j(x_t,y_t)$. 

In contrast, standard fully supervised batch Learning-to-Defer training observes the true label and the correctness of all experts. Limited-expert-prediction batch methods relax that data requirement, while our protocol is sequential and reveals feedback only for the selected action. This makes the bandit setting more restrictive than standard supervised classification, where the learner receives the full label.

\paragraph{Hypothesis Set.} As for standard batch L2D, we consider that the learner predicts through a hypothesis $h : \overline{\mathcal{X}} \times \overline{\mathcal{Y}} \to \mathbb{R}$. For every augmented input $\overline{x}_t=(x_t,A_t)$, the prediction of the learner is then $h(\overline{x}_t) = \arg\max_{y \in \overline{\mathcal{Y}}_{A_t}} h(\overline{x}_t,y)$. We only consider the linear hypothesis set, \textit{id est} hypothesis in the form \(h(\overline{x},y) = \langle w_y,x \rangle + b_y\) where \(w_y\) is a vector of some weight matrix \(W \in \mathbb{R}^{N \times d}\), \(b_y\) is the corresponding bias, and $N=n+n_e$ is the size of the augmented label set. We denote by $\mathcal H$ the set of such hypotheses. Note that in this work, we only consider proper learning algorithms. That is the hypothesis \(h_t \in \mathcal{H}\) at each round $t$ is generated before seeing input \(\overline{x}_t\). Finally, the goal of the learner is to minimize the gap between his sequence of hypotheses $\mathbf h$ and the best hypothesis in hindsight $h$.

\paragraph{Semi-Adversarial Setting.} We deliberately adopt a semi-adversarial framework, where inputs $\overline{x}_t$ may be chosen adversarially while labels are drawn from the fixed conditional kernel $p$. This choice reflects the core structure of Learning-to-Defer: in realistic deployments, the instances presented to the system—such as medical cases, flagged content, or decision-support queries—may arrive in arbitrary or even adversarial ways, yet both the learner and the experts must ultimately be evaluated against the same ground-truth mechanism. Without this structure, the notion of deferring to experts would be ill-posed, since expert reliability can only be assessed relative to a consistent source of truth.

To denote the induced distributions, given any finite set \(\mathcal S\), we thus define \(\Delta_{\mathcal S} \subset \mathbb{R}^{|\mathcal S|}\) as the set of standard \(|\mathcal S|\)-dimensional simplexes representing distributions over \(\mathcal S\). We denote by \(\mathcal{D}\) a set of conditional distributions over \(\mathcal{Y}\), indexed by \(\overline{x}\), that is, \(\mathcal{D} \subseteq \{p: \overline{\mc{X}} \to \Delta_{\mathcal{Y}}\}\). For ease of notation, write \(p(\overline{x},y) = \mathbb{P}(Y = y | \overline{X} = \overline{x})\). 

This setting forces us to randomize the learner's predictions, as discussed in Appendix~\ref{appendix:benchmark}. At each round $t$, we thus sample the predictions from a distribution \(q_t \in \Delta_{\overline{\mathcal{Y}}_{A_t}}\): 
\[
q_t = (1 - \gamma_t)e_{h_t(\overline{x}_t)} + \frac{\gamma_t}{|\overline{\mathcal{Y}}_{A_t}|}\mathbf{1},
\]
where \(e_y\) is the unit vector for action \(y\) and \(\gamma_t\in[0,1]\) is the exploration rate; importance-weighted bandit rounds require \(\gamma_t>0\), whereas full-information rounds use \(\gamma_t=0\).

\subsection{Loss Formulation}

As introduced in Definition~\ref{def_cla_l2d}, the classical Learning-to-Defer formulation~\citep{mozannar2021consistent} is defined in a batch-learning setting, assuming access to a fixed dataset and a fixed pool of experts. To relax these assumptions, we introduce a \emph{true deferral loss}, which naturally generalizes Definition~\ref{def_cla_l2d} to accommodate scenarios where the set of available experts may be restricted.

\begin{restatable}[True Deferral Loss]{definition}{truedeferralloss}\label{true_deferral_loss}
    Fix \(h \in \mathcal{H}\) and
    \((\overline{x},y)=((x,A),y)\in\overline{\mathcal{X}}\times\mathcal{Y}\),
    and write \(a_h=h(\overline x)\). The true deferral loss is
    \[
    \ell_{\mathrm{def}}(h,\overline{x},y)=
    \begin{cases}
        \mathbf{1}\{a_h\neq y\}, & a_h\in\mathcal Y,\\
        c_j(x,y), & a_h=n+j,\quad g_j\in A.
    \end{cases}
    \]
Moreover, let the online learner predict according to a distribution \(q \in \Delta_{\overline{\mathcal{Y}}_{A}}\). The corresponding true deferral loss with respect to the learner is defined by
    \begin{equation*}
        \begin{aligned}
        \ell_{\mathrm{def}}(q,\overline{x},y)
        &=\sum_{y'\in\mathcal Y}q(y')\mathbf 1\{y'\neq y\}\\
        &\quad+\sum_{n+j\in\overline{\mathcal Y}_A}q(n+j)c_j(x,y).
        \end{aligned}
    \end{equation*}
\end{restatable}
Note that our true deferral loss, introduced in Definition~\ref{true_deferral_loss}, directly extends the formulations of \citet{mozannar2021consistent,Verma2022LearningTD,mao2024principledapproacheslearningdefer}. 

Since $\ell_{\mathrm{def}}$ is non-differentiable, the standard practice in online classification is to replace it with a convex surrogate $\Phi_{\mathrm{def}}$, optimize $\Phi_{\mathrm{def}}$ using Online Convex Optimization (OCO) \citep{introductiononlineconvex} methods such as Online Gradient Descent (OGD) \citep{zinkevich2003online}, and relate surrogate regret to true regret through a consistency bound.

\begin{restatable}[Varying Multiclass Surrogate Loss]{definition}{varyingsurrogateloss}
\label{def:varying_01_surrogate}
    For each expert set \(A\), let \(\Phi_{01}^A: \mathcal{H} \times \overline{\mathcal{X}} \times \overline{\mathcal{Y}}_A\) be the standard \(\overline{\mathcal{Y}}_A\)-class classification surrogate. The varying multiclass surrogate loss $\widetilde{\Phi}_{01}: \mathcal{H} \times \overline{\mathcal{X}} \times \overline{\mathcal{Y}}$ is defined as follows
    $\widetilde{\Phi}_{01}(h, \overline{x},j) =
        \begin{cases}
        \Phi_{01}^A(h,\overline x,j) & \text{if } j \in \overline{\mathcal{Y}}_A, \\
        0 & \text{otherwise}.
        \end{cases}$
\end{restatable}

Intuitively, this definition guarantees that the surrogate $\widetilde{\Phi}_{01}$ is unaffected by the scores of experts that are not available. Building on this property, any multiclass surrogate loss $\widetilde{\Phi}_{01}$ can be naturally extended to obtain an online equivalent of Definition~\ref{def_surr_l2d}.

\begin{restatable}[Surrogate Deferral Loss]{definition}{surrogatedeferralloss}
\label{def:general_surrogate_def_loss}
    For any \(h \in \mathcal{H}\) and \((\overline{x}, y) = ((x,A),y) \in \overline{\mathcal{X}} \times \mathcal{Y}\),
    \begin{equation*}
        \begin{aligned}
        \Phi_{\mathrm{def}}(h, \overline{x} , y) & = \widetilde{\Phi}_{01}(h, \overline{x}, y) \\
        & + \sum_{n + j \in \overline{\mathcal{Y}}_A } \mspace{-10mu}(1 - c_j(x, y)) \widetilde{\Phi}_{01}(h, \overline{x}, n + j).
        \end{aligned}   
    \end{equation*} 
\end{restatable}

\subsection{Regret Formulation}

We can now formalize the goal of the learner. For any loss $L$, define
\(\mathcal C_L(h,\overline x)=\mathbb E_{Y\sim p(\cdot\mid\overline x)}[L(h,\overline x,Y)]\),
\(\mathcal R_L(\mathbf h,\overline{\mathbf x})=\sum_{t=1}^T\mathcal C_L(h_t,\overline x_t)\),
and \(\mathcal R_L(h,\overline{\mathbf x})=\sum_{t=1}^T\mathcal C_L(h,\overline x_t)\).
For randomized actions, define \(\mathcal C_{\ell_{\mathrm{def}}}(q,\overline x)=\mathbb E_{Y\sim p(\cdot\mid\overline x)}[\ell_{\mathrm{def}}(q,\overline x,Y)]\) and \(\mathcal R_{\ell_{\mathrm{def}}}(\mathbf q,\overline{\mathbf x})=\sum_{t=1}^T\mathcal C_{\ell_{\mathrm{def}}}(q_t,\overline x_t)\).
Also let \(\mathcal C^*_{L,\mathcal H}(\overline x)=\inf_{h\in\mathcal H}\mathcal C_L(h,\overline x)\) and
\(\mathcal R^*_{L,\mathcal H}(\overline{\mathbf x})=\inf_{h\in\mathcal H}\mathcal R_L(h,\overline{\mathbf x})\).
The objective is the true deferral regret
\begin{equation}\label{regret}
    R_{\ell_{\mathrm{def}}}(T) = \mathcal{R}_{\ell_{\mathrm{def}}}(\mathbf{q}, \overline{\mathbf{x}}) - \mathcal{R}_{\ell_{\mathrm{def}}, \mathcal{H}}^*(\overline{\mathbf{x}}).
\end{equation}
In particular, the goal is to achieve \emph{sublinear} regret scaling in $T$, that is $R_{\ell_{\mathrm{def}}}(T) = o(T)$. 
This ensures that the average regret $R_{\ell_{\mathrm{def}}}(T)/T$ converges to zero as $T \to \infty$, meaning that the learner’s performance asymptotically approaches that of the best fixed strategy in hindsight.

The comparator in this generic formulation is the best fixed deferral strategy
in \(\mathcal H\), rather than a classifier-only strategy. It may use
available experts whenever doing so reduces conditional cost. Comparisons with a
classifier-only benchmark require that the corresponding classifier be embedded
in the same parameter class and norm budget. Section~\ref{section:regret_bound}
instantiates the generic class $\mathcal H$ as the projected class $\mathfrak H_B$.

The regret typically depends on the \((\ell_{\mathrm{def}}, \mathcal{H})\)-minimizability gap \(\mathcal{M}_{\ell_{\mathrm{def}}, \mathcal{H}}(\overline{\mathbf{x}}) = \mathcal{R}_{\ell_{\mathrm{def}}, \mathcal{H}}^*(\overline{\mathbf{x}}) - \sum_{t \in [T]} \mathcal{C}_{\ell_{\mathrm{def}}, \mathcal{H}}^*(\overline{x}_t)\). The minimizability gap measures the discrepancy between selecting the best hypothesis adaptively at each round and selecting the best single hypothesis in hindsight. 


\section[H-Consistency Bound for Online Learning]{\(\mathcal{H}\)\nobreakdash-CONSISTENCY BOUND FOR ONLINE LEARNING}
\label{section:general_theorems}

A central property in batch learning is \emph{consistency}, which ensures that minimizing a surrogate risk recovers the Bayes-optimal solution. This property has been extensively studied in the classical setting \citep{Statistical, Steinwart2007HowTC, bartlett1}. Here we derive the corresponding regret transfer for time-varying active action sets and randomized exploration, drawing on the bounds developed in \citet{bartlett1, Awasthi_Mao_Mohri_Zhong_2022_multi, mao2023crossentropylossfunctionstheoretical, mao2024hconsistencyregression, mao2024multilabel, mao2024universalgrowth, mao2025enhanced, zhong2025thesis, mohri2026beyond, mohri2026linear, mohri2026mind} and extensions to imbalanced and metric-based settings~\citep{cortes2025improvedbalanced, cortes2025balancingscales, mao2025principledbinary}.  

Formally, our goal is to relate true regret, evaluated at randomized predictions \(\mathbf q\), to surrogate regret, evaluated at score functions \(\mathbf h\). Fix \(\overline x=(x,A)\), \(p\in\mathcal D\), and costs \(c_j:\mathcal X\times\mathcal Y\to[0,1]\). Define an induced distribution on \(\overline{\mathcal Y}_A\) by
\[
s(\overline x,a)=
\begin{cases}
p(\overline x,a),&a\in\mathcal Y,\\
1-\sum_{y\in\mathcal Y}p(\overline x,y)c_j(x,y),&a=n+j,
\end{cases}
\]
Let \(S(\overline x)=\sum_{a\in\overline{\mathcal Y}_A}s(\overline x,a)\) and
\(\overline s(\overline x,a)=s(\overline x,a)/S(\overline x)\).
For a multiclass loss \(L\), let \(\mathcal C_L^{\overline s}(h,\overline x)=\sum_{a\in\overline{\mathcal Y}_A}\overline s(\overline x,a)L(h,\overline x,a)\) and \(\Delta\mathcal C_{L,\mathcal H}^{\overline s}(h,\overline x)=\mathcal C_L^{\overline s}(h,\overline x)-\inf_{h'\in\mathcal H}\mathcal C_L^{\overline s}(h',\overline x)\). We assume that there exist a concave, non-decreasing \(\Gamma:\mathbb R_+\to\mathbb R_+\), with \(\Gamma(0)=0\), and \(\epsilon\geq0\) such that every distribution induced above satisfies
\[
\Delta \mathcal{C}^{\overline s}_{\ell_{01},\mathcal{H}}(h,\overline{x})
\leq \Gamma\!\big(\Delta \mathcal{C}^{\overline s}_{\widetilde{\Phi}_{01},\mathcal{H}}(h,\overline{x})\big)+\epsilon
\]
for every \(h\in\mathcal H\). We suppress the superscript \(\overline s\) below.

\begin{restatable}{theorem}{GammaBound} (Online $\Gamma$-Transfer) \label{thm:gamma-bound}
    Assume that \(\widetilde{\Phi}_{01}\) satisfies the pointwise
    $\mathcal H$-consistency bound above. Fix augmented inputs
    \(\overline{\mathbf x}=(\overline x_t)_{t=1}^T\), a conditional label
    kernel \(p\in\mathcal D\), and hypotheses \(h_t\in\mathcal H\).
    For \(\gamma_t\in[0,1]\), let
    \[
    q_t=(1-\gamma_t)e_{h_t(\overline x_t)}
    +\frac{\gamma_t}{|\overline{\mathcal Y}_{A_t}|}\mathbf1,
    \qquad
    \overline S_T=\frac1T\sum_{t=1}^T S(\overline x_t).
    \]
    For \(L\in\{\ell_{\mathrm{def}},\Phi_{\mathrm{def}}\}\), abbreviate
    \[
    E_L(T):=R_L(T)+\mathcal M_{L,\mathcal H}(\overline{\mathbf x}).
    \]
    Then
    \begin{equation*}
        \begin{aligned}
        \frac{E_{\ell_{\mathrm{def}}}(T)}{T}
        &\leq \overline S_T\,\Gamma\!\left(
        \frac{E_{\Phi_{\mathrm{def}}}(T)}{T\overline S_T}
        \right)\\
        &\quad+\overline S_T\epsilon+\frac1T\sum_{t=1}^T\gamma_t.
        \end{aligned}
    \end{equation*}
    Here \(S\) is the score mass defined above. Since \(c_j\in[0,1]\),
    \(1\leq S(\overline x_t)\leq|\overline{\mathcal Y}_{A_t}|\).
\end{restatable}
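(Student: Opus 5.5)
The plan is to reduce the claim to a pointwise, per-round inequality. Summing these inequalities and applying Jensen to the concave $\Gamma$ will give the stated bound. First unwind the left side. By definition of the minimizability gap,
\[
E_{\ell_{\mathrm{def}}}(T)=\sum_{t}\big(\mathcal C_{\ell_{\mathrm{def}}}(q_t,\overline x_t)-\mathcal C^*_{\ell_{\mathrm{def}},\mathcal H}(\overline x_t)\big),
\]
and similarly
\[
E_{\Phi_{\mathrm{def}}}(T)=\sum_t\Delta\mathcal C_{\Phi_{\mathrm{def}},\mathcal H}(h_t,\overline x_t),
\]
where the surrogate is evaluated at $h_t$. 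Since $\mathcal C_{\ell_{\mathrm{def}}}(q,\overline x)$ is linear in $q$ and every action's conditional cost lies in $[0,1]$, we get
\[
\mathcal C_{\ell_{\mathrm{def}}}(q_t,\overline x_t)\le \mathcal C_{\ell_{\mathrm{def}}}(h_t,\overline x_t)+\gamma_t .
\]
The reason is that $q_t$ differs from $e_{h_t(\overline x_t)}$ by at most $\gamma_t$ of mass, moved onto actions whose cost differs by at most $1$. This isolates the exploration term $\frac1T\sum_t\gamma_t$.

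Next, fix a round and rewrite both conditional risks in terms of the induced weights $s$. For the surrogate, taking the expectation over $Y\sim p$ in the definition of $\Phi_{\mathrm{def}}$ gives
\[
\mathcal C_{\Phi_{\mathrm{def}}}(h,\overline x)=\sum_{a\in\overline{\mathcal Y}_A}s(\overline x,a)\,\widetilde\Phi_{01}(h,\overline x,a)=S(\overline x)\,\mathcal C^{\overline s}_{\widetilde\Phi_{01}}(h,\overline x).
\]
This uses that $\widetilde\Phi_{01}$ vanishes off $\overline{\mathcal Y}_A$, so only available experts contribute. For the target, write $\mathcal C_{\ell_{\mathrm{def}}}(h,\overline x)=1-p(\overline x,a_h)$ if $a_h\in\mathcal Y$, and $1-s(\overline x,n+j)$ if $a_h=n+j$. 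In both cases this equals $1-s(\overline x,a_h)$. Likewise
\[
\mathcal C_{\ell_{01}}^{\overline s}(h,\overline x)=1-\overline s(\overline x,a_h),
\]
with the $0$–$1$ loss over $\overline{\mathcal Y}_A$ and the argmax taken over available actions. Hence
\[
\mathcal C_{\ell_{\mathrm{def}}}(h,\overline x)=1-S+S\,\mathcal C^{\overline s}_{\ell_{01}}(h,\overline x).
\]
The affine offset $1-S$ does not depend on $h$, so it cancels when we subtract the infimum over $\mathcal H$. Therefore
\[
\Delta\mathcal C_{\ell_{\mathrm{def}},\mathcal H}=S\,\Delta\mathcal C^{\overline s}_{\ell_{01},\mathcal H}
\quad\text{and}\quad
\Delta\mathcal C_{\Phi_{\mathrm{def}},\mathcal H}=S\,\Delta\mathcal C^{\overline s}_{\widetilde\Phi_{01},\mathcal H}.
\]
Applying the assumed pointwise bound gives, for each $t$, with $S_t=S(\overline x_t)$,
\[
\Delta\mathcal C_{\ell_{\mathrm{def}},\mathcal H}(h_t,\overline x_t)\le S_t\,\Gamma\!\big(\Delta\mathcal C_{\Phi_{\mathrm{def}},\mathcal H}(h_t,\overline x_t)/S_t\big)+S_t\epsilon .
\]

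Finally, sum over $t$ and divide by $T$. The weights $S_t/(T\overline S_T)$ are nonnegative and sum to one, so concavity of $\Gamma$ (Jensen) gives
\[
\frac1T\sum_t S_t\,\Gamma(u_t/S_t)\le \overline S_T\,\Gamma\!\Big(\sum_t u_t/(T\overline S_T)\Big),\qquad u_t=\Delta\mathcal C_{\Phi_{\mathrm{def}},\mathcal H}(h_t,\overline x_t).
\]
Combining this with the $\epsilon$ and $\gamma_t$ terms yields the statement. The bounds $1\le S\le|\overline{\mathcal Y}_{A}|$ are immediate: $\sum_{y}p=1$, and each expert weight $s(\overline x,n+j)=1-\mathbb E[c_j]$ lies in $[0,1]$. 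These bounds also guarantee $S>0$, so the normalization by $S$ is legitimate.

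The main obstacle is the per-round identification step. One must check carefully that $\inf_{h\in\mathcal H}$ of $\mathcal C_{\ell_{\mathrm{def}}}$ really equals $1-S+S\inf_h\mathcal C^{\overline s}_{\ell_{01}}$. This requires that both argmaxes range over the same active set $\overline{\mathcal Y}_{A_t}$, and that unavailable experts affect neither risk; the latter is exactly what the zero extension in Definition~\ref{def:varying_01_surrogate} provides. One must also handle ties in the argmax consistently.
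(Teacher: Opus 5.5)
Your proposal is correct and follows the paper's proof essentially step for step. Both arguments use the per-round identities $\Delta\mathcal C_{\ell_{\mathrm{def}},\mathcal H}=S\,\Delta\mathcal C^{\overline s}_{\ell_{01},\mathcal H}$ and $\Delta\mathcal C_{\Phi_{\mathrm{def}},\mathcal H}=S\,\Delta\mathcal C^{\overline s}_{\widetilde\Phi_{01},\mathcal H}$, the $\gamma_t$ exploration penalty from per-action losses in $[0,1]$, weighted Jensen for concave $\Gamma$, and the identification of the summed gaps with $R_L(T)+\mathcal M_{L,\mathcal H}(\overline{\mathbf x})$. The only difference is cosmetic: you derive the zero-one gap from the affine relation $\mathcal C_{\ell_{\mathrm{def}}}=1-S+S\,\mathcal C^{\overline s}_{\ell_{01}}$ instead of citing the multiclass calibration-gap lemma, which avoids the $H(\overline x)$ bookkeeping.
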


The proof is deferred to Appendix~\ref{appendix:consistency_bound}. The following specialization applies whenever the surrogate minimizability gap is zero; this includes the pointwise realizable case in which a single hypothesis in $\mathcal H$ attains the optimal conditional surrogate risk on every round.

\begin{corollary}
\label{cor:consistency_bound}
Under the assumptions of Theorem~\ref{thm:gamma-bound}, suppose additionally that $\mathcal M_{\Phi_{\mathrm{def}},\mathcal H}(\overline{\mathbf x})=0$. Then
\begin{equation*}
\begin{aligned}
\frac{R_{\ell_{\mathrm{def}}}(T)}{T}
&\leq \overline S_T\,\Gamma\!\left(
\frac{R_{\Phi_{\mathrm{def}}}(T)}{T\overline S_T}
\right)
+\overline S_T\epsilon+\frac1T\sum_{t=1}^T\gamma_t.
\end{aligned}
\end{equation*}
\end{corollary}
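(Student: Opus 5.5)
The corollary follows from Theorem~\ref{thm:gamma-bound} once the two minimizability gaps are handled. We substitute $\mathcal M_{\Phi_{\mathrm{def}},\mathcal H}(\overline{\mathbf x})=0$ on the right-hand side and bound the true-loss gap below by zero on the left-hand side. The proof is therefore a short bookkeeping argument.

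\textbf{Step 1: the surrogate side.} By definition, $E_{\Phi_{\mathrm{def}}}(T)=R_{\Phi_{\mathrm{def}}}(T)+\mathcal M_{\Phi_{\mathrm{def}},\mathcal H}(\overline{\mathbf x})$. Under the hypothesis this equals $R_{\Phi_{\mathrm{def}}}(T)$, so the argument of $\Gamma$ in the theorem becomes $R_{\Phi_{\mathrm{def}}}(T)/(T\overline S_T)$.

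Two domain checks are needed here. First, $\overline S_T\ge 1>0$, which the theorem gives from $1\le S(\overline x_t)$, so the division is legitimate. Second, $\Gamma$ is defined on $\mathbb R_+$, so the argument should be nonnegative. This holds because $E_{\Phi_{\mathrm{def}}}(T)=\sum_t\mathcal C_{\Phi_{\mathrm{def}}}(h_t,\overline x_t)-\sum_t\mathcal C^*_{\Phi_{\mathrm{def}},\mathcal H}(\overline x_t)$, and each summand is nonnegative pointwise. This identity comes from telescoping $R_{\Phi_{\mathrm{def}}}(T)=\sum_t\mathcal C_{\Phi_{\mathrm{def}}}(h_t,\overline x_t)-\mathcal R^*_{\Phi_{\mathrm{def}},\mathcal H}(\overline{\mathbf x})$ against the definition of $\mathcal M$. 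So no sign issue arises.

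\textbf{Step 2: the true-loss side.} We need $E_{\ell_{\mathrm{def}}}(T)\ge R_{\ell_{\mathrm{def}}}(T)$, which reduces to $\mathcal M_{\ell_{\mathrm{def}},\mathcal H}(\overline{\mathbf x})\ge 0$. This holds because $\inf_h\sum_t\mathcal C_{\ell_{\mathrm{def}}}(h,\overline x_t)\ge\sum_t\inf_h\mathcal C_{\ell_{\mathrm{def}}}(h,\overline x_t)$: the infimum of a sum dominates the sum of infima. Dividing by $T$ gives $R_{\ell_{\mathrm{def}}}(T)/T\le E_{\ell_{\mathrm{def}}}(T)/T$. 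Chaining this with the theorem's inequality and the substitution from Step 1 yields exactly the displayed bound.

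\textbf{Main obstacle.} The only delicate point is the regret convention. If $R_{\ell_{\mathrm{def}}}(T)$ is defined through the randomized $\mathbf q$, the theorem's $E_{\ell_{\mathrm{def}}}$ must use the same $R_{\ell_{\mathrm{def}}}$, which the statement abbreviates consistently. If the infimum over $\mathcal H$ is not attained, no issue arises, since only inequalities between infima are used.
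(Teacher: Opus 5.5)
Your proposal is correct and follows the same route the paper takes implicitly: set $E_{\Phi_{\mathrm{def}}}(T)=R_{\Phi_{\mathrm{def}}}(T)$ in Theorem~\ref{thm:gamma-bound} and drop the nonnegative true-loss gap $\mathcal M_{\ell_{\mathrm{def}},\mathcal H}(\overline{\mathbf x})\geq 0$, a step the paper also invokes in the proof of Theorem~\ref{thm:low-noise_assum}. Your domain checks, $\overline S_T\geq 1$ and nonnegativity of the argument of $\Gamma$, are valid.
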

In the limit, this bound recovers an online analogue of the standard Bayes-consistency notion defined in \citet{bartlett1}.

\section{REGRET BOUNDS FOR ONLINE L2D}
\label{section:regret_bound}

We can now derive explicit regret bounds from the H-Consistency Bound defined in the previous section.
Indeed, such bounds can be obtained directly by minimizing the right-hand side in Theorem~\ref{thm:gamma-bound}.

In this online section, \(\mathcal H_{\mathrm{all}}\) denotes all measurable score functions on \(\overline{\mathcal X}\times\overline{\mathcal Y}\).
Since the minimizability gap $\mathcal{M}_{\Phi_{\text{def}}, \mathcal{H}}(\overline{\mathbf{x}})$ is fixed by the comparator class and the environment sequence, minimizing the right-hand side effectively becomes an online optimization problem of minimizing the surrogate deferral regret $R_{\Phi_{\mathrm{def}}}(T)$ with bandit feedback. The resulting bound is especially tight when the best-in-class hypothesis coincides with the Bayes hypothesis, i.e., $\mathcal{R}_{\Phi_{\text{def}}, \mathcal{H}}^{*}(\overline{\mathbf{x}}) = \mathcal{R}_{\Phi_{\text{def}},\mathcal{H}_{\text{all}}}^{*}(\overline{\mathbf{x}})$, as Corollary~\ref{cor:consistency_bound} then implies a true deferral regret bound of $R_{\ell_{\mathrm{def}}}(T) = O\!\left(T\Gamma\!\left(\tfrac{R_{\Phi_{\mathrm{def}}}(T)}{T}\right)\right)$.

With a fixed expert set, the problem contains ordinary contextual-bandit instances. Consequently, under the usual richness assumptions on the comparator class, the classical $\sqrt T$ bandit scale is unavoidable \citep{auer2002nonstochastic}. Generic policy-class algorithms such as EXP4 attain this scale for finite classes but are computationally expensive when the class is large.


For zero-sum score classes that pointwise contain the canonical margin vectors,
the constrained hinge surrogate admits the linear calibration function $\Gamma(u)=u$.
Without the concentrated-score assumption, our efficient online algorithm achieves an expected $T^{2/3}$ regret rate. Under the concentrated-score condition stated below, a calibration-slack
term cancels the high-variance importance-weighting contribution and yields an
expected $\sqrt T$ true-deferral regret rate.

\subsection{Constrained OCO with constrained hinge loss \texorpdfstring{$\Phi_{\text{hinge}}^{\text{cstnd}}$}{Phi}}
\label{subsection:cstnd_hinge_loss}

When the surrogate is the constrained hinge loss $\Phi_{01}^A = \Phi_{\text{hinge}}^{\text{cstnd}}$ \citep{lee2004multicategory}, write $\widetilde\Phi_{\mathrm{hinge}}(u)=\max\{0,1-u\}$. The induced surrogate deferral loss is
\begin{equation*}
    \begin{aligned}
        & \Phi_{\text{def}}(h,\overline x,y) = \sum_{y'\in \overline{\mathcal{Y}}_A \setminus \{y\}}\mspace{-15mu} \widetilde{\Phi}_{\text{hinge}}(-h(\overline x,y'))
         \\
        &+ \sum_{n + j \in \overline{\mathcal{Y}}_A}\mspace{-10mu}(1-c_j(x,y))\mspace{-20mu}\sum_{y' \in \overline{\mathcal{Y}}_A \setminus \{n+j\}}\mspace{-20mu}\widetilde{\Phi}_{\text{hinge}}(-h(\overline x,y')),
    \end{aligned}
\end{equation*}
subject to the constraint $\sum_{y \in \overline{\mathcal{Y}}_A} h(\overline{x},y) = 0$. The resulting learning problem is therefore a constrained online convex optimization problem.

\begin{problem}[Constrained Online Convex Optimization]
\begin{equation*}
    \begin{aligned}
        & \mathrm{minimize } \quad R_{\Phi_{\mathrm{def}}}(T),\\
        & \mathrm{subject}\ \mathrm{to} \quad  \sum_{y \in \overline{\mathcal{Y}}_{A_t}} h_t(\overline{x}_t,y) = 0, \;\forall t \in [T].
    \end{aligned}
\end{equation*}
\end{problem}
Since \(\mathcal{H}\) is the set of linear hypotheses, write
$\widetilde W=(W,b)\in\mathbb R^{N\times(d+1)}$ and
$\widetilde x=(x,1)$. Then
$h(\overline x,y)=\langle\widetilde w_y,\widetilde x\rangle$.
When a matrix appears as a loss or risk argument below, it denotes its induced score function \(h_{\widetilde W}\).
Let $\rho=\sqrt{R^2+1}$, so $\|\widetilde x\|_2\leq\rho$, and define the
Frobenius ball
$\mathcal H_B=\{\widetilde W:\|\widetilde W\|_F\leq B\}$.
The discussion of a sufficient radius is deferred to Appendix~\ref{appendix:bound}.

For each expert set \(A\in\mathcal A\), define the matrix subspace
\(\mathcal K_A=\{\widetilde W\in\mathbb R^{N\times(d+1)}:
\sum_{a\in\overline{\mathcal Y}_A}\widetilde w_a=0\}\).
Write \(\widetilde W|_{\mathcal K_A}:=\Pi_{\mathcal K_A}(\widetilde W)\)
for its Euclidean projection onto this subspace.

Varying active sets impose different zero-sum constraints. Define the projected
Frobenius-ball class
\[
\begin{aligned}
\mathfrak H_B&=\{h_{\widetilde W}^{\mathrm{proj}}:\widetilde W\in\mathcal H_B\},\\
h_{\widetilde W}^{\mathrm{proj}}((x,A),a)
&=\left\langle(\Pi_{\mathcal K_A}\widetilde W)_a,(x,1)\right\rangle.
\end{aligned}
\]
We compare against a fixed matrix after the round-specific projection:
\[
\mathcal R^*_{\Phi_{\mathrm{def}},\mathfrak H_B}(\overline{\mathbf x})
=\inf_{\widetilde W\in\mathcal H_B}\sum_{t=1}^T
\mathcal C_{\Phi_{\mathrm{def}}}(\Pi_{\mathcal K_{A_t}}\widetilde W,\overline x_t).
\]
We distinguish surrogate and true regret:
\(R_{\Phi_{\mathrm{def}},\mathfrak H_B}(T)=
\mathcal R_{\Phi_{\mathrm{def}}}(\mathbf h,\overline{\mathbf x})-
\mathcal R^*_{\Phi_{\mathrm{def}},\mathfrak H_B}(\overline{\mathbf x})\), whereas
\(R_{\ell_{\mathrm{def}},\mathfrak H_B}(T)=
\mathcal R_{\ell_{\mathrm{def}}}(\mathbf q,\overline{\mathbf x})-
\mathcal R^*_{\ell_{\mathrm{def}},\mathfrak H_B}(\overline{\mathbf x})\).
Every calibration gap and minimizability gap in this section also uses
$\mathfrak H_B$. The relevant projection properties follow.

\begin{proposition} (Properties of \(\Pi_{\mathcal{K}_A}\))
    \label{prop:proj_properties}
    For any \(\widetilde W\in\mathbb R^{N\times(d+1)}\), its linear score function \(h_{\widetilde W}\), and any \(A\in\mathcal A\), the following hold:
    \begin{enumerate}
        \item \(\widetilde{W}|_{\mathcal{K}_A} = \widetilde{W} - \mathbf{1}_{\overline{\mathcal{Y}}_A}\mu^{\intercal} = (\widetilde{w}_{i}|_{\mathcal{K}_A})_{i = 1}^{N}\), where \(\mathbf1_{\overline{\mathcal Y}_A}\in\{0,1\}^N\) is the active-action indicator, \(\mu = \frac{1}{|\overline{\mathcal{Y}}_A|} \sum_{i \in \overline{\mathcal{Y}}_A} \widetilde{w}_{i}\), \(\widetilde{w}_{i}|_{\mathcal{K}_A} = \widetilde{w}_i - \mu\) if \(i \in \overline{\mathcal{Y}}_A\) and \(\widetilde{w}_{i}|_{\mathcal{K}_A} = \widetilde{w}_i\) otherwise.
        \item (Maxima preserving) For every \(\overline x=(x,A)\), \(h_{\widetilde W}(\overline x)=h_{\Pi_{\mathcal K_A}\widetilde W}(\overline x)\).
    \end{enumerate}
\end{proposition}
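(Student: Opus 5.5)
The plan is to prove both items directly from the definition of $\mathcal K_A$ as a linear subspace of $\mathbb R^{N\times(d+1)}$ with the Frobenius inner product.

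\textbf{Item 1.} I will characterize the orthogonal complement $\mathcal K_A^\perp$. Since $\mathcal K_A$ is the kernel of the linear map $\widetilde W\mapsto \sum_{a\in\overline{\mathcal Y}_A}\widetilde w_a=\mathbf 1_{\overline{\mathcal Y}_A}^\intercal\widetilde W$, its orthogonal complement is the range of the adjoint map $v\mapsto \mathbf 1_{\overline{\mathcal Y}_A}v^\intercal$. Hence $\mathcal K_A^\perp=\{\mathbf 1_{\overline{\mathcal Y}_A}v^\intercal: v\in\mathbb R^{d+1}\}$.

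It then suffices to find $\mu$ with $\widetilde W-\mathbf 1_{\overline{\mathcal Y}_A}\mu^\intercal\in\mathcal K_A$. Summing the active rows gives $\sum_{a\in\overline{\mathcal Y}_A}\widetilde w_a-|\overline{\mathcal Y}_A|\mu=0$, which forces $\mu=\frac{1}{|\overline{\mathcal Y}_A|}\sum_{i\in\overline{\mathcal Y}_A}\widetilde w_i$. The decomposition $\widetilde W=(\widetilde W-\mathbf 1\mu^\intercal)+\mathbf 1\mu^\intercal$ is then an orthogonal decomposition into a part in $\mathcal K_A$ and a part in $\mathcal K_A^\perp$. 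By uniqueness of the orthogonal projection onto a closed subspace, this identifies $\Pi_{\mathcal K_A}\widetilde W$. Reading the identity row by row gives the stated formula: active rows are shifted by $\mu$, and inactive rows are unchanged because the indicator vanishes there.

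\textbf{Item 2.} Fix $\overline x=(x,A)$ and write $\widetilde x=(x,1)$. By item 1, for every active action $a\in\overline{\mathcal Y}_A$,
\[
h_{\Pi_{\mathcal K_A}\widetilde W}(\overline x,a)=\langle \widetilde w_a-\mu,\widetilde x\rangle=h_{\widetilde W}(\overline x,a)-\langle\mu,\widetilde x\rangle .
\]
The shift $\langle\mu,\widetilde x\rangle$ is the same for all active actions. The prediction $h(\overline x)$ is an $\arg\max$ over $\overline{\mathcal Y}_A$ only, so subtracting a common constant leaves the maximizing set unchanged. Inactive scores are irrelevant to the prediction, so they do not matter even though they are unchanged.

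\textbf{Main obstacle.} The only real subtlety is tie-breaking. I would state that the same deterministic tie-breaking rule is used for both score functions. Because the ordering of the active scores is identical, the selected action then coincides.
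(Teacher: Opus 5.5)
Your proof is correct and takes the same route as the paper. The paper gives no separate proof and only remarks that $\Pi_{\mathcal K_A}$ subtracts the active-row mean $\mu$, so all active scores shift by the common constant $\langle\mu,\widetilde x\rangle$ and the $\arg\max$ over $\overline{\mathcal Y}_A$ is unchanged; your identification of $\mathcal K_A^\perp=\{\mathbf 1_{\overline{\mathcal Y}_A}v^\intercal: v\in\mathbb R^{d+1}\}$ and your tie-breaking remark fill in details the paper leaves implicit.
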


Proposition~\ref{prop:proj_properties} implies that projection onto
\(\mathcal K_A\) preserves all active score differences and hence the
predicted action. Therefore the true loss of a matrix $\widetilde W\in\mathcal H_B$
equals the true loss of its representative in $\mathfrak H_B$ on every round.
The true and surrogate comparators below are both taken over this explicitly
defined projected class.

\begin{algorithm}
\caption{OGD with Projected Gradients}
\label{algo:cstnd_hinge_loss}
\begin{algorithmic}[1]
\STATE \textbf{Input:} Learning rate $\eta_t > 0$, and exploration rate \(\gamma_t\)
\STATE \textbf{Initialize:} $\widetilde{W}_1 = \mathbf{0} \in \mathbb{R}^{N \times (d+1)}$
\FOR{$t = 1$ {\bfseries to} $T$}
    \STATE Obtain $\overline{x}_t = (x_t, A_t)$
    \STATE Project \(\widetilde{W}_t\) to feasible hypothesis set: \(\widetilde{W}_t|_{\mathcal{K}_{A_t}} = \Pi_{\mathcal{K}_{A_t}}\big(\widetilde{W}_t\big)\) 
    \STATE Let $y_t^* = \arg\max_{y\in\overline{\mathcal{Y}}_{A_t}} \langle \widetilde w_{t,y}|_{\mathcal K_{A_t}},\widetilde x_t\rangle$
    \STATE Set \(q_t = (1 - \gamma_t)\mathbf{e}_{y_t^*} + \frac{\gamma_t}{|\overline{\mathcal{Y}}_{A_t}|}\mathbf{1} \)
    \STATE Predict with label $y'_t \sim q_t$
    \STATE Obtain feedback; if $|\overline{\mathcal Y}_{A_t}|=2$, infer $y_t$ from the correctness bit and use the exact surrogate; otherwise form the importance-weighted surrogate $\widehat\Phi_t$
    \STATE Choose \(\widehat G_t\in\partial\widehat\Phi_t(\widetilde W_t|_{\mathcal K_{A_t}})\) and set \(\widehat G_t|_{\mathcal K_{A_t}}=\Pi_{\mathcal K_{A_t}}(\widehat G_t)\)
    \STATE Update \(\widetilde{W}_{t+1} = \Pi_{\mathcal H_B}\!\left(\widetilde{W}_t - \eta_t \widehat G_t|_{\mathcal{K}_{A_t}}\right)\)
\ENDFOR
\end{algorithmic}
\end{algorithm}
With dense parameters, scoring, active-row centering, gradient construction, and projection onto $\mathcal H_B$ each require $O(Nd)$ arithmetic per round, and the algorithm stores $O(Nd)$ parameters. These bounds use $N=n+n_e$; sparse feature representations can reduce the scoring and gradient costs in the usual way.

Let \(\widehat\Phi_t\) be the estimated loss at round \(t\). Under full feedback it equals \(\Phi_{\mathrm{def}}(\cdot,\overline x_t,y_t)\). The same exact loss is available when \(|\overline{\mathcal Y}_{A_t}|=2\), because the correctness bit then identifies \(y_t\). On all other bandit rounds, for \(\widetilde W\in\mathcal H_B\), abbreviate \(c_{t,j}:=c_j(x_t,y_t)\) and \(\widetilde\Phi_{t,a}(\widetilde W):=\widetilde\Phi_{01}(\widetilde W,\overline x_t,a)\), and set
\begin{equation*}
    \begin{aligned}
        \widehat{\Phi}_t(\widetilde W)
        &=v_{t,0}\widetilde\Phi_{t,y'_t}(\widetilde W)\\
        &\quad+\sum_{g_j\in A_t}
        v_{t,j}(1-c_{t,j})\widetilde\Phi_{t,n+j}(\widetilde W).
    \end{aligned}
\end{equation*}
When $\Phi_{01}^A = \Phi_{\text{hinge}}^{\text{cstnd}}$, let $y_t'$ denote the learner’s randomized prediction sampled from $q_t$, and define $v_{t,0} = \frac{\mathbf{1}\{y_t' \in [n]\}\mathbf{1}\{y_t' = y_t\}}{q_t(y_t')}$ and $v_{t,j} = \frac{\mathbf{1}\{y_t' = n+j\}}{q_t(n+j)}$. The learner’s exploration rate $\gamma_t$, introduced in Section~\ref{section:problem_formulation}, is zero in the full-information case. In the bandit case, the theorem without the concentrated-score assumption uses $\gamma_t=\min\{1/2,t^{-1/3}\}$, while the concentrated-score theorem uses the schedule specified in its statement.

Let \(\mathcal F_{t-1}\) be the history before round \(t\). The estimate is convex and conditionally unbiased: \(\mathbb E[\widehat\Phi_t(\widetilde W)\mid\mathcal F_{t-1},\overline x_t,y_t]=\Phi_{\mathrm{def}}(\widetilde W,\overline x_t,y_t)\). Let \(\widetilde W^*\in\mathcal H_B\) minimize the cumulative projected surrogate risk above. For any expert set \(A\), define \(\mu_A^*=|\overline{\mathcal Y}_A|^{-1}\sum_{i\in\overline{\mathcal Y}_A}\widetilde w_i^*\).

\begin{restatable}{lemma}{cstndperroundbound}
    \label{lemma:cstnd_per_round_bound}
    Let \(\widehat G_t\in\partial\widehat\Phi_t(
    \widetilde W_t|_{\mathcal K_{A_t}})\), and let
    \(r_{t,i}:=(\Pi_{\mathcal K_{A_t}}\widehat G_t)_i\). Then
    \begin{equation*}
        \begin{aligned}
        &\widehat{\Phi}_t(\widetilde W_t|_{\mathcal K_{A_t}})
        -\widehat{\Phi}_t(\widetilde W^*|_{\mathcal K_{A_t}})\\
        &\qquad\leq\sum_{i\in\overline{\mathcal Y}_{A_t}}
        \langle r_{t,i},\widetilde w_{t,i}-\widetilde w_i^*\rangle.
        \end{aligned}
    \end{equation*}
\end{restatable}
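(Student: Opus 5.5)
The argument is convexity of $\widehat\Phi_t$ followed by two linear-algebra facts about the projection $\Pi_{\mathcal K_{A_t}}$.

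\textbf{Step 1: the subgradient inequality.} The estimate $\widehat\Phi_t$ is a nonnegative combination of hinge terms. Each hinge term is convex in the scores $\langle\widetilde w_a,\widetilde x_t\rangle$, and those scores are linear in $\widetilde W$, so $\widehat\Phi_t$ is convex in $\widetilde W$. Since $\widehat G_t\in\partial\widehat\Phi_t(\widetilde W_t|_{\mathcal K_{A_t}})$, we get
\[
\widehat\Phi_t(\widetilde W_t|_{\mathcal K_{A_t}})-\widehat\Phi_t(\widetilde W^*|_{\mathcal K_{A_t}})
\le\langle \widehat G_t,\ \widetilde W_t|_{\mathcal K_{A_t}}-\widetilde W^*|_{\mathcal K_{A_t}}\rangle_F .
\]

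\textbf{Step 2: move the projection onto the gradient.} By Proposition~\ref{prop:proj_properties}(1), $\Pi_{\mathcal K_A}$ is linear, so the displacement $D:=\widetilde W_t|_{\mathcal K_{A_t}}-\widetilde W^*|_{\mathcal K_{A_t}}$ equals $(\widetilde W_t-\widetilde W^*)|_{\mathcal K_{A_t}}$ and lies in $\mathcal K_{A_t}$. The map $\Pi_{\mathcal K_A}$ is an orthogonal projection onto a linear subspace: it subtracts from the active rows their mean, i.e.\ removes the component along $\mathbf 1_{\overline{\mathcal Y}_A}\mu^\intercal$, and that subtracted part is orthogonal to $\mathcal K_A$. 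Hence it is self-adjoint and idempotent, and
\[
\langle \widehat G_t,D\rangle_F=\langle \Pi_{\mathcal K_{A_t}}\widehat G_t,\ \widetilde W_t-\widetilde W^*\rangle_F
=\sum_{i=1}^N\langle r_{t,i},\ \widetilde w_{t,i}-\widetilde w^*_i\rangle .
\]

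\textbf{Step 3: the main obstacle, dropping the inactive rows.} The sum above runs over all $N$ rows, while the lemma sums only over $i\in\overline{\mathcal Y}_{A_t}$. So I must show $r_{t,i}=0$ for every inactive $i$. For inactive $i$, Proposition~\ref{prop:proj_properties}(1) gives $(\Pi\widehat G_t)_i=(\widehat G_t)_i$, so it suffices that $\widehat G_t$ itself has zero inactive rows.

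This holds because every term of $\widehat\Phi_t$ is built from $\widetilde\Phi_{01}$ on the active set. By Definition~\ref{def:varying_01_surrogate}, it involves only scores $h(\overline x_t,y')$ with $y'\in\overline{\mathcal Y}_{A_t}$, and vanishes for labels outside it. Therefore $\widehat\Phi_t$ depends only on the active rows of $\widetilde W$.

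One must also make sure the chosen subgradient puts zero on the inactive coordinates. Since $\widehat\Phi_t$ is constant in those coordinates, its subdifferential is the product of the active-row subdifferential with $\{0\}$. So every $\widehat G_t\in\partial\widehat\Phi_t$ has zero inactive rows.

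With the inactive rows removed, the sum reduces to exactly the right-hand side of the lemma.
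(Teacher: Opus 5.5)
Your proof is correct and is essentially the paper's argument: the subgradient inequality at $\widetilde W_t|_{\mathcal K_{A_t}}$, then the structure of $\Pi_{\mathcal K_{A_t}}$, then the vanishing of the inactive rows of $\widehat G_t$. The paper checks the identity $\langle \widehat G_t,\Pi_{\mathcal K_{A_t}}(\widetilde W_t-\widetilde W^*)\rangle_F=\langle \Pi_{\mathcal K_{A_t}}\widehat G_t,\widetilde W_t-\widetilde W^*\rangle_F$ by expanding the row means $\mu_t$ and $\mu^*_{A_t}$ by hand, where you invoke self-adjointness of the orthogonal projection; your Step~3 also justifies the vanishing inactive rows, which the paper only asserts.
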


The proof is deferred to Appendix~\ref{appendix:cstnd_hinge_loss}. The input,
availability, cost, and conditional-distribution sequences are fixed in advance,
although they may vary arbitrarily with $t$.

\begin{restatable}{theorem}{theoremCstndBound} (Surrogate regret bound)
    \label{thm:cstnd_bound}
    Let $B>0$. Running Algorithm~\ref{algo:cstnd_hinge_loss} with
    \(\eta_t=B/(N^{3/2}\rho t^{2/3})\) and
    \(\gamma_t=\min\{1/2,t^{-1/3}\}\), the expected surrogate regret satisfies
    \begin{equation*}
        \begin{aligned}
            \mathbb E\!\left[R_{\Phi_{\mathrm{def}},\mathfrak H_B}(T)\right]
            =O\!\left(BN^{3/2}\rho T^{2/3}\right).
        \end{aligned}
    \end{equation*}
\end{restatable}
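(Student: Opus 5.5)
The plan is the standard online-gradient-descent analysis with importance-weighted losses. It is run on the Frobenius ball $\mathcal H_B$, and the per-round linearization comes from Lemma~\ref{lemma:cstnd_per_round_bound}.

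\textbf{Step 1: reduce to estimated losses.} Since $\widehat\Phi_t$ is conditionally unbiased given $(\mathcal F_{t-1},\overline x_t,y_t)$, and both $\widetilde W_t$ and $\widetilde W^*$ are $\mathcal F_{t-1}$-measurable (the sequences are oblivious), we have
\[
\mathbb E\big[R_{\Phi_{\mathrm{def}},\mathfrak H_B}(T)\big]=\mathbb E\sum_t\big[\widehat\Phi_t(\widetilde W_t|_{\mathcal K_{A_t}})-\widehat\Phi_t(\widetilde W^*|_{\mathcal K_{A_t}})\big].
\]
This also uses that the expectation over $y_t\sim p(\cdot\mid\overline x_t)$ of $\Phi_{\mathrm{def}}$ is $\mathcal C_{\Phi_{\mathrm{def}}}$.

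\textbf{Step 2: linearize and run the OGD potential argument.} Lemma~\ref{lemma:cstnd_per_round_bound} bounds each term by $\sum_{i\in\overline{\mathcal Y}_{A_t}}\langle r_{t,i},\widetilde w_{t,i}-\widetilde w^*_i\rangle$. By Proposition~\ref{prop:proj_properties}(1), the rows of $\Pi_{\mathcal K_{A_t}}\widehat G_t$ vanish off the active set, so this sum equals $\langle \widehat G_t|_{\mathcal K_{A_t}},\widetilde W_t-\widetilde W^*\rangle_F$. Because $\Pi_{\mathcal H_B}$ is nonexpansive and $\widetilde W^*\in\mathcal H_B$, the usual expansion of $\|\widetilde W_{t+1}-\widetilde W^*\|_F^2$ gives
\[
\sum_t\langle \widehat G_t|_{\mathcal K},\widetilde W_t-\widetilde W^*\rangle\le \frac{\|\widetilde W_1-\widetilde W^*\|_F^2}{2\eta_1}+\sum_t\frac{D^2}{2}\Big(\frac1{\eta_t}-\frac1{\eta_{t-1}}\Big)+\sum_t\frac{\eta_t}{2}\|\widehat G_t|_{\mathcal K}\|_F^2 .
\]
Here $D=2B$ is the diameter. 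With decreasing $\eta_t$, the first two terms telescope to at most $2B^2/\eta_T = 2BN^{3/2}\rho T^{2/3}$.

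\textbf{Step 3: bound the gradient second moment (the main obstacle).} Each hinge term $\widetilde\Phi_{\mathrm{hinge}}(-h(\overline x,y'))$ has a subgradient whose row $y'$ has norm at most $\|\widetilde x\|\le\rho$. The surrogate $\widetilde\Phi_{t,a}$ sums at most $N-1$ such terms with disjoint rows, so its subgradient has Frobenius norm at most $\sqrt N\rho$. Projection onto $\mathcal K_{A_t}$ does not increase this. In $\widehat\Phi_t$, only the term for the realized action $y'_t$ has a nonzero weight, namely $v_{t,0}$ or $v_{t,j}$, with $|1-c_{t,j}|\le1$. That weight is bounded by $1/q_t(y_t')$. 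Taking the conditional expectation over $y'_t\sim q_t$ gives
\[
\mathbb E\|\widehat G_t|_{\mathcal K}\|_F^2\le \sum_{a}q_t(a)\,\frac{N\rho^2}{q_t(a)^2}=N\rho^2\sum_a \frac1{q_t(a)}\le N\rho^2\cdot\frac{N^2}{\gamma_t}.
\]
This uses $q_t(a)\ge\gamma_t/|\overline{\mathcal Y}_{A_t}|\ge\gamma_t/N$. So the second moment is at most $N^3\rho^2/\gamma_t$. On full-information rounds, and on rounds with $|\overline{\mathcal Y}_{A_t}|=2$, the bound is $O(N\rho^2)$, which is smaller. Care is needed to confirm that a single sampled action activates only one surrogate term. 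The bound $1/q$ on the classifier weight $v_{t,0}$ holds even though $v_{t,0}$ also requires correctness.

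\textbf{Step 4: assemble.} The variance term is
\[
\sum_t\frac{\eta_t}{2}\cdot\frac{N^3\rho^2}{\gamma_t}=\frac{BN^{3/2}\rho}{2}\sum_t t^{-2/3}\max\{2,t^{1/3}\}.
\]
This is $O(BN^{3/2}\rho\sum_t t^{-1/3})=O(BN^{3/2}\rho T^{2/3})$. Adding the telescoped term from Step 2 gives $\mathbb E[R_{\Phi_{\mathrm{def}},\mathfrak H_B}(T)]=O(BN^{3/2}\rho T^{2/3})$.
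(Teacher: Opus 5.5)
Your proposal is correct and follows essentially the paper's argument: projected OGD on $\mathcal H_B$ with the projected subgradient $\Pi_{\mathcal K_{A_t}}\widehat G_t$ (your use of the per-round linearization lemma is equivalent to the paper's observation that this is a subgradient of $\widetilde W\mapsto\widehat\Phi_t(\Pi_{\mathcal K_{A_t}}\widetilde W)$), conditional unbiasedness, and an $N^3\rho^2/\gamma_t$ conditional second-moment bound. The only difference is in the second moment: you bound the importance weight crudely by $1/q_t(y'_t)$, whereas the paper averages over $y_t$ using $(1-c_j)^2\le 1-c_j$ to get the finer quantity $\sum_a s(\overline x_t,a)/q_t(a)$; both give the same bound for this theorem, but the paper's finer form is what it reuses in the concentrated-score analysis.
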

The proof applies projected OGD globally on $\mathcal H_B$ and controls the
conditional second moment of the importance-weighted projected gradient. It is
deferred to Appendix~\ref{appendix:theorem_4}.

Note that by Proposition~\ref{prop:proj_properties}, we have 
\begin{equation*}
    \begin{aligned}
    & \sum_{t \in [T]} \big(\ell_{\text{def}}\big(\widetilde{W}_t\big|_{\mathcal{K}_{A_t}}, \overline{x}_t, y_t\big)
    -\ell_{\text{def}}\big(\widetilde{W}^*\big|_{\mathcal{K}_{A_t}}, \overline{x}_t, y_t\big)\big)
    \\
    & \qquad = \sum_{t \in [T]} \big(\ell_{\text{def}}\big(\widetilde{W}_t, \overline{x}_t, y_t\big)
    -\ell_{\text{def}}\big(\widetilde{W}^*, \overline{x}_t, y_t\big)\big)
    \end{aligned}
\end{equation*}
Combining Corollary~\ref{cor:consistency_bound} and Theorem~\ref{thm:cstnd_bound}, we achieve the following true deferral regret bound.

\begin{restatable}{theorem}{theoremTrueRegretBound} (True Deferral Regret Bound)
    Apply Corollary~\ref{cor:consistency_bound} with
    $\mathcal H=\mathfrak H_B$, $\Gamma(u)=u$, $\epsilon=0$, and
    $\mathcal M_{\Phi_{\mathrm{def}},\mathfrak H_B}(\overline{\mathbf x})=0$.
    Use the parameters of
    Theorem~\ref{thm:cstnd_bound}. Then
    \begin{equation*}
        \begin{aligned}
            \mathbb E\!\left[R_{\ell_{\mathrm{def}},\mathfrak H_B}(T)\right]
            =O\!\left((BN^{3/2}\rho+1)T^{2/3}\right).
        \end{aligned}
    \end{equation*}
\end{restatable}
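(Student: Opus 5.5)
The plan is to chain Corollary~\ref{cor:consistency_bound} and Theorem~\ref{thm:cstnd_bound}. The one delicate point is the order of expectation and the transfer inequality. First, fix any realization of the learner's randomization, i.e.\ the internal draws $y'_1,\dots,y'_T$ and hence the iterates $\widetilde W_t$. Corollary~\ref{cor:consistency_bound} is a deterministic statement about an arbitrary sequence $h_t\in\mathfrak H_B$. Take $h_t=h^{\mathrm{proj}}_{\widetilde W_t}$, and set $\Gamma(u)=u$, $\epsilon=0$, and zero surrogate minimizability gap. Then, pathwise,
\[
R_{\ell_{\mathrm{def}},\mathfrak H_B}(T)\;\le\; T\,\overline S_T\cdot\frac{R_{\Phi_{\mathrm{def}},\mathfrak H_B}(T)}{T\,\overline S_T}+\sum_{t=1}^T\gamma_t
= R_{\Phi_{\mathrm{def}},\mathfrak H_B}(T)+\sum_{t=1}^T\gamma_t .
\]
With linear $\Gamma$ the score mass $\overline S_T$ cancels exactly. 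This removes any dependence on $|\overline{\mathcal Y}_{A_t}|$ from the transfer, and no concavity or Jensen step is needed to pass the expectation inside $\Gamma$. The regrets here are conditional-risk regrets $\mathcal C_L$, which already average over $y_t\sim p(\cdot\mid\overline x_t)$. The only remaining randomness is the learner's, so the sequence $\overline{\mathbf x}$ being fixed in advance is what makes applying the Corollary legitimate.

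Several consistency checks are needed here. The hypothesis $h_t$ used in $q_t$ by Algorithm~\ref{algo:cstnd_hinge_loss} is the argmax of the projected scores. By Proposition~\ref{prop:proj_properties}(2), this coincides with $h_{\widetilde W_t}(\overline x_t)$, so $q_t$ matches the form required in Theorem~\ref{thm:gamma-bound}. The true comparator over $\mathfrak H_B$ is likewise unchanged by projection, as recorded in the display preceding the theorem. The surrogate comparator is the projected one $\mathcal R^*_{\Phi_{\mathrm{def}},\mathfrak H_B}$, which is exactly the benchmark of Theorem~\ref{thm:cstnd_bound}. So both regrets refer to the same class, as the statement demands.

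Next, take expectations over the learner's randomization. The exploration schedule is deterministic, so the second term is
\[
\sum_{t\le T}\gamma_t\le\sum_{t\le T}t^{-1/3}\le \tfrac32 T^{2/3}.
\]
For the first term, Theorem~\ref{thm:cstnd_bound} gives $\mathbb E[R_{\Phi_{\mathrm{def}},\mathfrak H_B}(T)]=O(BN^{3/2}\rho T^{2/3})$. Summing the two yields $O((BN^{3/2}\rho+1)T^{2/3})$. The "$+1$" records the exploration cost, which does not scale with $B$.

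The main obstacle I anticipate is justifying that the hypotheses of Corollary~\ref{cor:consistency_bound} hold uniformly over the random iterates. The pointwise linear calibration for the constrained hinge loss must hold for every $h\in\mathfrak H_B$ and every induced $\overline s$. This relies on the projected Frobenius-ball class pointwise containing the canonical margin vectors, which the statement takes as an assumption rather than proving. I would also make sure the minimizability gap is defined relative to $\mathfrak H_B$ and is independent of the learner's draws. That holds because it depends only on $\overline{\mathbf x}$, $p$, and the class, so it can be set to zero before the expectation is taken. Finally, on rounds with $|\overline{\mathcal Y}_{A_t}|=2$ the exact surrogate is used; this affects only Theorem~\ref{thm:cstnd_bound} and not the transfer step.
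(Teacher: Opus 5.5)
Your proposal is correct and follows the paper's own one-line proof. You apply Corollary~\ref{cor:consistency_bound} pathwise with $\Gamma(u)=u$, so that $\overline S_T$ cancels, then take expectations and add the expected surrogate regret from Theorem~\ref{thm:cstnd_bound} to $\sum_{t}\gamma_t\le\tfrac32T^{2/3}$.

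One small bookkeeping fix: the iterates $\widetilde W_t$ depend on the realized labels $y_1,\dots,y_{t-1}$ (through the correctness and cost feedback) as well as on the draws $y'_t$. You should therefore fix a realization of the full history, not only the learner's draws. This does not affect the argument, since the Corollary holds for any fixed sequence $h_t\in\mathfrak H_B$.
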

\textit{Proof.} Since $\sum_{t=1}^T\gamma_t=O(T^{2/3})$, the result follows
from Corollary~\ref{cor:consistency_bound} and Theorem~\ref{thm:cstnd_bound}.

\textbf{Concentrated Active-Action Score Assumption.}
We make use of the following definition for any \(\overline{x} = (x,A) \in \overline{\mathcal{X}}\) and \(y \in \overline{\mathcal{Y}}\)
\[
s(\overline{x},y) =
\begin{cases}
p(\overline{x},y) & \mbox{if } y \in \mathcal{Y}, \\[6pt]
1 - \sum_{y' \in \mathcal{Y}} p(\overline{x},y')c_{j}(x,y') & \mbox{if } y=n+j.
\end{cases}
\]
Define $\overline s(\overline x,y)=s(\overline x,y)/S(\overline x)$,
where $S(\overline x)=\sum_{y\in\overline{\mathcal Y}_A}s(\overline x,y)$.
Thus $\overline s(\overline x,\cdot)$ is a distribution on the full active
action set. Let
\[
y_{\max}(\overline x)\in\arg\max_{y\in\overline{\mathcal Y}_A}
\overline s(\overline x,y).
\]

\begin{restatable}{theorem}{theoremLowNoise} \label{thm:low-noise_assum}
(Concentrated-Score Regret) Assume $T\geq16$, $B\geq N$, and apply
Corollary~\ref{cor:consistency_bound} to $\mathfrak H_B$ with
$\Gamma(u)=u$, $\epsilon=0$, and zero surrogate minimizability gap. Suppose
that, for every $t\in[T]$,
\[
\overline s\bigl(\overline x_t,y_{\max}(\overline x_t)\bigr)
\geq1-\frac1{\sqrt T}.
\]
Set $\kappa=BN^{3/2}\rho$, $\gamma_t=\min\{1/2,\kappa/\sqrt t\}$, and
$\eta_t=\gamma_t/(4N^3\rho^2)$. Then Algorithm~\ref{algo:cstnd_hinge_loss}
satisfies
\begin{equation*}
    \begin{aligned}
        \mathbb E\!\left[R_{\ell_{\mathrm{def}},\mathfrak H_B}(T)\right]
        =O\!\left(BN^{3/2}\rho\sqrt T+B^2N^3\rho^2\right).
    \end{aligned}
\end{equation*}
\end{restatable}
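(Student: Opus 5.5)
Our plan follows the template of the $T^{2/3}$ theorem, with one extra ingredient. The concentrated-score condition lets the calibration slack absorb the variance of the importance-weighted gradients. We start from Corollary~\ref{cor:consistency_bound} with $\Gamma(u)=u$ and $\epsilon=0$. Since the factor $\overline S_T$ cancels, this gives
\[
\mathbb E\bigl[R_{\ell_{\mathrm{def}},\mathfrak H_B}(T)\bigr]\le \mathbb E\bigl[R_{\Phi_{\mathrm{def}},\mathfrak H_B}(T)\bigr]+\sum_{t=1}^T\gamma_t .
\]
With $\gamma_t=\min\{1/2,\kappa/\sqrt t\}$ we get $\sum_t\gamma_t\le 2\kappa\sqrt T$, which matches the target. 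So it remains to bound the expected surrogate regret by $O(\kappa\sqrt T+\kappa^2)$ under the new schedule $\eta_t=\gamma_t/(4N^3\rho^2)$.

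\textbf{OGD decomposition.} We use Lemma~\ref{lemma:cstnd_per_round_bound} and the standard projected OGD telescoping on $\mathcal H_B$, as in the proof of Theorem~\ref{thm:cstnd_bound}. Using conditional unbiasedness of $\widehat\Phi_t$ (exact on rounds with two active actions), this gives
\[
\mathbb E[R_{\Phi_{\mathrm{def}}}]\le \frac{4B^2}{2\eta_T}+\sum_t\frac{\eta_t}{2}\,\mathbb E\bigl\|\Pi_{\mathcal K_{A_t}}\widehat G_t\bigr\|_F^2 .
\]
The first term is $O(B^2N^3\rho^2\sqrt T/\kappa)=O(\kappa\sqrt T)$ whenever $\gamma_T=\kappa/\sqrt T$. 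If instead $\gamma_T=1/2$, it is $O(B^2N^3\rho^2)=O(\kappa^2)$ because $B\ge N$, and this is the additive constant in the statement. For the second term, the hinge subgradients have entries bounded per active row by $O(\rho)$ times the importance weights. The conditional second moment is therefore $O(N^3\rho^2/\gamma_t)$ in the worst case, which accounts for the $N^3\rho^2$ scaling. Plugged in naively, $\eta_t\cdot N^3\rho^2/\gamma_t=O(1)$ per round, which gives only linear regret. This is the main obstacle.

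\textbf{Using concentration.} To beat the naive bound, we split the variance according to whether the sampled action equals $y_{\max}(\overline x_t)$. The large weight $1/q_t(a)\approx |\overline{\mathcal Y}_{A_t}|/\gamma_t$ arises only when $y'_t$ is an exploratory non-greedy action $a$. In that case $v_{t,0}$ requires $a=y_t$, and $v_{t,j}$ is weighted by $(1-c_{t,j})$. Taking expectation over $y_t$, the contribution of action $a$ to the second moment is proportional to $s(\overline x_t,a)/q_t(a)$. Summed over $a\ne y_{\max}$, this is at most $S(\overline x_t)(1-\overline s(y_{\max}))\,|\overline{\mathcal Y}_{A_t}|/\gamma_t\lesssim N^2/(\gamma_t\sqrt T)$, using the assumption $\overline s(y_{\max})\ge 1-1/\sqrt T$. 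The term for $a=y_{\max}$ has weight $O(1)$, except on rounds where the greedy action differs from $y_{\max}$.

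For those rounds we use the calibration slack. With $\Gamma(u)=u$, each round on which $y^*_t\ne y_{\max}$ incurs conditional surrogate excess $\Delta\mathcal C_{\Phi_{\mathrm{def}}}\gtrsim S(\overline x_t)(\overline s(y_{\max})-\overline s(y_t^*))\ge S(\overline x_t)(1-2/\sqrt T)$. We would therefore keep a fraction of this excess on the left-hand side of the OGD inequality rather than discarding it. The remaining variance on those rounds, $\eta_t\cdot O(N^3\rho^2/\gamma_t)=O(1)/4$, is then cancelled by the retained excess. This is why $\eta_t$ carries the specific constant $1/4$ and why $T\ge16$ is assumed, since it makes $1-2/\sqrt T\ge 1/2$.

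Summing the remaining terms, each is at most $\eta_t N^3\rho^2\cdot O(1+1/(\gamma_t\sqrt T))$. This totals $O(\sum_t\gamma_t)+O(T/\sqrt T)=O(\kappa\sqrt T)$, using $\kappa\ge1$ (which follows from $B\ge N\ge 2$ and $\rho\ge1$). Combining with the first display yields $O(\kappa\sqrt T+\kappa^2)$. The delicate step is the cancellation in the previous paragraph. It requires carrying the conditional excess risk through the OGD inequality while the second moments are bounded separately on the greedy-correct and greedy-wrong events. We would work out that bookkeeping first.
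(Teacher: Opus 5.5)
Your proposal is correct, but its key cancellation step takes a different route from the paper. The paper never passes through the surrogate regret. It keeps the pointwise decomposition $\Delta\mathcal C_{\ell_{\mathrm{def}}}(h_t,\overline x_t)=\Delta\mathcal C_{\Phi_{\mathrm{def}}}(h_t,\overline x_t)-\delta_t$ inside a master inequality for the true regret. It then proves a hinge-specific calibration-slack lemma: when $K_t\ge 3$ and the greedy action differs from $y_{\max}$, the zero-sum constraint forces the surrogate to over-charge by an \emph{additive} $\delta_t\ge(1-2/\sqrt T)(K_t-2)/2\ge 1/4$. This beats the variance term $\frac{\eta_t}{2}\mathbb E_t\|\widehat G_t^{\mathrm{proj}}\|_F^2\le 1/8$. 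Rounds with $K_t=2$, where no such slack exists, are covered by the exact-feedback gradient.

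You instead apply Corollary~\ref{cor:consistency_bound} up front and use only linear calibration plus concentration. On greedy-wrong rounds, $\Delta\mathcal C_{\Phi_{\mathrm{def}}}\ge\Delta\mathcal C_{\ell_{\mathrm{def}}}=s_{y_{\max}}-s_{y_t^*}\ge S(\overline x_t)(1-2/\sqrt T)\ge 1/2$. This step, rather than the $2B^2/\eta_T$ term (which equals $16\kappa^2$ regardless), is where $B\ge N$ is really needed, because it makes $y_{\max}$ attainable in $\mathfrak H_B$. The $1/8$ variance is therefore at most a quarter of that round's nonnegative surrogate excess. Under zero minimizability gap the OGD left-hand side is exactly $\sum_t\Delta\mathcal C_{\Phi_{\mathrm{def}}}(h_t,\overline x_t)$, so you can absorb the variance and obtain $\frac34\,\mathbb E R_{\Phi_{\mathrm{def}},\mathfrak H_B}(T)\le 2B^2/\eta_T+(\text{greedy-correct variance})$. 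What you call calibration slack is thus just the calibration inequality itself, and that suffices.

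Your self-bounding route is more elementary and more general. It needs no combinatorics of the hinge scores and no $K_t\ge 3$ case split, and it tolerates a larger constant in the second-moment bound (anything below $1/2$ per wrong round). It also bounds the surrogate regret itself by $O(\kappa\sqrt T+\kappa^2)$. The paper's route bounds the true regret directly without the $4/3$ inflation. It also isolates a structural fact of independent interest: the constrained hinge surrogate strictly over-charges mistaken greedy actions under concentrated scores.
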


The proof combines a calibration-slack inequality with a refined conditional
second-moment bound. Details are deferred to Appendix~\ref{appendix:low-noise_assum}.
This concentrated-score condition is a theoretical low-noise regime and is not
asserted for the experimental streams in Section~\ref{section:experiment}.

\begin{remark}
    Further analysis of the minimizability gap is conducted in Appendix~\ref{appendix:minimizability_gap}.
\end{remark}

\section{EXPERIMENTS}
\label{section:experiment}
Expert systems in deployment rarely operate under fixed conditions. Human and automated experts differ not only in accuracy but also in when they can be consulted, and both factors fluctuate over time due to workload, fatigue, or shifting domains. To examine how an online Learning-to-Defer algorithm adapts under such nonstationarity, we vary two fundamental axes of uncertainty: expertise and availability. This yields three regimes of increasing complexity—ranging from fully static to fully dynamic—allowing us to isolate the effect of each source of variability.

\subsection{Overall Setting}

Our experiments study Algorithm~\ref{algo:cstnd_hinge_loss} under three expert settings: fixed expertise and availability, drifting availability, and jointly drifting availability and expertise. Full definitions are in Appendix~\ref{appendix:exp_settings}. For the synthetic and Reuters4 experiments, each expert uses normalized cost \(c_j(x,y)=(\mathbf{1}\{g_j(x)\neq y\}+0.1)/1.1\); CIFAR10H uses the separately stated \(0.05\) consultation cost. Each learning curve reports the across-run mean and standard deviation over five independent runs.

\begin{figure*}[t]
    \centering
    \captionsetup{font=footnotesize,skip=2pt}
    \captionsetup[subfigure]{font=footnotesize,skip=1pt}
    \begin{subfigure}[t]{0.32\textwidth}
        \centering
        \includegraphics[width=\linewidth,height=3.25cm,keepaspectratio]{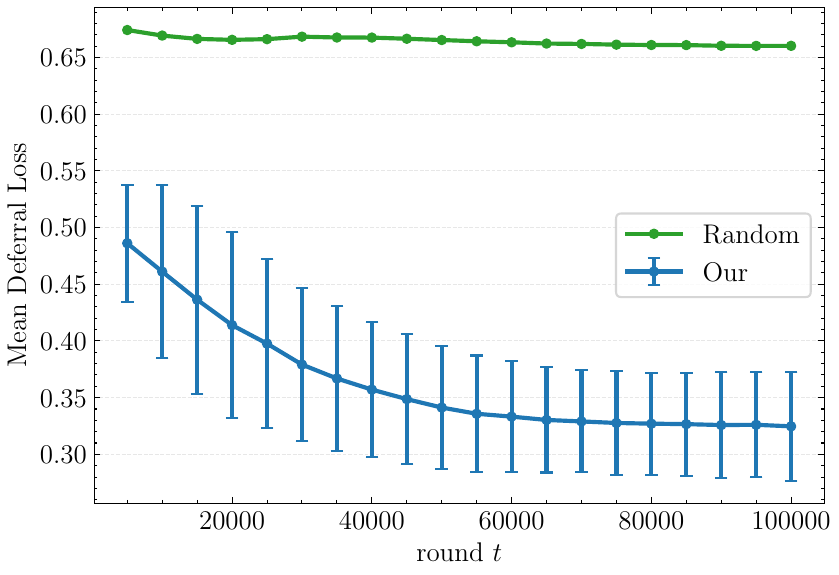}
        \caption{Normalized deferral loss.}
        \label{fig:real_exp3_mean_def_loss_main}
    \end{subfigure}\hfill
    \begin{subfigure}[t]{0.32\textwidth}
        \centering
        \includegraphics[width=\linewidth,height=3.25cm,keepaspectratio]{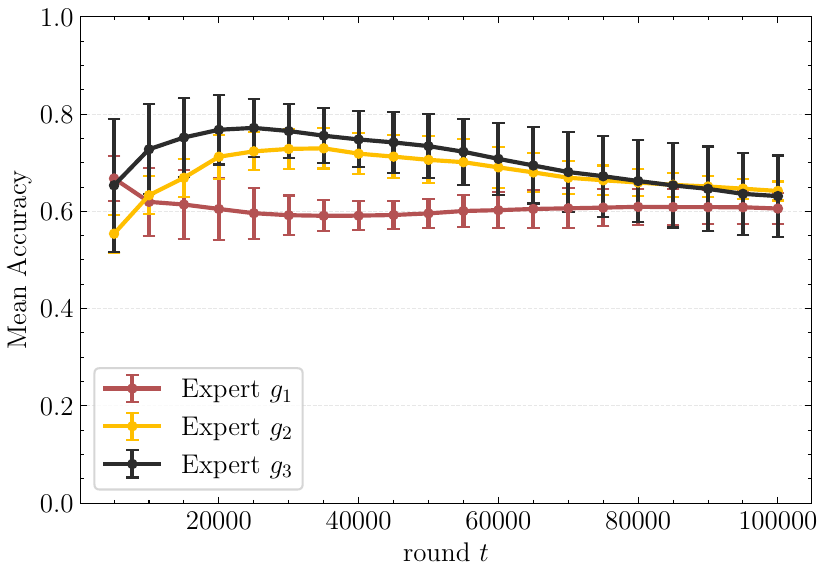}
        \caption{Accuracy of queried experts.}
        \label{fig:real_exp3_expert_acc_queried_main}
    \end{subfigure}\hfill
    \begin{subfigure}[t]{0.32\textwidth}
        \centering
        \includegraphics[width=\linewidth,height=3.25cm,keepaspectratio]{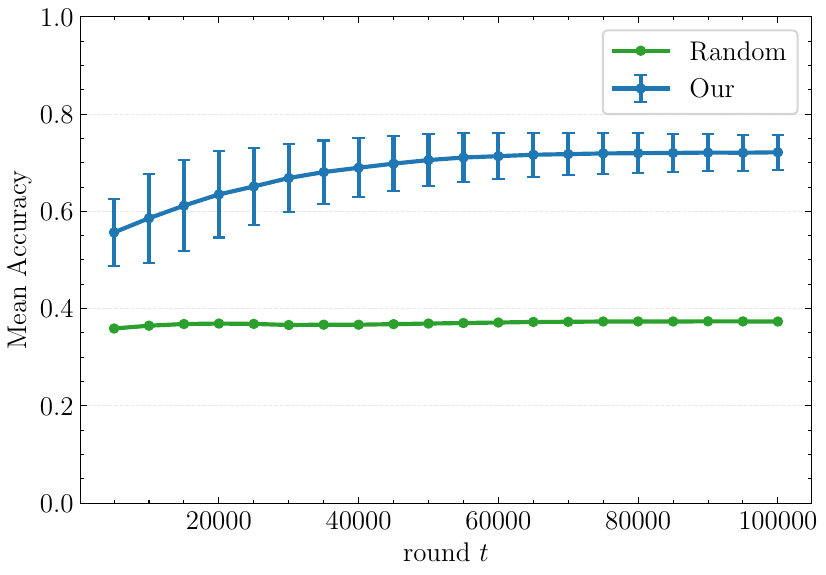}
        \caption{Average accuracy.}
        \label{fig:real_exp3_mean_acc_main}
    \end{subfigure}
    \caption{Reuters4 under drifting availability and expertise. Whiskers denote standard deviations over five runs; regional expertise and availability are in Appendix~\ref{appendix:real_world_dataset}.}
    \label{fig:real_exp3_main}
\end{figure*}

The third setting allows an expert to drift between $100\%$ accuracy and random prediction. We report this most challenging Reuters4 setting in the main paper; additional Reuters4, synthetic, and CIFAR10H experiments appear in Appendix~\ref{appendix:experiments}.

\subsection{Real-world Dataset: Reuters4}

For real-world datasets, we make use of the \textsc{Reuters4} dataset, commonly used for bandit classification experiments \citep{kakade2008efficient, crammer2013multiclass, beygelzimer2017efficient}. This dataset is generated from the \textsc{RCV1} dataset \citep{lewis2004rcv1}, extracting 685,071 examples that have exactly one label from the set \textsc{\{CCAT, ECAT, GCAT, MCAT\}}. It contains 47,236 features and 4 classes, denoted \(\{1,2,3,4\}\). For this experiment, we use the first 100,000 examples.

We generate 3 experts with overlapping knowledge regions. At initialization \(t = 0\), expert \(g_j\) is knowledgeable on labels \(\{j, j+1\}, \; j =1,2,3\). That is, expert \(g_j\) predicts correctly on inputs from classes \(\{j, j+1\}\) while predicting uniformly at random on other classes. These regions remain fixed for the first two settings, and evolve in the final setting.

For the experiments, we replace the scalar OGD step size in Algorithm~\ref{algo:cstnd_hinge_loss} by the AdaGrad update \citep{duchi2011adaptive}, with base learning rate \(0.1\), while retaining the active-set gradient projection and the final projection onto the Frobenius ball. We use exploration rate \(\gamma_t = \min\{\frac{1}{2}, \frac{10}{\sqrt{t}}\}\). The theoretical guarantees analyze the stated OGD schedules; AdaGrad is an empirical variant.

\paragraph{Results under Drifting Availability and Drifting Expertise.} 
Figure~\ref{fig:real_exp3_main} summarizes the loss and accuracy results, while Figure~\ref{fig:real_exp3_def_ratio_main} reports expert-specific deferral frequencies. The changing regional expert accuracies are shown separately in Appendix Figure~\ref{fig:real_exp3_expert_acc_by_region}.

The normalized deferral loss lies in $[0,1]$, placing consultation outcomes on the same scale as the classifier's zero-one loss. Queried-expert accuracy is conditional on rounds assigned to each expert and therefore measures routing quality rather than unconditional expert quality; the corresponding unconditional accuracies and availability trajectories are reported in Appendix~\ref{appendix:real_world_dataset}.

Appendix Figure~\ref{fig:real_exp3_expert_acc_by_region} shows \(g_1\) shifting from classes \(\{1,2\}\) to \(\{3,4\}\), \(g_2\) from \(\{2,3\}\) to \(\{1,4\}\), and \(g_3\) from \(\{3,4\}\) to \(\{1,2\}\). Figures~\ref{fig:real_exp3_main} and~\ref{fig:real_exp3_def_ratio_main} show high queried-expert accuracy, non-uniform routing, stable normalized loss, and increasing task accuracy over the drift interval.

\begin{figure}[H]
    \centering
    \captionsetup{font=footnotesize,skip=2pt}
    \includegraphics[width=1\columnwidth,height=3.1cm,keepaspectratio]{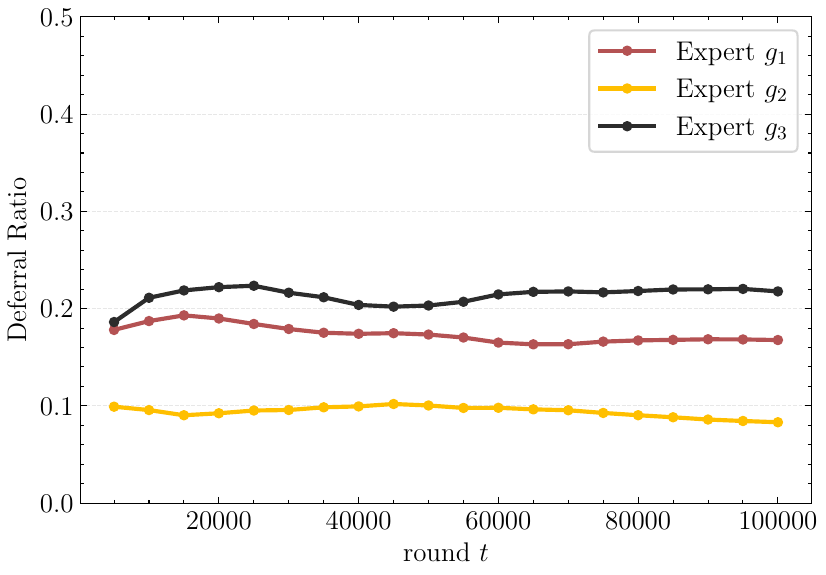}
    \caption{Expert-specific deferral ratios on Reuters4 under drifting availability and expertise.}
    \label{fig:real_exp3_def_ratio_main}
\end{figure}

Remaining Reuters4, synthetic, and CIFAR10H experiments are reported in Appendix~\ref{appendix:experiments}. They include the first two availability/expertise settings, controlled synthetic data with an explicit optimal routing policy, and a WideResNet experiment using human annotations. On CIFAR10H, our learned policy obtains lower normalized deferral loss and higher accuracy than the confidence-based baseline described in Appendix~\ref{appendix:cifar10h_dataset}.

\section{CONCLUSION}
We introduced multiclass \emph{Online Learning-to-Defer} with queried-action bandit feedback and dynamically varying experts. Combining an online $\mathcal H$-consistency transfer with projected online convex optimization, we prove expected true-deferral regret $O((BN^{3/2}\rho+1)T^{2/3})$ under linear calibration and zero surrogate minimizability gap for the projected comparator class, improving to $O(BN^{3/2}\rho\sqrt T+B^2N^3\rho^2)$ under concentrated scores. The oblivious-sequence protocol permits time-varying inputs, conditional outcomes, expert costs, and availability. Experiments show selective routing under drifting availability and expertise.

\clearpage
\section*{Acknowledgments}
Yannis Montreuil and Maxime Meyer are supported by the National Research Foundation, Singapore under its AI Singapore Programme (AISG Award No: AISG2-PhD-2023-01-041-J and AISG Award No: AISG3-PhD-2026-01-068T). Yannis Montreuil is also supported by A*STAR, and is part of the programme DesCartes which is supported by the National Research Foundation, Prime Minister’s Office, Singapore under its Campus for Research Excellence and Technological Enterprise (CREATE) programme. 

\bibliographystyle{plainnat}
\bibliography{biblio1.bib}
\section*{Checklist}

\begin{enumerate}

  \item For all models and algorithms presented, check if you include:
  \begin{enumerate}
    \item A clear description of the mathematical setting, assumptions, algorithm, and/or model. [Yes]
    The mathematical setting and assumptions are described in Sections~\ref{section:preliminaries} and~\ref{section:extending_to_online}. Our main algorithm is clearly described in Algorithm~\ref{algo:cstnd_hinge_loss}.
    \item An analysis of the properties and complexity (time, space, sample size) of any algorithm. [Yes]
    It is provided in Appendix.
    \item (Optional) Anonymized source code, with specification of all dependencies, including external libraries. [No]
  \end{enumerate}

  \item For any theoretical claim, check if you include:
  \begin{enumerate}
    \item Statements of the full set of assumptions of all theoretical results. [Yes]
    \item Complete proofs of all theoretical results. [Yes]
    \item Clear explanations of any assumptions. [Yes]     
  \end{enumerate}

  \item For all figures and tables that present empirical results, check if you include:
  \begin{enumerate}
    \item The code, data, and instructions needed to reproduce the main experimental results (either in the supplemental material or as a URL). [No]
    \item All the training details (e.g., data splits, hyperparameters, how they were chosen). [No]
    \item A clear definition of the specific measure or statistics and error bars (e.g., with respect to the random seed after running experiments multiple times). [Yes]
    \item A description of the computing infrastructure used. (e.g., type of GPUs, internal cluster, or cloud provider). [No]
  \end{enumerate}

  \item If you are using existing assets (e.g., code, data, models) or curating/releasing new assets, check if you include:
  \begin{enumerate}
    \item Citations of the creator If your work uses existing assets. [Yes]
    \item The license information of the assets, if applicable. [No]
    \item New assets either in the supplemental material or as a URL, if applicable. [Not Applicable]
    \item Information about consent from data providers/curators. [No]
    \item Discussion of sensitive content if applicable, e.g., personally identifiable information or offensive content. [Not Applicable]
  \end{enumerate}

  \item If you used crowdsourcing or conducted research with human subjects, check if you include:
  \begin{enumerate}
    \item The full text of instructions given to participants and screenshots. [Not Applicable]
    \item Descriptions of potential participant risks, with links to Institutional Review Board (IRB) approvals if applicable. [Not Applicable]
    \item The estimated hourly wage paid to participants and the total amount spent on participant compensation. [Not Applicable]
  \end{enumerate}

\end{enumerate}

\clearpage
\appendix
\thispagestyle{empty}

\onecolumn

\section*{NOTATION TABLE}

\begin{longtable}{@{}>{$}p{0.23\textwidth}<{$} p{0.69\textwidth}@{}}

\notationgroup{Basic Sets and Indices}

[n] & Set $\{1,\ldots,n\}$. \\
T & Time horizon / total number of online rounds. \\
t & Round index, where $t\in[T]$. \\
n & Number of original classes. \\
n_e & Number of experts. \\
N & Size of the full augmented label space, $N=n+n_e$. \\
d & Input dimension. \\
R & Radius bound for the input space. \\
B & Frobenius-norm bound on the linear hypothesis class. \\[0.4em]

\notationgroup{Inputs, Labels, Experts, and Availability}

\mathcal{X} & Original input space, $\mathcal{X}=\{x\in\mathbb{R}^{d}:\|x\|_2\leq R\}$. \\
x_t & Original input at round $t$. \\
\mathcal{Y} & Original label space, $\mathcal{Y}=[n]$. \\
y_t & True label at round $t$. \\
y'_t & Learner's sampled prediction at round $t$. \\
\mathcal{G} & Pool of experts, $\mathcal{G}=\{g_1,\ldots,g_{n_e}\}$. \\
g_j & The $j$-th expert. \\
A_t & Set of available experts at round $t$. \\
\mathcal{A} & Collection of feasible expert subsets, $\mathcal{A}\subseteq\mathcal{P}(\mathcal{G})$. \\
\overline{\mathcal{X}} & Augmented input space, $\overline{\mathcal{X}}=\mathcal{X}\times\mathcal{A}$. \\
\overline{x}_t & Augmented input at round $t$, $\overline{x}_t=(x_t,A_t)$. \\
\overline{\mathbf{x}} & Sequence of augmented inputs, $(\overline{x}_1,\ldots,\overline{x}_T)$. \\
\overline{\mathcal{Y}} & Full augmented label space, $\overline{\mathcal{Y}}=\mathcal{Y}\cup\{n+j:g_j\in\mathcal{G}\}$. \\
\overline{\mathcal{Y}}_A & Available augmented label set, $[n]\cup\{n+j:g_j\in A\}$; use $A=A_t$ at round $t$. \\
n+j & Deferral action corresponding to expert $g_j$. \\[0.4em]

\notationgroup{Distributions and Randomization}

\Delta_{\mathcal{S}} & Probability simplex over a finite set $\mathcal{S}$. \\
P & Joint data distribution in the batch setting. \\
\mathcal{D} & Set of conditional label distributions indexed by augmented inputs. \\
p(\cdot\mid \overline{x}_t) & Conditional distribution of the true label given $\overline{x}_t$. \\
p(\overline{x},y) & Shorthand for $\mathbb{P}(Y=y\mid \overline{X}=\overline{x})$. \\
q_t & Learner's randomized prediction distribution at round $t$. \\
\mathbf{q} & Sequence of randomized prediction distributions. \\
\gamma_t & Exploration rate at round $t$. \\[0.4em]

\notationgroup{Hypotheses and Linear Predictors}

h & Hypothesis / score function. \\
\mathcal{H} & Hypothesis class. \\
\mathcal{H}_{\mathrm{all}} & All measurable scores on the current section's stated domain. \\
\mathcal H_{\mathrm{def}}^{\mathrm{batch}} & Batch deferral score class on $\mathcal X\times\overline{\mathcal Y}$. \\
h_t & Hypothesis used by the learner at round $t$. \\
\mathbf{h} & Sequence of hypotheses $(h_1,\ldots,h_T)$. \\
h^* & Best-in-class comparator hypothesis in hindsight. \\
h(\overline{x}_t,y) & Score assigned to label $y$ on augmented input $\overline{x}_t$. \\
h(\overline{x}_t) & Prediction induced by $h$: $\arg\max_{y\in\overline{\mathcal{Y}}_{A_t}} h(\overline{x}_t,y)$. \\
W & Weight matrix in $\mathbb R^{N\times d}$, excluding bias. \\
w_y & Weight vector corresponding to label/action $y$. \\
b_y & Bias corresponding to label/action $y$. \\
\widetilde{W} & Augmented weight matrix including bias, $\widetilde{W}=(W,b)$. \\
\widetilde{W}_t & Augmented weight matrix at round $t$. \\
\widetilde{W}^* & Best-in-class augmented weight matrix. \\
\widetilde{w}_y & Row of $\widetilde{W}$ corresponding to label/action $y$. \\
\widetilde{x} & Augmented input $(x,1)$. \\
\rho & Uniform augmented-input bound, $\rho=\sqrt{R^2+1}$. \\
\mathcal H_B & Frobenius ball $\{\widetilde W:\|\widetilde W\|_F\le B\}$. \\
\mathfrak H_B & Roundwise projected linear class induced by $\mathcal H_B$. \\
\|\widetilde W\|_F & Frobenius norm of the augmented matrix. \\[0.4em]

\notationgroup{Expert Costs}

c_j(x,y) & Cost of deferring to expert $g_j$ on example $(x,y)$. \\
c_j(x_t,y_t) & Cost of expert $g_j$ at round $t$. \\
\underline{c}_j,\overline{c}_j & Lower and upper bounds on the cost of expert $g_j$. \\
\alpha_j & Weight on the misclassification cost of expert $g_j$. \\
\beta_j & Query cost of expert $g_j$. \\
Q_j & Expert-specific cost-rescaling factor, $Q_j=\max\{1,\alpha_j+\beta_j\}$. \\[0.4em]

\notationgroup{Losses and Surrogates}

\ell_{01} & Multiclass zero-one classification loss. \\
\ell_{\mathrm{def}}^{\mathrm{batch}} & Batch true deferral loss. \\
\ell_{\mathrm{def}} & Online true deferral loss. \\
\ell_{\mathrm{def}}(h,\overline{x},y) & True deferral loss of hypothesis $h$ on $(\overline{x},y)$. \\
\ell_{\mathrm{def}}(q_t,\overline{x}_t,y_t) & Expected true deferral loss when prediction is sampled from $q_t$. \\
\Phi_{01} & Multiclass surrogate loss for $\ell_{01}$. \\
\Phi_{01}^{\mathrm{batch}} & Augmented multiclass surrogate for batch deferral. \\
\Phi_{01}^{A} & Multiclass surrogate loss defined on the available label set $\overline{\mathcal{Y}}_A$. \\
\widetilde{\Phi}_{01} & Varying multiclass surrogate loss that ignores unavailable experts. \\
\Phi_{\mathrm{def}}^{\mathrm{batch}} & Batch surrogate deferral loss. \\
\Phi_{\mathrm{def}} & Online surrogate deferral loss. \\
\Phi_{\mathrm{hinge}}^{\mathrm{cstnd}} & Constrained hinge surrogate used in the regret analysis. \\
\widetilde\Phi_{\mathrm{hinge}}(u) & Scalar hinge $\max\{0,1-u\}$. \\
\widehat{\Phi}_t & Estimated surrogate loss at round $t$ under bandit feedback. \\[0.4em]

\notationgroup{Risks, Regrets, and Consistency Quantities}

\mathcal{E}_{\ell}(h) & Batch risk of loss $\ell$ under hypothesis $h$. \\
\mathcal{E}_{\ell,\mathcal{H}}^* & Optimal batch risk over $\mathcal{H}$. \\
\mathcal{C}_{\ell_{\mathrm{def}}}(h,\overline{x}) & Conditional true deferral risk of $h$ at $\overline{x}$. \\
\mathcal{C}_{\Phi_{\mathrm{def}}}(h,\overline{x}) & Conditional surrogate deferral risk of $h$ at $\overline{x}$. \\
\mathcal C^*_{L,\mathcal H}(\overline x) & Optimal conditional risk $\inf_{h\in\mathcal H}\mathcal C_L(h,\overline x)$. \\
\Delta\mathcal{C}_{\ell_{01},\mathcal{H}}(h,\overline{x}) & Conditional excess zero-one risk relative to $\mathcal{H}$. \\
\Delta\mathcal C_{\widetilde\Phi_{01},\mathcal H}(h,\overline x) & Conditional excess varying-surrogate risk relative to $\mathcal H$. \\
\Delta\mathcal{C}_{\ell_{\mathrm{def}},\mathcal{H}}(h,\overline{x}) & Conditional excess true deferral risk relative to $\mathcal{H}$. \\
\Delta\mathcal{C}_{\Phi_{\mathrm{def}},\mathcal{H}}(h,\overline{x}) & Conditional excess surrogate deferral risk relative to $\mathcal{H}$. \\
\mathcal{R}_{\ell_{\mathrm{def}}}(\mathbf{h},\overline{\mathbf{x}}) & Cumulative true deferral risk of a hypothesis sequence. \\
\mathcal{R}_{\ell_{\mathrm{def}}}(\mathbf{q},\overline{\mathbf{x}}) & Cumulative true deferral risk of randomized predictions. \\
\mathcal{R}_{\ell_{\mathrm{def}},\mathcal{H}}^*(\overline{\mathbf{x}}) & Optimal cumulative true deferral risk in hindsight over $\mathcal{H}$. \\
\mathcal{R}_{\Phi_{\mathrm{def}}}(\mathbf{h},\overline{\mathbf{x}}) & Cumulative surrogate deferral risk. \\
\mathcal{R}_{\Phi_{\mathrm{def}},\mathcal{H}}^*(\overline{\mathbf{x}}) & Optimal cumulative surrogate deferral risk in hindsight over $\mathcal{H}$. \\
R_{\ell_{\mathrm{def}}}(T) & True deferral regret up to time $T$. \\
R_{\Phi_{\mathrm{def}}}(T) & Surrogate deferral regret up to time $T$. \\
\Gamma & Calibration function. \\
\epsilon & Tolerance term in the $\mathcal{H}$-consistency bound. \\
\mathcal{U}_{\ell,\mathcal{H}} & Batch minimizability gap for loss $\ell$ and class $\mathcal{H}$. \\
\mathcal{M}_{\ell_{\mathrm{def}},\mathcal{H}}(\overline{\mathbf{x}}) & Online minimizability gap for the true deferral loss. \\
\mathcal{M}_{\Phi_{\mathrm{def}},\mathcal{H}}(\overline{\mathbf{x}}) & Online minimizability gap for the surrogate deferral loss. \\[0.4em]

\notationgroup{Constrained OCO and Bandit Feedback}

\mathcal{K}_A & Matrix subspace with zero sum over active rows for expert set $A$. \\
\Pi_{\mathcal{K}_{A_t}} & Projection operator onto $\mathcal{K}_{A_t}$. \\
\widetilde W|_{\mathcal K_A} & Euclidean projection $\Pi_{\mathcal K_A}(\widetilde W)$. \\
\widetilde{W}_t|_{\mathcal{K}_{A_t}} & Projected learner weight matrix at round $t$. \\
\eta_t & Learning rate at round $t$. \\
\widehat G_t & Chosen subgradient of the estimated surrogate loss at round $t$. \\
\widehat G_t^{\mathrm{proj}} & Projection of $\widehat G_t$ onto the active zero-sum subspace. \\
r_{t,i} & Row $i$ of the active-subspace projection of $\widehat G_t$. \\
v_{t,0} & Importance-weighted feedback term for direct classification. \\
v_{t,j} & Importance-weighted feedback term for deferral to expert $g_j$. \\
y_t^* & Greedy action under the current projected hypothesis at round $t$. \\[0.4em]

\notationgroup{Concentrated-Score Analysis}

s(\overline{x},y) & Expected score of action $y$ at augmented input $\overline{x}$. \\
\overline{s}(\overline{x},y) & Normalized score, $\overline{s}(\overline{x},y)=s(\overline{x},y)/S(\overline{x})$. \\
S(\overline{x}) & Total score mass, $S(\overline{x})=\sum_{y\in\overline{\mathcal{Y}}_A}s(\overline{x},y)$. \\
y_{\max} & Maximizing active action in the concentrated-score analysis. \\
\kappa & Scale $BN^{3/2}\rho$ in the concentrated-score schedule. \\
\lambda & Step-size factor $(4N^3\rho^2)^{-1}$ in its proof. \\

\end{longtable}

\section{Discussion of Related Works}
\label{appendix:related_works}

\textbf{Contextual multi-arm bandits}. When the expert set is fixed, our problem reduces to a contextual multi-armed bandit with \(n+n_e\) arms \citep{lu2010contextual}. At each round, the learner observes a context \(\overline x_t\) and selects either a prediction arm \(y \in [n]\) or an expert arm \(j \in [n_e]\). The reward for each arm is \(p(\overline x,y)\) for \(y \in [n]\) and \(1 - \sum_{y' \in \mathcal{Y}}p(\overline x,y')c_j(x,y')\) for \(y=n+j\). This fits the standard contextual multi-armed bandit model in which the context may be adversarial while the arm rewards depend on that context. A related line of work is linear contextual bandits, where arm rewards are assumed to be linear functions of the context with fixed coefficients \citep{li2010, chu2011contextual}. In contrast, we make no such reward-distribution assumption. This places our setting closer to contextual bandits with a policy class, for which efficient algorithms are typically oracle-based \citep{dudik2011efficient, agarwal2014taming}. Our approach instead exploits the structure of a linear hypothesis class to obtain an efficient non-oracle solution. Contextual bandits with a policy class also motivate our benchmarks discussion in Appendix~\ref{appendix:benchmark}.

\textbf{Sleeping experts}. When \(A_t\) varies over time, our setting is related in spirit to multi-armed bandits with varying arms, including
sleeping experts, where only a subset of arms is available at each round \citep{kanade2014learning, kleinberg2010regret}. In our formulation, this corresponds exactly to the fact that only experts in \(A_t\) may be selected at time \(t\). However, the underlying formulation is different. In sleeping bandits, no contextual information is provided, and the rewards are chosen adversarially or drawn from a fixed distribution. In contrast, our costs are generated conditionally on the observed context \(\overline x_t\), which may be chosen adversarially.

This places our formulation closer to contextual multi-armed bandits, where the arm feedback depends on the context. Because of this contextual dependence, existing techniques for sleeping bandits do not directly extend to our setting. Nevertheless, the connection is conceptually useful, and we draw inspiration from this line of work when discussing appropriate regret benchmarks, as noted in Appendix~\ref{appendix:benchmark}.

\textbf{Learning with abstention}. Online learning with abstention \citep{cortes2018online, neu2020fast} extends online prediction by allowing the learner to abstain, but the abstain option is typically treated as sending the input to an external oracle or as incurring a fixed penalty. This does not reflect real human–AI systems, where deferral does not guarantee perfect resolution; in practice, the agent handling the abstained input may even perform worse than the classifier. In contrast, our formulation models deferral to multiple experts whose predictions may be imperfect and incur consultation costs. The learner must therefore decide when deferring is worthwhile, based on both its own performance and the varying quality of the available experts.

\section{Discussion of benchmarks}
\label{appendix:benchmark}

The objective in online learning is to minimize regret, the most general formulation of which is

\[
R(T) = \mathcal{R}_{\ell_{\mathrm{def}}}(\mathbf{q}, \overline{\mathbf{x}}) - \mathcal{R}_{\ell_{\mathrm{def}}}(\mathbf{h}^*, \overline{\mathbf{x}}).
\]

Here \(\mathbf{q} = (q_1, q_2, \dots, q_T)\) is the sequence of learner prediction distributions defined in Section~\ref{section:problem_formulation}, and \(\mathbf{h}^* = (h_1^*, h_2^*, \dots, h_T^*) \in \mathcal{H}^{T}\) is the \textit{best} sequence of hypotheses. How \(\mathbf{h}^*\) is defined determines the difficulty of the problem. We present and discuss some natural definitions below.

\textbf{No Restrictions}. Note that if no restriction is placed on \(h^*_t\), then the regret is always \(R(T) = \Omega(T)\) for any online algorithm. To see this, consider the setting where in each round \(t \in [T]\), the adversary chooses \(y_t\) uniformly at random. The optimal sequence of hypotheses is \(h^*_t\) such that \(h_t^*(\overline{x}_t) = y_t\). Assuming all experts are wrong, the learner incurs an expected regret of at least \((n-1)/n\) each round while the comparator \(h_t^*\) incurs zero loss.

\textbf{Availability-dependent policy comparators}. A stronger benchmark may
select one hypothesis $h_A\in\mathcal H$ for each availability set $A$ and compare
the learner with $h_{A_t}$ on round $t$. This comparator class can contain up to
$2^{n_e}$ availability-indexed components and is therefore substantially richer
than a single fixed hypothesis. Its attainable regret depends on which
availability patterns occur and on the joint capacity of the corresponding
policy class; the number of experts alone does not determine a universal lower
bound.

Our main results use the standard fixed best-in-class comparator. This benchmark
still permits arbitrary time variation in the observed availability sets while
keeping the comparator statistically meaningful and the regret guarantees
independent of the number of distinct availability patterns.

\section{Discussion of bound B on the hypothesis set}
\label{appendix:bound}

In this section, we present a suitable choice for bound B. We make use of the characteristic of the minimizability gap for the constrained hinge loss. From Theorem~\ref{thm:minimizability_gap}, we have shown that the minimal conditional surrogate \(\Phi_{\text{def}}\)-risk can be obtained by a hypothesis of the form

\begin{equation} \label{eq:minimal_hypo}
h(\overline{x}, y) = \begin{cases}
    |\overline{\mathcal{Y}}_A| - 1 & \text{if } y = y_{\max}, \\
    -1 & \text{otherwise}.
\end{cases}
\end{equation}

It suffices to show that the pointwise minimizer in \eqref{eq:minimal_hypo} can be represented inside a common bounded parameter set. Fix $\overline x=(x,A)$, write $\widetilde x=(x,1)$, and let $v\in\mathbb R^{|\overline{\mathcal Y}_A|}$ be the target score vector whose $y_{\max}$ coordinate is $|\overline{\mathcal Y}_A|-1$ and whose other coordinates equal $-1$. Set
\[
\widetilde W|^{\overline{\mathcal Y}_A}
=\frac{v\widetilde x^\intercal}{\|\widetilde x\|_2^2},
\]
and set the unavailable rows to zero. Then
$\widetilde W|^{\overline{\mathcal Y}_A}\widetilde x=v$, the active row
sum is zero, and
\[
\|\widetilde W\|_F=\frac{\|v\|_2}{\|\widetilde x\|_2}
=\frac{\sqrt{|\overline{\mathcal Y}_A|(|\overline{\mathcal Y}_A|-1)}}{\|\widetilde x\|_2}
\leq N,
\]
because $\|\widetilde x\|_2\geq1$. Hence $B=N$ is sufficient for every pointwise conditional minimizer used in the minimizability-gap calculation. The minimizability gap separately records whether one fixed matrix attains these pointwise minima simultaneously across all rounds.

\textbf{Projection onto \(\mathcal{K}_A\) preserves the radius}. For every \(\widetilde W\in\mathcal H_B\), orthogonal projection is nonexpansive and fixes the origin, so \(\|\Pi_{\mathcal K_A}\widetilde W\|_F\leq\|\widetilde W\|_F\leq B\). Hence the projected matrix remains in \(\mathcal H_B\).

\section{Proof of \(\mathcal{H}\)-consistency bound for online learning}
\label{appendix:consistency_bound}

For any \(\overline{x} = (x,A) \in \overline{\mathcal{X}}\), we denote by \(H(\overline{x})\) the set of labels generated by hypotheses in \(\mathcal{H}\): 
\(H(\overline{x}) = \{h(\overline{x}) : h \in \mathcal{H}\} \cap \overline{\mathcal{Y}}_A\). With this, we can write the conditional risk and calibration gap of the deferral loss as follows.

\subsection{Calibration gap for true deferral loss}
\begin{lemma}
\label{lemma:cali_gap_L_def}
For any \(\overline{x} \in \overline{\mathcal{X}}\), the minimal conditional \(\ell_{\mathrm{def}}\)-risk and the calibration gap for \(\ell_{\mathrm{def}}\) can be expressed as follows:
\[
\mathcal{C}^{*}_{\ell_{\mathrm{def}}, \mathcal{H}}(\overline{x}) 
= 1 - \max_{y \in H(\overline{x})} s(\overline{x},y),
\]
\[
\Delta \mathcal{C}_{\ell_{\mathrm{def}}, \mathcal{H}}(h, \overline{x}) 
= \max_{y \in H(\overline{x})} s(\overline{x},y) - s(\overline{x},h(\overline{x})).
\]
\end{lemma}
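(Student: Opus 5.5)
The proof is a direct computation of the conditional risk of $\ell_{\mathrm{def}}$ at a fixed augmented input $\overline x=(x,A)$, followed by an infimum over $\mathcal H$. The infimum reduces to a maximum over the finite set $H(\overline x)$ of actions that hypotheses in $\mathcal H$ can realize.

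\textbf{Step 1: conditional risk of a fixed action.} Fix $h\in\mathcal H$ and write $a=h(\overline x)\in\overline{\mathcal Y}_A$. Split by cases according to Definition~\ref{true_deferral_loss}.
\begin{itemize}
\item If $a\in\mathcal Y$, then $\mathcal C_{\ell_{\mathrm{def}}}(h,\overline x)=\sum_{y}p(\overline x,y)\mathbf 1\{a\neq y\}=1-p(\overline x,a)=1-s(\overline x,a)$.
\item If $a=n+j$ with $g_j\in A$, then $\mathcal C_{\ell_{\mathrm{def}}}(h,\overline x)=\sum_y p(\overline x,y)c_j(x,y)=1-s(\overline x,n+j)$, by the definition of $s$.
\end{itemize}
In both cases
\[
\mathcal C_{\ell_{\mathrm{def}}}(h,\overline x)=1-s(\overline x,h(\overline x)).
\]
So the conditional risk depends on $h$ only through the action it selects.

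\textbf{Step 2: infimum over the class.} Taking the infimum over $h\in\mathcal H$ gives
\[
\mathcal C^*_{\ell_{\mathrm{def}},\mathcal H}(\overline x)=1-\sup_{h\in\mathcal H}s(\overline x,h(\overline x)).
\]
The quantity $s(\overline x,h(\overline x))$ ranges exactly over $\{s(\overline x,y):y\in H(\overline x)\}$, and $H(\overline x)$ is a finite subset of $\overline{\mathcal Y}_A$. It is nonempty because the argmax always lands in $\overline{\mathcal Y}_A$. Hence the supremum is a maximum attained at some $y\in H(\overline x)$, which gives the first identity.

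\textbf{Step 3: calibration gap.} The gap $\Delta\mathcal C_{\ell_{\mathrm{def}},\mathcal H}(h,\overline x)=\mathcal C_{\ell_{\mathrm{def}}}(h,\overline x)-\mathcal C^*_{\ell_{\mathrm{def}},\mathcal H}(\overline x)$ then follows by subtracting the formulas from Steps 1 and 2.

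\textbf{Main point to check.} The only real subtlety is the argmax tie-breaking convention. We must check that $h(\overline x)$ is a single well-defined element of $\overline{\mathcal Y}_A$ under the paper's fixed deterministic tie-breaking rule. Given that, $H(\overline x)$ is exactly the image set used in Step 2. Everything else is bookkeeping from the definition of $s$.
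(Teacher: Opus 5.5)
Your proposal is correct and follows essentially the same route as the paper: you compute the conditional risk as $1-s(\overline x,h(\overline x))$ by splitting into classifier and deferral actions, then minimize over the realizable action set $H(\overline x)$ and subtract to get the gap. Your remarks that the maximum is attained because $H(\overline x)$ is finite and nonempty, and that a fixed tie-breaking rule is needed, make explicit points the paper leaves implicit.
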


\begin{proof}
The conditional \(\ell_{\mathrm{def}}\)-risk of \(h\) can be expressed as follows:
\begin{equation*}
\begin{aligned}
\mathcal{C}_{\ell_{\mathrm{def}}}(h,\overline{x})
&= \mathbb{E}_{y \sim p(\cdot\mid\overline{x})}\!\big[\ell_{\mathrm{def}}(h,\overline{x},y)\big] \\[4pt]
&= \mathbb{E}_{y \sim p(\cdot\mid\overline{x})}\!\!\left[\mathbf{1}\{h(\overline{x}) \neq y\}\mathbf{1}\{h(\overline{x}) \in [n]\}
+ \sum_{n+j \in \overline{\mathcal{Y}}_A} c_j(x,y)\mathbf{1}\{h(\overline{x}) = n+j\}\right] \\[4pt]
&= \sum_{y \in \mathcal{Y}} s(\overline{x},y)\mathbf{1}\{h(\overline{x}) \neq y\}\mathbf{1}\{h(\overline{x}) \in [n]\}
+ \sum_{n+j \in \overline{\mathcal{Y}}_A} \bigl(1 - s(\overline{x},n+j)\bigr)\mathbf{1}\{h(\overline{x})=n+j\} \\[4pt]
&= \bigl(1 - s(\overline{x},h(\overline{x}))\bigr)\mathbf{1}\{h(\overline{x}) \in [n]\}
+ \sum_{n+j \in \overline{\mathcal{Y}}_A} \bigl(1 - s(\overline{x},h(\overline{x}))\bigr)\mathbf{1}\{h(\overline{x})=n+j\} \\[4pt]
&= 1 - s(\overline{x},h(\overline{x})).
\end{aligned}
\end{equation*}

Then, the minimal conditional \(\ell_{\mathrm{def}}\)-risk is given by
\begin{equation*}
\mathcal{C}^{*}_{\ell_{\mathrm{def}}}(\mathcal{H},\overline{x})
= 1 - \max_{y \in H(\overline{x})} s(\overline{x},y),
\end{equation*}

and the calibration gap can be expressed as follows:
\begin{equation*}
\Delta \mathcal{C}_{\ell_{\mathrm{def}},\mathcal{H}}(h,\overline{x})
= \mathcal{C}_{\ell_{\mathrm{def}}}(h,\overline{x})
- \mathcal{C}^{*}_{\ell_{\mathrm{def}}}(\mathcal{H},\overline{x})
= \max_{y \in H(\overline{x})} s(\overline{x},y) - s(\overline{x},h(\overline{x})),
\end{equation*}

which completes the proof.
\end{proof}
\subsection{Calibration gap for surrogate deferral risk}
\begin{lemma}
\label{lemma:cali_gap_L}
For any \(\overline{x} \in \overline{\mathcal{X}}\), the conditional surrogate \(\Phi_{\text{def}}\)-risk and calibration gap can be expressed as follows:
\[
\mathcal{C}_{\Phi_{\text{def}}}(h,\overline{x}) = \sum_{y \in \overline{\mathcal{Y}}_A} s(\overline{x},y) \widetilde\Phi_{01}(h,\overline{x},y),
\]
\[
\Delta \mathcal{C}_{\Phi_{\text{def}},\mathcal H}(h,\overline{x}) = \sum_{y \in \overline{\mathcal{Y}}_A} s(\overline{x},y) \widetilde\Phi_{01}(h,\overline{x},y) - \inf_{h' \in \mathcal{H}} \sum_{y \in \overline{\mathcal{Y}}_A} s(\overline{x},y)\widetilde\Phi_{01}(h',\overline{x},y).
\]
\end{lemma}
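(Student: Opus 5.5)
The plan is to compute the conditional surrogate risk directly from Definition~\ref{def:general_surrogate_def_loss}. We use linearity of expectation over $Y\sim p(\cdot\mid\overline x)$ and then regroup the terms by action $a\in\overline{\mathcal Y}_A$. This mirrors the regrouping in the proof of Lemma~\ref{lemma:cali_gap_L_def}.

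Fix $\overline x=(x,A)$ and $h\in\mathcal H$. Taking expectations in the surrogate gives
\[
\mathcal C_{\Phi_{\mathrm{def}}}(h,\overline x)=\sum_{y\in\mathcal Y}p(\overline x,y)\widetilde\Phi_{01}(h,\overline x,y)+\sum_{n+j\in\overline{\mathcal Y}_A}\Bigl(\sum_{y\in\mathcal Y}p(\overline x,y)\bigl(1-c_j(x,y)\bigr)\Bigr)\widetilde\Phi_{01}(h,\overline x,n+j).
\]
Two observations make the expansion legitimate. First, the term $\widetilde\Phi_{01}(h,\overline x,n+j)$ does not depend on $y$, so it can be pulled out of the inner expectation. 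Second, every $y\in\mathcal Y=[n]$ lies in $\overline{\mathcal Y}_A$, so by Definition~\ref{def:varying_01_surrogate} we have $\widetilde\Phi_{01}(h,\overline x,y)=\Phi_{01}^A(h,\overline x,y)$; no term is lost to the zero branch.

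The coefficient in the first sum is $p(\overline x,y)=s(\overline x,y)$ by definition of $s$. For the second sum, since $\sum_{y}p(\overline x,y)=1$, the coefficient equals
\[
\sum_{y}p(\overline x,y)\bigl(1-c_j(x,y)\bigr)=1-\sum_{y}p(\overline x,y)c_j(x,y)=s(\overline x,n+j).
\]
Combining the two sums over $\mathcal Y$ and over the active experts yields a single sum over $\overline{\mathcal Y}_A$. This is the claimed identity $\mathcal C_{\Phi_{\mathrm{def}}}(h,\overline x)=\sum_{a\in\overline{\mathcal Y}_A}s(\overline x,a)\widetilde\Phi_{01}(h,\overline x,a)$.

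For the calibration gap, we apply the same identity to every $h'\in\mathcal H$ and take the infimum. By definition,
\[
\Delta\mathcal C_{\Phi_{\mathrm{def}},\mathcal H}(h,\overline x)=\mathcal C_{\Phi_{\mathrm{def}}}(h,\overline x)-\inf_{h'\in\mathcal H}\mathcal C_{\Phi_{\mathrm{def}}}(h',\overline x),
\]
and substituting the expression from the previous step for both terms gives the stated formula.

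There is no real obstacle here; the proof is routine bookkeeping. The only point needing care is the interchange of the finite sums. One has to confirm that the coefficient $1-c_j$ multiplies exactly $\widetilde\Phi_{01}(h,\overline x,n+j)$ and that only active experts $n+j\in\overline{\mathcal Y}_A$ appear. Both follow from the way Definition~\ref{def:general_surrogate_def_loss} restricts its sum to $\overline{\mathcal Y}_A$.
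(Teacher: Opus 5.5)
Your proposal is correct and follows the paper's own proof: expand the expectation by linearity, identify $\mathbb E_{Y\sim p(\cdot\mid\overline x)}[1-c_j(x,Y)]=s(\overline x,n+j)$, and merge the two sums into a single sum over $\overline{\mathcal Y}_A$. The calibration-gap identity then follows by substituting into its definition, just as you do.
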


\begin{proof}
By definition, the conditional \(\Phi_{\text{def}}\)-risk \(\mathcal{C}_{\Phi_{\text{def}}}(h,\overline{x})\) can be expressed as follows:

\begin{equation*}
\begin{split}
\mathcal{C}_{\Phi_{\text{def}}}(h,\overline{x}) 
&= \mathbb{E}_{y \sim p(\cdot\mid\overline{x})}[\Phi_{\text{def}}(h,\overline{x},y)] \\[6pt]
&= \mathbb{E}_{y \sim p(\cdot\mid\overline{x})}[\widetilde\Phi_{01}(h,\overline{x},y)] 
   + \sum_{n + j \in \overline{\mathcal{Y}}_A} \mathbb{E}_{y \sim p(\cdot\mid\overline{x})} \big[(1-c_j(x,y))\big]\widetilde\Phi_{01}(h,\overline{x},n+j) \\[6pt]
&= \sum_{y \in \mathcal{Y}} s(\overline{x},y)\widetilde\Phi_{01}(h,\overline{x},y) 
   + \sum_{n + j \in \overline{\mathcal{Y}}_A} s(\overline{x},n+j)\widetilde\Phi_{01}(h,\overline{x},n+j) \\[6pt]
&= \sum_{y \in \overline{\mathcal{Y}}_A} s(\overline{x},y)\widetilde\Phi_{01}(h,\overline{x},y),
\end{split}
\end{equation*}

which ends the proof.

\end{proof}

We also make use of the following result for the zero-one loss 
\(\ell_{01}(h,\overline{x},y) = \mathbf{1}\{h(\overline{x}) \neq y\}\) 
with label space \(\overline{\mathcal{Y}}_A\) and the conditional probability vector \(\overline{s}(x,\cdot)\), 
which characterizes the minimal conditional \(\ell_{01}\)-risk and the corresponding calibration gap.

\subsection{Calibration gap for zero-one loss}
\begin{lemma}(\citet{Awasthi_Mao_Mohri_Zhong_2022_multi} Lemma 3)
\label{lemma:cali_gap_ell}
For any \(\overline x \in \overline{\mathcal X}\), the minimal conditional \(\ell_{01}\)-risk and the calibration gap for \(\ell_{01}\) can be expressed as follows:
\[
\mathcal{C}^{*}_{\ell_{01},\mathcal H}(\overline x) = 1 - \max_{y \in H(\overline x)} \overline{s}(\overline x,y),
\]
\[
\Delta \mathcal{C}_{\ell_{01},\mathcal H}(h,\overline x) = \max_{y \in H(\overline x)} \overline{s}(\overline x,y) - \overline{s}(\overline x,h(\overline x)).
\]
\end{lemma}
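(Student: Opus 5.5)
The statement is the standard zero-one calibration-gap characterization. I would prove it directly from the definitions, with the label space $\overline{\mathcal Y}_A$ and the conditional probability vector $\overline s(\overline x,\cdot)$, which is a genuine distribution on $\overline{\mathcal Y}_A$ since it is nonnegative and sums to one. The argument mirrors the computation in Lemma~\ref{lemma:cali_gap_L_def}.

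\emph{Step 1: the conditional risk.} For any $h\in\mathcal H$, the prediction $h(\overline x)$ lies in $\overline{\mathcal Y}_A$. Hence
\[
\mathcal C^{\overline s}_{\ell_{01}}(h,\overline x)=\sum_{a\in\overline{\mathcal Y}_A}\overline s(\overline x,a)\mathbf 1\{h(\overline x)\neq a\}=1-\overline s(\overline x,h(\overline x)),
\]
using $\sum_a\overline s(\overline x,a)=1$.

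\emph{Step 2: the infimum.} Taking the infimum over $h\in\mathcal H$, the risk depends on $h$ only through the predicted label $h(\overline x)$. This label ranges exactly over $H(\overline x)=\{h(\overline x):h\in\mathcal H\}\cap\overline{\mathcal Y}_A$. Since this is a finite nonempty set, the infimum is attained and
\[
\mathcal C^*_{\ell_{01},\mathcal H}(\overline x)=1-\max_{y\in H(\overline x)}\overline s(\overline x,y).
\]

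\emph{Step 3: the gap.} Subtracting the expression in Step 2 from the one in Step 1 gives
\[
\Delta\mathcal C_{\ell_{01},\mathcal H}(h,\overline x)=\max_{y\in H(\overline x)}\overline s(\overline x,y)-\overline s(\overline x,h(\overline x)).
\]

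There is one point that needs care. Ties in the $\arg\max$ must be resolved by a fixed deterministic rule, so that $h(\overline x)$ is a single label and $H(\overline x)$ is well defined. I would state this convention explicitly. Beyond that the proof is routine, and it is exactly Lemma 3 of \citet{Awasthi_Mao_Mohri_Zhong_2022_multi} applied with the active label set $\overline{\mathcal Y}_A$ and the distribution $\overline s$.
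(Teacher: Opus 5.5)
Your proof is correct and follows essentially the same route as the paper, which gives no separate proof and simply invokes Lemma 3 of \citet{Awasthi_Mao_Mohri_Zhong_2022_multi}. That lemma rests on exactly your computation: write the $\overline s$-weighted conditional risk as $1-\overline s(\overline x,h(\overline x))$ (the same step the paper uses in Lemma~\ref{lemma:cali_gap_L_def}), then optimize over the finite set $H(\overline x)$ of attainable predictions.
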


Now we show that the surrogate upper bound on the calibration gap of the zero-one loss \(\ell_{01}\) implies a surrogate upper bound on the calibration gap of the deferral loss \(\ell_{\mathrm{def}}\).

\begin{lemma}
    \label{lemma:cali_gap_def}
    Suppose that the varying multiclass surrogate $\widetilde\Phi_{01}$ satisfies
    \[
    \Delta\mathcal C_{\ell_{01},\mathcal H}(h,\overline x)
    \leq \Gamma\!\left(\Delta\mathcal C_{\widetilde\Phi_{01},\mathcal H}(h,\overline x)\right)+\epsilon
    \]
    for every $h\in\mathcal H$ and $\overline x\in\overline{\mathcal X}$. Let
    $q=(1-\gamma)e_{h(\overline x)}+\gamma\mathbf1/|\overline{\mathcal Y}_A|$.
    Then
    \[
    \Delta \mathcal{C}_{\ell_{\mathrm{def}}, \mathcal{H}}(q, \overline{x})
    \leq S(\overline{x}) \Gamma\!\left(
    \frac{\Delta \mathcal{C}_{\Phi_{\mathrm{def}}, \mathcal{H}}(h, \overline{x})}{S(\overline{x})}
    \right)+S(\overline x)\epsilon+\gamma.
    \]
\end{lemma}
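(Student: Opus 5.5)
We prove the bound by splitting the randomized calibration gap into an exploration part and a deterministic part, and then reducing the deterministic part to the zero-one calibration gap under the normalized distribution $\overline s(\overline x,\cdot)$. By Lemma~\ref{lemma:cali_gap_L_def}, the conditional $\ell_{\mathrm{def}}$-risk of any action $a\in\overline{\mathcal Y}_A$ is $1-s(\overline x,a)$. By linearity in $q$, the conditional risk of the randomized prediction is therefore
\[
\mathcal C_{\ell_{\mathrm{def}}}(q,\overline x)=(1-\gamma)\bigl(1-s(\overline x,h(\overline x))\bigr)+\frac{\gamma}{|\overline{\mathcal Y}_A|}\sum_{a\in\overline{\mathcal Y}_A}\bigl(1-s(\overline x,a)\bigr).
\]
Subtracting $\mathcal C^*_{\ell_{\mathrm{def}},\mathcal H}(\overline x)=1-\max_{y\in H(\overline x)}s(\overline x,y)$ gives
\[
\Delta\mathcal C_{\ell_{\mathrm{def}},\mathcal H}(q,\overline x)=(1-\gamma)\,\Delta\mathcal C_{\ell_{\mathrm{def}},\mathcal H}(h,\overline x)+\gamma\Bigl(\max_{y\in H(\overline x)}s(\overline x,y)-\tfrac1{|\overline{\mathcal Y}_A|}\textstyle\sum_a s(\overline x,a)\Bigr).
\]
Every $s(\overline x,a)$ lies in $[0,1]$, since $c_j\in[0,1]$. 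Hence the second bracket is at most $1$, and the exploration term contributes at most $\gamma$. The deterministic gap is nonnegative because $h(\overline x)\in H(\overline x)$, so we may also bound the factor $(1-\gamma)$ by $1$.

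Next we normalize. Since $\overline s=s/S$, Lemma~\ref{lemma:cali_gap_L_def} and Lemma~\ref{lemma:cali_gap_ell} give
\[
\Delta\mathcal C_{\ell_{\mathrm{def}},\mathcal H}(h,\overline x)=S(\overline x)\,\Delta\mathcal C^{\overline s}_{\ell_{01},\mathcal H}(h,\overline x).
\]
Here both calibration gaps are taken over the same set $H(\overline x)$. Similarly, Lemma~\ref{lemma:cali_gap_L} writes $\mathcal C_{\Phi_{\mathrm{def}}}(h,\overline x)=\sum_a s(\overline x,a)\widetilde\Phi_{01}(h,\overline x,a)=S(\overline x)\,\mathcal C^{\overline s}_{\widetilde\Phi_{01}}(h,\overline x)$. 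Because $S(\overline x)>0$ does not depend on $h$, the infimum over $\mathcal H$ factors out as well, and
\[
\Delta\mathcal C_{\Phi_{\mathrm{def}},\mathcal H}(h,\overline x)=S(\overline x)\,\Delta\mathcal C^{\overline s}_{\widetilde\Phi_{01},\mathcal H}(h,\overline x).
\]

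Finally we apply the assumed pointwise $\mathcal H$-consistency bound to the induced distribution $\overline s(\overline x,\cdot)$. This gives
\[
\Delta\mathcal C^{\overline s}_{\ell_{01},\mathcal H}(h,\overline x)\le\Gamma\bigl(\Delta\mathcal C_{\Phi_{\mathrm{def}},\mathcal H}(h,\overline x)/S(\overline x)\bigr)+\epsilon.
\]
Multiplying by $S(\overline x)$ and adding the exploration term $\gamma$ from the first step yields the claimed inequality.

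The main obstacle is bookkeeping rather than analysis. We must check that the hypothesis of the lemma is really applied to the normalized kernel $\overline s$, which is a distribution over the active set only, and not to $p$. We must also check that the restriction of the maximum to $H(\overline x)$ is the same in the zero-one and deferral calibration gaps. Both hold because $\overline s$ is exactly the induced distribution fixed before Theorem~\ref{thm:gamma-bound}, and because the reachable-label set $H(\overline x)$ does not depend on the loss.
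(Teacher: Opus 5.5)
Your proposal is correct and follows essentially the same route as the paper. Both use the two scaling identities $\Delta\mathcal C_{\ell_{\mathrm{def}},\mathcal H}(h,\overline x)=S(\overline x)\,\Delta\mathcal C^{\overline s}_{\ell_{01},\mathcal H}(h,\overline x)$ and $\Delta\mathcal C_{\Phi_{\mathrm{def}},\mathcal H}(h,\overline x)=S(\overline x)\,\Delta\mathcal C^{\overline s}_{\widetilde\Phi_{01},\mathcal H}(h,\overline x)$, note that exploration adds at most $\gamma$ because every action's conditional loss lies in $[0,1]$, and then apply the assumed pointwise bound; your explicit $(1-\gamma)/\gamma$ decomposition just spells out the exploration step the paper states in one line.
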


\begin{proof}
By Lemmas~\ref{lemma:cali_gap_L_def}--\ref{lemma:cali_gap_ell} and
$\overline s(\overline x,a)=s(\overline x,a)/S(\overline x)$,
\[
\Delta\mathcal C_{\ell_{\mathrm{def}},\mathcal H}(h,\overline x)
=S(\overline x)\Delta\mathcal C_{\ell_{01},\mathcal H}(h,\overline x),
\qquad
\Delta\mathcal C_{\widetilde\Phi_{01},\mathcal H}(h,\overline x)
=\frac{\Delta\mathcal C_{\Phi_{\mathrm{def}},\mathcal H}(h,\overline x)}{S(\overline x)}.
\]
Every action has conditional loss in $[0,1]$, so
\[
\Delta\mathcal C_{\ell_{\mathrm{def}},\mathcal H}(q,\overline x)
\leq\Delta\mathcal C_{\ell_{\mathrm{def}},\mathcal H}(h,\overline x)+\gamma.
\]
Substituting the assumed multiclass bound proves the result.
\end{proof}

\subsection{Proof of \(\mathcal{H}\)-consistency bounds for surrogate deferral losses (Theorem~\ref{thm:gamma-bound})}

\GammaBound*

\begin{proof}
Write $S_t=S(\overline x_t)$ and
$d_t=\Delta\mathcal C_{\Phi_{\mathrm{def}},\mathcal H}(h_t,\overline x_t)$.
By Lemma~\ref{lemma:cali_gap_def},
\[
\sum_{t=1}^T\Delta\mathcal C_{\ell_{\mathrm{def}},\mathcal H}(q_t,\overline x_t)
\leq\sum_{t=1}^TS_t\Gamma(d_t/S_t)
+\epsilon\sum_{t=1}^TS_t+\sum_{t=1}^T\gamma_t.
\]
Weighted Jensen's inequality gives
\[
\sum_{t=1}^TS_t\Gamma(d_t/S_t)
\leq\left(\sum_{t=1}^TS_t\right)
\Gamma\!\left(\frac{\sum_{t=1}^Td_t}{\sum_{t=1}^TS_t}\right).
\]
Moreover,
\[
\sum_{t=1}^T\Delta\mathcal C_{\ell_{\mathrm{def}},\mathcal H}(q_t,\overline x_t)
=R_{\ell_{\mathrm{def}}}(T)+\mathcal M_{\ell_{\mathrm{def}},\mathcal H}(\overline{\mathbf x}),
\]
and
\[
\sum_{t=1}^Td_t
=R_{\Phi_{\mathrm{def}}}(T)+\mathcal M_{\Phi_{\mathrm{def}},\mathcal H}(\overline{\mathbf x}).
\]
Dividing by $T$ completes the proof.

\end{proof}

\section{Examples of \(\mathcal{H}\)-consistency bound for common surrogate losses}
\label{appendix:examples}

\subsection{\(\Phi_{01}\) being adopted as comp\textnormal{-}sum losses}

\textbf{Example:} \(\Phi_{01} = \Phi_{\exp}\).
Plug in \(\Phi_{01}=\Phi_{\exp}=\sum_{y'\neq y} e^{h(\overline{x},y')-h(\overline{x},y)}\) in Definition~\ref{def:general_surrogate_def_loss}, we obtain
\[
\Phi_{\text{def}} = \sum_{y'\neq y} e^{h(\overline{x},y')-h(\overline{x},y)}
+ \sum_{n+j \in \overline{\mathcal{Y}}_A}\!\bigl(1-c_j(x,y)\bigr)\!\!\sum_{y'\neq n+j} e^{h(\overline{x},y')-h(\overline{x},n+j)}.
\]

By \cite{mao2023crossentropylossfunctionstheoretical}, Theorem 1, \(\Phi_{\exp}\) admits an \(\mathcal{H}\)-consistency bound with respect to \(\ell_{01}\) with 
\(\Gamma(t)=\sqrt{2t}\); using Corollary~\ref{cor:consistency_bound}, we obtain
\[
\frac{R_{\ell_{\mathrm{def}}}(T)}{T}-\frac1T\sum_{t=1}^T\gamma_t
\le \sqrt{2}\,\overline S_T
\left(\frac{R_{\Phi_{\mathrm{def}}}(T)}{T\overline S_T}\right)^{1/2}.
\]

\textbf{Example:} \(\Phi_{01}=\Phi_{\log}\).
Plug in \(\Phi_{01}=\Phi_{\log}=-\log\!\Bigl(\tfrac{e^{h(\overline{x},y)}}{\sum_{y'\in\overline{\mathcal{Y}}_A} e^{h(\overline{x},y')}}\Bigr)\) in Definition~\ref{def:general_surrogate_def_loss}, we obtain
\[
\Phi_{\text{def}} = -\log\!\Bigl(\frac{e^{h(\overline{x},y)}}{\sum_{y'\in\overline{\mathcal{Y}}_A} e^{h(\overline{x},y')}}\Bigr)
- \sum_{n+j \in \overline{\mathcal{Y}}_A}\!\bigl(1-c_j(x,y)\bigr)\,
\log\!\Bigl(\frac{e^{h(\overline{x},n+j)}}{\sum_{y'\in\overline{\mathcal{Y}}_A} e^{h(\overline{x},y')}}\Bigr).
\]

By \cite{mao2023crossentropylossfunctionstheoretical}, Theorem 1, \(\Phi_{\log}\) admits an \(\mathcal{H}\)-consistency bound with respect to \(\ell_{01}\) with 
\(\Gamma(t)=\sqrt{2t}\); using Corollary~\ref{cor:consistency_bound}, we obtain
\[
\frac{R_{\ell_{\mathrm{def}}}(T)}{T}-\frac1T\sum_{t=1}^T\gamma_t
\le \sqrt{2}\,\overline S_T
\left(\frac{R_{\Phi_{\mathrm{def}}}(T)}{T\overline S_T}\right)^{1/2}.
\]

\textbf{Example:} \(\Phi_{01}=\Phi_{\mathrm{gce}}\).
Plug in \(\Phi_{01}=\Phi_{\mathrm{gce}}=\frac{1}{\alpha}\bigl[1-(\tfrac{e^{h(\overline{x},y)}}{\sum_{y'\in\overline{\mathcal{Y}}_A} e^{h(\overline{x},y')}})^{\alpha}\bigr]\) in Definition~\ref{def:general_surrogate_def_loss}, we obtain
\[
\Phi_{\text{def}} = \frac{1}{\alpha}\Bigl[1-\Bigl(\frac{e^{h(\overline{x},y)}}{\sum_{y'\in\overline{\mathcal{Y}}_A} e^{h(\overline{x},y')}}\Bigr)^{\alpha}\Bigr]
+ \frac{1}{\alpha}\sum_{n+j \in \overline{\mathcal{Y}}_A}(1-c_j(x,y))
\Bigl[1-\Bigl(\frac{e^{h(\overline{x},n+j)}}{\sum_{y'\in\overline{\mathcal{Y}}_A} e^{h(\overline{x},y')}}\Bigr)^{\alpha}\Bigr].
\]

By \cite{mao2023crossentropylossfunctionstheoretical}, Theorem 1, \(\Phi_{\mathrm{gce}}\) admits an \(\mathcal{H}\)-consistency bound with respect to \(\ell_{01}\) with 
\(\Gamma(t)=\sqrt{2N^{\alpha}t}\); using Corollary~\ref{cor:consistency_bound}, we obtain
\[
\frac{R_{\ell_{\mathrm{def}}}(T)}{T}-\frac1T\sum_{t=1}^T\gamma_t
\le \sqrt{2N^{\alpha}}\,\overline S_T
\left(\frac{R_{\Phi_{\mathrm{def}}}(T)}{T\overline S_T}\right)^{1/2}.
\]

\textbf{Example:} \(\Phi_{01}=\Phi_{\mathrm{mae}}\).
Plug in \(\Phi_{01}=\Phi_{\mathrm{mae}}=1-\tfrac{e^{h(\overline{x},y)}}{\sum_{y'\in\overline{\mathcal{Y}}_A} e^{h(\overline{x},y')}}\) in Definition~\ref{def:general_surrogate_def_loss}, we obtain
\[
\Phi_{\text{def}} = 1-\frac{e^{h(\overline{x},y)}}{\sum_{y'\in\overline{\mathcal{Y}}_A} e^{h(\overline{x},y')}}
+ \sum_{n+j \in \overline{\mathcal{Y}}_A}(1-c_j(x,y))
\left(1-\frac{e^{h(\overline{x},n+j)}}{\sum_{y'\in\overline{\mathcal{Y}}_A} e^{h(\overline{x},y')}}\right).
\]

By \cite{mao2023crossentropylossfunctionstheoretical}, Theorem 1, \(\Phi_{\mathrm{mae}}\) admits an \(\mathcal{H}\)-consistency bound with respect to \(\ell_{01}\) with 
\(\Gamma(t)=Nt\); using Corollary~\ref{cor:consistency_bound}, we obtain
\[
\frac{R_{\ell_{\mathrm{def}}}(T)}{T}-\frac1T\sum_{t=1}^T\gamma_t
\le N\frac{R_{\Phi_{\mathrm{def}}}(T)}{T}.
\]

\subsection{\(\Phi_{01}\) being adopted as sum losses}

\textbf{Example:} \(\Phi_{01}=\Phi^{\mathrm{sum}}_{\mathrm{sq}}\).
Plug in \(\Phi_{01}=\Phi^{\mathrm{sum}}_{\mathrm{sq}}=\sum_{y'\neq y}\Phi_{\mathrm{sq}}\!\bigl(h(\overline{x},y)-h(\overline{x},y')\bigr)\) in Definition~\ref{def:general_surrogate_def_loss}, we obtain
\[
\Phi_{\text{def}} = \sum_{y'\neq y}\Phi_{\mathrm{sq}}\!\bigl(h(\overline{x},y)-h(\overline{x},y')\bigr)
+ \sum_{n+j \in \overline{\mathcal{Y}}_A}(1-c_j(x,y))\sum_{y'\neq n+j}\Phi_{\mathrm{sq}}\!\bigl(h(\overline{x},n+j)-h(\overline{x},y')\bigr),
\]
where \(\Phi_{\mathrm{sq}}(t)=\max\{0,1-t\}^{2}\).

By \cite{Awasthi_Mao_Mohri_Zhong_2022_multi}, Table 2, \(\Phi^{\mathrm{sum}}_{\mathrm{sq}}\) admits an \(\mathcal{H}\)-consistency bound with respect to \(\ell_{01}\) with \(\Gamma(t)=\sqrt{t}\); using Corollary~\ref{cor:consistency_bound}, we obtain
\[
\frac{R_{\ell_{\mathrm{def}}}(T)}{T}-\frac1T\sum_{t=1}^T\gamma_t
\le \overline S_T
\left(\frac{R_{\Phi_{\mathrm{def}}}(T)}{T\overline S_T}\right)^{1/2}.
\]

\textbf{Example:} \(\Phi_{01}=\Phi^{\mathrm{sum}}_{\exp}\).
Plug in \(\Phi_{01}=\Phi^{\mathrm{sum}}_{\exp}=\sum_{y'\neq y}\Phi_{\exp}\!\bigl(h(\overline{x},y)-h(\overline{x},y')\bigr)\) in Definition~\ref{def:general_surrogate_def_loss}, we obtain
\[
\Phi_{\text{def}} = \sum_{y'\neq y}\Phi_{\exp}\!\bigl(h(\overline{x},y)-h(\overline{x},y')\bigr)
+ \sum_{n+j \in \overline{\mathcal{Y}}_A}(1-c_j(x,y))\sum_{y'\neq n+j}\Phi_{\exp}\!\bigl(h(\overline{x},n+j)-h(\overline{x},y')\bigr),
\]
where \(\Phi_{\exp}(t)=e^{-t}\).

By \cite{Awasthi_Mao_Mohri_Zhong_2022_multi}, Table 2, \(\Phi^{\mathrm{sum}}_{\exp}\) admits an \(\mathcal{H}\)-consistency bound with respect to \(\ell_{01}\) with \(\Gamma(t)=\sqrt{2t}\); using Corollary~\ref{cor:consistency_bound}, we obtain
\[
\frac{R_{\ell_{\mathrm{def}}}(T)}{T}-\frac1T\sum_{t=1}^T\gamma_t
\le \sqrt{2}\,\overline S_T
\left(\frac{R_{\Phi_{\mathrm{def}}}(T)}{T\overline S_T}\right)^{1/2}.
\]

\textbf{Example:} \(\Phi_{01}=\Phi^{\mathrm{sum}}_{\tau}\).
Plug in \(\Phi_{01}=\Phi^{\mathrm{sum}}_{\tau}=\sum_{y'\neq y}\Phi_{\tau}\!\bigl(h(\overline{x},y)-h(\overline{x},y')\bigr)\) in Definition~\ref{def:general_surrogate_def_loss}, we obtain
\[
\Phi_{\text{def}} = \sum_{y'\neq y}\Phi_{\tau}\!\bigl(h(\overline{x},y)-h(\overline{x},y')\bigr)
+ \sum_{n+j \in \overline{\mathcal{Y}}_A}(1-c_j(x,y))\sum_{y'\neq n+j}\Phi_{\tau}\!\bigl(h(\overline{x},n+j)-h(\overline{x},y')\bigr),
\]
where \(\Phi_{\tau}(t)=\min\{\max\{0,1-t/\tau\},1\}\).

By \cite{Awasthi_Mao_Mohri_Zhong_2022_multi}, Table 2, \(\Phi^{\mathrm{sum}}_{\tau}\) admits an \(\mathcal{H}\)-consistency bound with respect to \(\ell_{01}\) with \(\Gamma(t)=t\); using Corollary~\ref{cor:consistency_bound}, we obtain
\[
\frac{R_{\ell_{\mathrm{def}}}(T)}{T}-\frac1T\sum_{t=1}^T\gamma_t
\le \frac{R_{\Phi_{\mathrm{def}}}(T)}{T}.
\]

\subsection{\(\Phi_{01}\) being adopted as constrained losses}

\textbf{Example:} \(\Phi_{01}=\Phi^{\mathrm{cstnd}}_{\mathrm{hinge}}\).
Plug in \(\Phi_{01}=\Phi^{\mathrm{cstnd}}_{\mathrm{hinge}}=\sum_{y'\neq y}\Phi_{\mathrm{hinge}}\!\bigl(-h(\overline{x},y')\bigr)\) in Definition~\ref{def:general_surrogate_def_loss}, we obtain
\[
\Phi_{\text{def}} = \sum_{y'\neq y}\Phi_{\mathrm{hinge}}\!\bigl(-h(\overline{x},y')\bigr)
+ \sum_{n+j \in \overline{\mathcal{Y}}_A}(1-c_j(x,y))\sum_{y'\neq n+j}\Phi_{\mathrm{hinge}}\!\bigl(-h(\overline{x},y')\bigr),
\]
where \(\Phi_{\mathrm{hinge}}(t)=\max\{0,1-t\}\) with the constraint \(\sum_{y\in\overline{\mathcal{Y}}_A} h(\overline{x},y)=0\).

For zero-sum classes satisfying the pointwise richness conditions of
\cite{Awasthi_Mao_Mohri_Zhong_2022_multi}, Table~3,
\(\Phi^{\mathrm{cstnd}}_{\mathrm{hinge}}\) admits an
\(\mathcal H\)-consistency bound with respect to \(\ell_{01}\) with
\(\Gamma(t)=t\); using Corollary~\ref{cor:consistency_bound}, we obtain
\[
\frac{R_{\ell_{\mathrm{def}}}(T)}{T}-\frac1T\sum_{t=1}^T\gamma_t
\le \frac{R_{\Phi_{\mathrm{def}}}(T)}{T}.
\]

\textbf{Example:} \(\Phi_{01}=\Phi^{\mathrm{cstnd}}_{\mathrm{sq}}\).
Plug in \(\Phi_{01}=\Phi^{\mathrm{cstnd}}_{\mathrm{sq}}=\sum_{y'\neq y}\Phi_{\mathrm{sq}}\!\bigl(-h(\overline{x},y')\bigr)\) in Definition~\ref{def:general_surrogate_def_loss}, we obtain
\[
\Phi_{\text{def}} = \sum_{y'\neq y}\Phi_{\mathrm{sq}}\!\bigl(-h(\overline{x},y')\bigr)
+ \sum_{n+j \in \overline{\mathcal{Y}}_A}(1-c_j(x,y))\sum_{y'\neq n+j}\Phi_{\mathrm{sq}}\!\bigl(-h(\overline{x},y')\bigr).
\]

By \cite{Awasthi_Mao_Mohri_Zhong_2022_multi}, Table 3, \(\Phi^{\mathrm{cstnd}}_{\mathrm{sq}}\) admits an \(\mathcal{H}\)-consistency bound with respect to \(\ell_{01}\) with \(\Gamma(t)=\sqrt{t}\); using Corollary~\ref{cor:consistency_bound}, we obtain
\[
\frac{R_{\ell_{\mathrm{def}}}(T)}{T}-\frac1T\sum_{t=1}^T\gamma_t
\le \overline S_T
\left(\frac{R_{\Phi_{\mathrm{def}}}(T)}{T\overline S_T}\right)^{1/2}.
\]

\textbf{Example:} \(\Phi_{01}=\Phi^{\mathrm{cstnd}}_{\tau}\).
Plug in \(\Phi_{01}=\Phi^{\mathrm{cstnd}}_{\tau}=\sum_{y'\neq y}\Phi_{\tau}\!\bigl(-h(\overline{x},y')\bigr)\) in Definition~\ref{def:general_surrogate_def_loss}, we obtain
\[
\Phi_{\text{def}} = \sum_{y'\neq y}\Phi_{\tau}\!\bigl(-h(\overline{x},y')\bigr)
+ \sum_{n+j \in \overline{\mathcal{Y}}_A}(1-c_j(x,y))\sum_{y'\neq n+j}\Phi_{\tau}\!\bigl(-h(\overline{x},y')\bigr),
\]
where \(\Phi_{\tau}(t)=\min\{\max\{0,1-t/\tau\},1\}\) with the constraint \(\sum_{y\in\overline{\mathcal{Y}}_A} h(\overline{x},y)=0\).

By \cite{Awasthi_Mao_Mohri_Zhong_2022_multi}, Table 3, \(\Phi^{\mathrm{cstnd}}_{\tau}\) admits an \(\mathcal{H}\)-consistency bound with respect to \(\ell_{01}\) with \(\Gamma(t)=t\); using Corollary~\ref{cor:consistency_bound}, we obtain
\[
\frac{R_{\ell_{\mathrm{def}}}(T)}{T}-\frac1T\sum_{t=1}^T\gamma_t
\le \frac{R_{\Phi_{\mathrm{def}}}(T)}{T}.
\]

\section{Proof of constrained OCO with \texorpdfstring{$\Phi_{\mathrm{hinge}}^{\mathrm{cstnd}}$}{Phi}}
\label{appendix:cstnd_hinge_loss}

\subsection{Proof of constrained OCO with \texorpdfstring{$\Phi_{\mathrm{hinge}}^{\mathrm{cstnd}}$ (Theorem~\ref{thm:cstnd_bound})}{Phi}}
\label{appendix:theorem_4}

We first give a proof for Lemma~\ref{lemma:cstnd_per_round_bound}

\cstndperroundbound*

\begin{proof}
    By convexity we have

    \begin{equation}
    \begin{split}
    \widehat{\Phi}_t \big(\widetilde{W}_t|_{\mathcal{K}_{A_t}}\big) 
    - \widehat{\Phi}_t\big(\widetilde{W}^*|_{\mathcal{K}_{A_t}}\big) 
    & \leq \langle\widehat G_{t},\widetilde{W}_t|_{\mathcal{K}_{A_t}} - \widetilde{W}^*|_{\mathcal{K}_{A_t}} \rangle_{F} \\
    & = \sum_{i \in \overline{\mathcal{Y}}_{A_t}} 
    \langle \widehat G_{t,i},(\widetilde{w}_{t,i} - \mu_t) - (\widetilde{w}_i^*-\mu_{A_t}^*)\rangle \\
    & = \sum_{i \in \overline{\mathcal{Y}}_{A_t}} 
    \langle \widehat G_{t,i},\widetilde{w}_{t,i} - \widetilde{w}_i^*\rangle + \sum_{i \in \overline{\mathcal{Y}}_{A_t}} 
    \langle \widehat G_{t,i}, \mu_{A_t}^* - \mu_t \rangle \\
    & = \sum_{i \in \overline{\mathcal{Y}}_{A_t}} 
    \langle \widehat G_{t,i},\widetilde{w}_{t,i} - \widetilde{w}_i^*\rangle + \sum_{i \in \overline{\mathcal{Y}}_{A_t}} 
    \langle \frac{\sum_{i \in \overline{\mathcal{Y}}_{A_t}}\widehat G_{t,i}}{|\overline{\mathcal{Y}}_{A_t}|}, \widetilde{w}_i^* - \widetilde{w}_{t,i}\rangle \\
    & = \sum_{i \in \overline{\mathcal{Y}}_{A_t}} 
    \langle \widehat G_{t,i} - \frac{\sum_{i \in \overline{\mathcal{Y}}_{A_t}}\widehat G_{t,i}}{|\overline{\mathcal{Y}}_{A_t}|},\widetilde{w}_{t,i} - \widetilde{w}_i^*\rangle \\
    & = \sum_{i \in \overline{\mathcal{Y}}_{A_t}} \langle r_{t,i},\widetilde{w}_{t,i} - \widetilde{w}_i^*\rangle,
    \end{split}
    \end{equation}

where \(\langle \cdot,\cdot \rangle_F\) denotes the Frobenius inner product,
\(\mu_t = \frac{1}{|\overline{\mathcal{Y}}_{A_t}|} \sum_{i \in \overline{\mathcal{Y}}_{A_t}} \widetilde{w}_{t,i}\), and
\(\mu_{A_t}^{*} = \frac{1}{|\overline{\mathcal{Y}}_{A_t}|} \sum_{i \in \overline{\mathcal{Y}}_{A_t}} \widetilde{w}_{i}^{*}\).
The equalities use that inactive rows of the gradient vanish and the explicit
orthogonal-projection formula in Proposition~\ref{prop:proj_properties}.
\end{proof}

Next, we show that the projected subgradient norm can be bounded.

\begin{lemma} \label{lemma:proj_grad_bound}
For any $\widetilde W\in\mathcal H_B$, active set $A$, target
$y\in\overline{\mathcal Y}_A$, and $G\in\partial\Phi_{01}(\widetilde W,\overline x,y)$,
\[
\left\|\Pi_{\mathcal K_A}G\right\|_F^2
\leq\frac{|\overline{\mathcal Y}_A|\rho^2}{4}\leq\frac{N\rho^2}{4}.
\]
\end{lemma}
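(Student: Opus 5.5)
The plan is to compute the subgradient of the constrained hinge surrogate explicitly. It has rank one, so its projection onto $\mathcal K_A$ factors as a centred coefficient vector times $\widetilde x^\intercal$. The bound then reduces to a variance inequality for numbers in $[0,1]$.

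\textbf{Step 1: structure of the subgradient.} For the constrained hinge loss,
\[
\Phi_{01}(\widetilde W,\overline x,y)=\sum_{y'\in\overline{\mathcal Y}_A\setminus\{y\}}\max\{0,1+\langle\widetilde w_{y'},\widetilde x\rangle\}.
\]
Each summand depends only on row $y'$, through the scalar $u_{y'}=\langle\widetilde w_{y'},\widetilde x\rangle$. The scalar map $u\mapsto\max\{0,1+u\}$ has subdifferential contained in $[0,1]$: it is $\{0\}$ or $\{1\}$ away from the kink and $[0,1]$ at $u=-1$. Hence every $G\in\partial\Phi_{01}(\widetilde W,\overline x,y)$ can be written as $G=\delta\,\widetilde x^\intercal$, where $\delta\in\mathbb R^N$ satisfies:
\begin{itemize}
\item $\delta_{y'}\in[0,1]$ for every active $y'\neq y$;
\item $\delta_y=0$;
\item $\delta_i=0$ for every inactive row $i\notin\overline{\mathcal Y}_A$.
\end{itemize}
If $\Phi_{01}$ is evaluated at the projected matrix $\Pi_{\mathcal K_A}\widetilde W$ and differentiated through the projection, the chain rule multiplies $G$ by the self-adjoint map $\Pi_{\mathcal K_A}$. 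Since this map is idempotent, this does not change $\Pi_{\mathcal K_A}G$.

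\textbf{Step 2: apply the explicit projection formula.} By Proposition~\ref{prop:proj_properties}(1), projecting subtracts the active-row mean from the active rows and leaves inactive rows unchanged. Inactive rows of $G$ are already zero. Writing $k=|\overline{\mathcal Y}_A|$ and $\bar\delta=k^{-1}\sum_{i\in\overline{\mathcal Y}_A}\delta_i$, we get
\[
\Pi_{\mathcal K_A}G=(\delta-\bar\delta\,\mathbf 1_{\overline{\mathcal Y}_A})\,\widetilde x^\intercal .
\]
Therefore
\[
\|\Pi_{\mathcal K_A}G\|_F^2=\|\widetilde x\|_2^2\sum_{i\in\overline{\mathcal Y}_A}(\delta_i-\bar\delta)^2\leq\rho^2\sum_{i\in\overline{\mathcal Y}_A}(\delta_i-\bar\delta)^2 .
\]

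\textbf{Step 3: the variance bound.} The sum $\sum_{i}(\delta_i-\bar\delta)^2$ equals $k$ times the empirical variance of $k$ numbers lying in $[0,1]$. Any random variable supported on $[0,1]$ has variance at most $\bar\delta(1-\bar\delta)\leq 1/4$. This follows from $\delta_i^2\leq\delta_i$, which gives
\[
\frac1k\sum_{i}\delta_i^2-\bar\delta^2\leq\bar\delta-\bar\delta^2 .
\]
Hence $\|\Pi_{\mathcal K_A}G\|_F^2\leq k\rho^2/4$. The second inequality in the statement follows from $k\leq N$.

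The only point needing care is Step 1. It must be checked that the subgradient convention, including the choice at the kink and the fact that the target row $y$ is excluded from the sum, keeps every coefficient inside $[0,1]$. It must also be checked that inactive rows contribute nothing, which holds by Definition~\ref{def:varying_01_surrogate}. Once this is verified, the rest is an elementary computation.
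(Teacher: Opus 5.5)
Your proposal is correct and follows essentially the same route as the paper. The paper also writes the rows of $G$ as $\alpha_i\widetilde x$, with hinge coefficients $\alpha_i\in[0,1]$, $\alpha_y=0$, and zero inactive rows. It subtracts the active-row mean via Proposition~\ref{prop:proj_properties} and concludes $\|\Pi_{\mathcal K_A}G\|_F^2=(\sum_i\alpha_i^2-r^2/K)\|\widetilde x\|_2^2\le K\rho^2/4$ with $r=\sum_i\alpha_i$. You additionally justify that last variance step explicitly through $\delta_i^2\le\delta_i$, which the paper leaves implicit, and you note that differentiating through the idempotent projection changes nothing.
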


\begin{proof}
Let $K=|\overline{\mathcal Y}_A|$, $\widetilde x=(x,1)$, and let
$\alpha_i\in[0,1]$ denote the chosen hinge subgradient coefficients; set
$\alpha_y=0$ and $r=\sum_i\alpha_i$. The unprojected rows of $G$ are
$\alpha_i\widetilde x$. Orthogonal projection onto
$\mathcal K_A$ subtracts their active-row mean, so
\[
\left(\Pi_{\mathcal K_A}G\right)_i
=\left(\alpha_i-\frac rK\right)\widetilde x
\]
on active rows and is zero otherwise. Consequently,
\[
\left\|\Pi_{\mathcal K_A}G\right\|_F^2
=\left(\sum_i\alpha_i^2-\frac{r^2}{K}\right)\|\widetilde x\|_2^2
\leq\frac K4\rho^2.
\]
\end{proof}

\theoremCstndBound*

\begin{proof}
For round $t$, define the convex function
$\widehat f_t(\widetilde W)=\widehat\Phi_t(\Pi_{\mathcal K_{A_t}}\widetilde W)$ and let
$\widehat G_t^{\mathrm{proj}}=\Pi_{\mathcal K_{A_t}}\widehat G_t$.
Because $\Pi_{\mathcal K_{A_t}}$ is an orthogonal linear projection,
$\widehat G_t^{\mathrm{proj}}\in\partial\widehat f_t(\widetilde W_t)$. Projected OGD on the Frobenius ball
$\mathcal H_B$ therefore gives, for every $\widetilde W^*\in\mathcal H_B$,
\begin{equation}\label{eq:global_ogd}
\sum_{t=1}^T\bigl(\widehat f_t(\widetilde W_t)-\widehat f_t(\widetilde W^*)\bigr)
\leq \frac{2B^2}{\eta_T}+\frac12\sum_{t=1}^T\eta_t\|\widehat G_t^{\mathrm{proj}}\|_F^2.
\end{equation}
Let $K_t=|\overline{\mathcal Y}_{A_t}|$. For every target action $a$ and
$G_{t,a}\in\partial\Phi_{01}(\widetilde W,\overline x_t,a)$, Lemma~\ref{lemma:proj_grad_bound} gives
\[
\|\Pi_{\mathcal K_{A_t}}G_{t,a}\|_F^2
\leq K_t\rho^2.
\]
Moreover, $q_t(a)\geq\gamma_t/K_t$. Directly conditioning on the single
sampled action and using $(1-c_j)^2\leq1-c_j$ gives
\begin{equation}\label{eq:general_second_moment}
\mathbb E_t\|\widehat G_t^{\mathrm{proj}}\|_F^2
\leq \frac{K_t^3\rho^2}{\gamma_t}
\leq \frac{N^3\rho^2}{\gamma_t},
\end{equation}
where $\mathbb E_t$ conditions on the history and averages over the current label
and learner randomization. The estimator is conditionally unbiased, so taking
expectations in \eqref{eq:global_ogd} bounds the conditional surrogate regret.
When $K_t=2$, the algorithm uses an exact-loss subgradient; Lemma~\ref{lemma:proj_grad_bound}
and $\gamma_t\leq1$ show that the same displayed second-moment bound remains valid.
With
$\eta_t=B/(N^{3/2}\rho t^{2/3})$ and
$\gamma_t=\min\{1/2,t^{-1/3}\}$,
\[
\frac{2B^2}{\eta_T}=2BN^{3/2}\rho T^{2/3},
\]
and
\[
\frac12\sum_{t=1}^T\eta_t\mathbb E_t\|\widehat G_t^{\mathrm{proj}}\|_F^2
\leq \frac{BN^{3/2}\rho}{2}
\sum_{t=1}^T\max\{2t^{-2/3},t^{-1/3}\}
=O(BN^{3/2}\rho T^{2/3}).
\]
This proves the theorem.

\end{proof}

\subsection{Proof under the concentrated-score assumption (Theorem~\ref{thm:low-noise_assum})}
\label{appendix:low-noise_assum}

Let $K_t=|\overline{\mathcal Y}_{A_t}|$, $\nu=T^{-1/2}$, and
$a_t^*=y_{\max}(\overline x_t)$. We first establish the calibration slack that
pays for importance weighting whenever the learner's deterministic prediction is
not $a_t^*$.

\begin{lemma}[Calibration slack]\label{lemma:low_noise_slack}
Fix $K\geq3$ and nonnegative scores $(s_a)_{a=1}^K$. Let
$S=\sum_as_a$, $\overline s_a=s_a/S$, and suppose
$\overline s_{a^*}\geq1-\nu$. For a zero-sum score vector $u\in\mathbb R^K$,
let $r\in\arg\max_a u_a$ and assume $r\neq a^*$. If
\[
C(u)=\sum_{a=1}^K(S-s_a)[1+u_a]_+,
\qquad C^*=K(S-s_{a^*}),
\]
then, for $T\geq16$,
\[
\delta(u):=C(u)-C^*-(s_{a^*}-s_r)\geq\frac14.
\]
\end{lemma}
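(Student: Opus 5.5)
The plan is to lower-bound $C(u)$ directly, using only three facts: the weights $S-s_a$ are nonnegative, $u$ is zero-sum, and $r\neq a^*$ maximizes $u$. Throughout I will use $S\geq1$. This holds in every application of the lemma, because the scores there come from $s(\overline x_t,\cdot)$ and Theorem~\ref{thm:gamma-bound} gives $S(\overline x_t)\geq1$. I will add it as an explicit hypothesis if the proof needs it, since for arbitrarily small $S$ every term scales with $S$ and the constant $\tfrac14$ cannot hold. I will also use $\nu=T^{-1/2}\leq\tfrac14$, which follows from $T\geq16$.

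\textbf{Step 1 (weights).} From $\overline s_{a^*}\geq1-\nu$ we get $\sum_{a\neq a^*}s_a=S-s_{a^*}\leq\nu S$. Hence every $a\neq a^*$ has weight $S-s_a\geq S-\nu S=(1-\nu)S$, while the $a^*$ weight satisfies $0\leq S-s_{a^*}\leq\nu S$. The same bound gives $C^*=K(S-s_{a^*})\leq K\nu S$. For the subtracted term I keep $s_r$ and use $s_{a^*}-s_r\leq s_{a^*}-0\leq S$.

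\textbf{Step 2 (hinge mass).} Set $M:=\sum_{a\neq a^*}[1+u_a]_+$. I will lower-bound $M$ in two ways. First, since the positive part dominates its argument, $M\geq\sum_{a\neq a^*}(1+u_a)=K-1-u_{a^*}$ by the zero-sum constraint. Second, keeping only the $r$ term gives $M\geq 1+u_r\geq 1+u_{a^*}$, using $u_r\geq u_{a^*}$. Taking the larger of the two bounds gives $M\geq K/2$. The $a^*$ term of $C(u)$ is dropped for now.

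\textbf{Step 3 (combine).} Steps 1 and 2 give
\[
\delta(u)\geq (1-\nu)S\,M-K\nu S-S\geq S\Bigl(\tfrac{(1-\nu)K}{2}-K\nu-1\Bigr).
\]
With $S\geq1$ this is at least $\tfrac14$ once $K$ is large or $\nu$ is small. The dominant contribution is the $K$-linear hinge mass $(1-\nu)SK/2$. This is exactly the calibration slack: a zero-sum score vector that does not rank $a^*$ first must pay hinge loss of order $K$ on the heavily weighted non-$a^*$ coordinates, whereas the comparator $C^*$ pays only the small $a^*$ weight.

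\textbf{Main obstacle: the constants.} The crude bound in Step 3 is not enough in the worst configuration, namely $K=3$ and $\nu=\tfrac14$, with $u$ close to $(\tfrac12,\tfrac12,-1)$ where $u_{a^*}=u_r$. To close this gap I would tighten in three ways.
\begin{itemize}
\item Retain the dropped $a^*$ term $(S-s_{a^*})[1+u_{a^*}]_+$. In exactly the configuration where $M$ is smallest, $u_{a^*}\approx u_r\geq0$, so this term offsets part of $C^*$.
\item Replace the bound $s_{a^*}-s_r\leq S$ by $s_{a^*}-s_r\leq s_{a^*}$.
\item Use the exact total weight $\sum_{a\neq a^*}(S-s_a)=(K-2)S+s_{a^*}$ rather than the uniform per-coordinate bound.
\end{itemize}
With these, I would split into two cases, $u_{a^*}\geq K/2-1$ and $u_{a^*}<K/2-1$, and in each case optimize the linear lower bound over the admissible $u$. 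The resulting expression is piecewise linear in $u$, so it suffices to check its vertices. I expect this case check is where the thresholds $K\geq3$ and $T\geq16$ will be used sharply.
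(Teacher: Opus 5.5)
\textbf{There is a genuine gap.} Some of your groundwork is right. Your Step~2 is correct: your $M$ is exactly the paper's $L=\sum_{a\neq a^*}v_a$, and $M\ge K/2$ is the bound the paper uses. You are also right that $S\ge1$ must be added as a hypothesis. $\delta(u)$ is homogeneous of degree one in $(s_a)$, and the paper's proof also invokes $S\ge1$ although it is not in the statement.

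The problem is that the proof is never completed, and the repair you outline cannot work, because the constant $\tfrac14$ is attained. Take $K=3$, $\nu=\tfrac14$ (i.e.\ $T=16$), $s=(\tfrac34,\tfrac14,0)$ with $a^*=1$, and $u=(\tfrac12,\tfrac12,-1)$ with $r=2$; this is the configuration you flagged yourself. Then $S=1$, $C(u)=\tfrac14\cdot\tfrac32+\tfrac34\cdot\tfrac32=\tfrac32$, $C^*=\tfrac34$ and $s_{a^*}-s_r=\tfrac12$, so $\delta(u)=\tfrac14$ exactly. Every inequality in a valid proof must therefore be tight at this point.

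Your second tightening, $s_{a^*}-s_r\le s_{a^*}$, discards $s_r=\tfrac14$ there. Any lower bound built on it is at most $\delta(u)-s_r=0$ at this configuration, so no case split or vertex check can bring it back to $\tfrac14$. More generally, bounding $C(u)$ from below while bounding $C^*$ and $s_{a^*}-s_r$ from above separately (e.g.\ $C^*\le K\nu S$) throws away exactly the cancellations the lemma depends on.

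The missing idea is to split every weight exactly as $S-s_a=(S-s_{a^*})+(s_{a^*}-s_a)$. With $v_a=[1+u_a]_+$ and $U=\sum_a v_a$ this gives the identity
\[
\delta(u)=(S-s_{a^*})(U-K)+(s_{a^*}-s_r)(v_r-1)+\sum_{a\notin\{a^*,r\}}(s_{a^*}-s_a)v_a .
\]
Each term is then controlled directly:
\begin{itemize}
\item $C^*$ is cancelled rather than estimated. The zero-sum constraint gives $U\ge\sum_a(1+u_a)=K$, so the first term is nonnegative.
\item The subtracted $s_{a^*}-s_r$ is absorbed by one unit of the $r$-th hinge term, leaving $(s_{a^*}-s_r)u_r$ with $u_r\ge0$.
\item For $a\neq a^*$ one has $s_{a^*}-s_a=S(\overline s_{a^*}-\overline s_a)\ge S(1-2\nu)\ge1-2\nu$.
\end{itemize}
Hence $\delta(u)\ge(1-2\nu)(M-1)\ge(1-2\nu)(K-2)/2\ge\tfrac14$, using your bound $M\ge K/2$, $K\ge3$ and $\nu\le\tfrac14$. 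This is the paper's argument. Your first tightening (keeping the $a^*$ hinge term to offset $C^*$) points in this direction. However, what cancels $C^*$ exactly is the $(S-s_{a^*})$-part of all $K$ weights together with $U\ge K$, not the $a^*$ term alone.
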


\begin{proof}
Put $v_a=[1+u_a]_+$, $U=\sum_av_a$, and
$L=\sum_{a\neq a^*}v_a$. Since $\sum_au_a=0$, $U\geq K$.
Also $u_r\geq0$ and $v_{a^*}\leq v_r\leq L$, so $L\geq U/2\geq K/2$.
Expanding the definition of $C$ gives
\[
\begin{aligned}
\delta(u)
&=(S-s_{a^*})(U-K)+(s_{a^*}-s_r)(v_r-1)\\
&\quad+\sum_{a\notin\{a^*,r\}}(s_{a^*}-s_a)v_a.
\end{aligned}
\]
For every $a\neq a^*$,
$s_{a^*}-s_a=S(\overline s_{a^*}-\overline s_a)
\geq S(1-2\nu)\geq1-2\nu$, because $S\geq1$.
All remaining terms are nonnegative, and hence
\[
\delta(u)\geq(1-2\nu)(L-1)
\geq\frac{(1-2\nu)(K-2)}2\geq\frac14.
\]
\end{proof}

\begin{lemma}[Conditional second moment]\label{lemma:low_noise_second_moment}
Let $r_t$ be the deterministic prediction of Algorithm~\ref{algo:cstnd_hinge_loss}.
The importance-weighted projected subgradient satisfies
\[
\mathbb E_t\|\widehat G_t^{\mathrm{proj}}\|_F^2
\leq\frac{N^3\rho^2}{\gamma_t}.
\]
If $r_t=a_t^*$, then the sharper bound
\[
\mathbb E_t\|\widehat G_t^{\mathrm{proj}}\|_F^2
\leq2N^2\rho^2+\frac{\nu N^3\rho^2}{\gamma_t}
\]
holds.
\end{lemma}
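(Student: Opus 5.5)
The plan is to write the estimator's projected subgradient explicitly, condition on the sampled action $y'_t$, and sum over actions weighted by $q_t$. On bandit rounds with $K_t\geq3$, $\widehat\Phi_t$ is a sum of at most one importance-weighted term. On the classification side, $y'_t\in[n]$ contributes only when $y'_t=y_t$, with weight $1/q_t(y'_t)$. On the expert side, $y'_t=n+j$ contributes $(1-c_{t,j})/q_t(n+j)$. Let $G_{t,a}$ denote a subgradient of $\widetilde\Phi_{t,a}$ at $\widetilde W_t|_{\mathcal K_{A_t}}$. Then
\[
\widehat G_t^{\mathrm{proj}}=\frac{w_{t,a}}{q_t(a)}\,\Pi_{\mathcal K_{A_t}}G_{t,a}\quad\text{when }y'_t=a,
\]
with $w_{t,a}=\mathbf 1\{a=y_t\}$ for $a\in[n]$ and $w_{t,n+j}=1-c_{t,j}$. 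Hence
\[
\mathbb E_t\|\widehat G_t^{\mathrm{proj}}\|_F^2=\mathbb E_{y_t}\sum_a \frac{w_{t,a}^2}{q_t(a)}\|\Pi_{\mathcal K_{A_t}}G_{t,a}\|_F^2 .
\]
Since $w_{t,a}^2\leq w_{t,a}$ and $\mathbb E_{y_t}w_{t,a}=s(\overline x_t,a)$, this is at most
\[
\sum_a\frac{s_a}{q_t(a)}\,K_t\rho^2 .
\]
Here I use Lemma~\ref{lemma:proj_grad_bound}; even its crude $K_t\rho^2$ form suffices.

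For the first (general) bound, I use $q_t(a)\geq\gamma_t/K_t$ and $\sum_a s_a=S\leq K_t$. This gives at most $K_t^3\rho^2/\gamma_t\leq N^3\rho^2/\gamma_t$, which is exactly \eqref{eq:general_second_moment}. The $K_t=2$ exact-loss rounds are handled as in the proof of Theorem~\ref{thm:cstnd_bound}. There the subgradient is a sum of at most $1+n_e\leq 2$ terms, each of squared norm at most $K_t\rho^2$. Its squared norm is therefore at most $4K_t\rho^2\leq N^3\rho^2/\gamma_t$, since $\gamma_t\leq1$ and $N\geq 2$.

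For the sharper bound when $r_t=a_t^*$, I split the sum into the term $a=a_t^*$ and the rest. For $a=a_t^*$ the learner plays this action with probability $q_t(a_t^*)\geq 1-\gamma_t+\gamma_t/K_t\geq 1/2$, since $\gamma_t\leq1/2$. That term is thus at most $2s_{a^*}K_t\rho^2\leq 2S K_t\rho^2$. To reach the claimed $2N^2\rho^2$ I need $S K_t\leq N^2$, which holds because $S\leq K_t\leq N$. For $a\neq a_t^*$ I use $q_t(a)\geq\gamma_t/K_t$. The concentrated-score hypothesis gives $\sum_{a\neq a^*}s_a=S(1-\overline s_{a^*})\leq S\nu$. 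The remaining terms are then at most
\[
\frac{K_t}{\gamma_t}\,S\nu\,K_t\rho^2\leq\frac{\nu N^3\rho^2}{\gamma_t}.
\]
Adding the two pieces gives the stated bound.

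The main obstacle is the bookkeeping at the start: checking that the estimator really has a single active importance-weighted term per round. On a direct-classification draw only $v_{t,0}$ can be nonzero, and on an expert draw only the matching $v_{t,j}$ can be. This is what makes the cross terms vanish and the second moment a weighted sum rather than a square of sums. I also need $w^2\leq w$ and the unbiasedness of $w_{t,a}$ for $s_a$ after averaging over $y_t$; the latter holds because $\mathbb E[1-c_j(x_t,Y)]=s(\overline x_t,n+j)$ by the definition of $s$.
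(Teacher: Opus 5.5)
Your proposal is correct and follows the paper's proof essentially step for step. It uses the per-action bound $K_t\rho^2$ from Lemma~\ref{lemma:proj_grad_bound}, reduces to $K_t\rho^2\sum_a s(\overline x_t,a)/q_t(a)$ via the single nonzero importance weight with $w^2\le w$ and $\mathbb E_{y_t}w_{t,a}=s(\overline x_t,a)$, and then applies $q_t(a)\ge\gamma_t/K_t$, $S\le K_t$, $q_t(a_t^*)\ge 1/2$ and $\sum_{a\ne a_t^*}s(\overline x_t,a)\le\nu S$. You simply spell out the single-active-term bookkeeping that the paper compresses into ``conditional unbiasedness.'' One small slip: on $K_t=2$ rounds, $n\ge 2$ forces $A_t=\emptyset$, so the exact subgradient has a single term rather than ``$1+n_e\le 2$'' terms, but your resulting bound remains valid.
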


\begin{proof}
For each active target action $a$, the corresponding projected hinge subgradient
has squared Frobenius norm at most $K_t\rho^2$. Conditional unbiasedness and
$(1-c_j)^2\leq1-c_j$ give
\[
\mathbb E_t\|\widehat G_t^{\mathrm{proj}}\|_F^2
\leq K_t\rho^2\sum_{a\in\overline{\mathcal Y}_{A_t}}
\frac{s(\overline x_t,a)}{q_t(a)}.
\]
The first bound follows from $q_t(a)\geq\gamma_t/K_t$ and
$S(\overline x_t)\leq K_t$. If $r_t=a_t^*$, then $q_t(a_t^*)\geq1/2$ and
\[
\sum_{a\neq a_t^*}s(\overline x_t,a)
\leq\nu S(\overline x_t)\leq\nu K_t,
\]
which gives the second bound.
\end{proof}

\theoremLowNoise*

\begin{proof}
Let $\lambda=(4N^3\rho^2)^{-1}$, so that $\eta_t=\lambda\gamma_t$.
Let $u_t$ be the active zero-sum score vector and $r_t\in\arg\max_a u_{t,a}$.
Since $B\geq N$, the construction in Appendix~\ref{appendix:bound} places every
canonical vector in $\mathfrak H_B$ pointwise. Thus
\[
\mathcal C^*_{\Phi_{\mathrm{def}},\mathfrak H_B}(\overline x_t)
=K_t\bigl(S_t-s_{t,a_t^*}\bigr),
\qquad
\mathcal C^*_{\ell_{\mathrm{def}},\mathfrak H_B}(\overline x_t)
=1-s_{t,a_t^*},
\]
where $S_t=S(\overline x_t)$ and $s_{t,a}=s(\overline x_t,a)$. Consequently,
\[
\delta_t
=\Delta\mathcal C_{\Phi_{\mathrm{def}},\mathfrak H_B}(u_t,\overline x_t)
-\Delta\mathcal C_{\ell_{\mathrm{def}},\mathfrak H_B}(r_t,\overline x_t)
=C(u_t)-K_t(S_t-s_{t,a_t^*})-(s_{t,a_t^*}-s_{t,r_t}).
\]
Randomized exploration increases conditional true loss by at most $\gamma_t$.
Summing this identity, using
$\mathcal M_{\Phi_{\mathrm{def}},\mathfrak H_B}=0$, dropping the nonnegative
true minimizability gap, and applying projected OGD yields
\begin{equation}\label{eq:low_noise_master}
\mathbb E R_{\ell_{\mathrm{def}},\mathfrak H_B}(T)
\leq \frac{2B^2}{\eta_T}
+\mathbb E\!\left[\sum_{t=1}^T\left(\frac{\eta_t}{2}
\mathbb E_t\|\widehat G_t^{\mathrm{proj}}\|_F^2-\delta_t\right)\right]
+\sum_{t=1}^T\gamma_t,
\end{equation}
On a round with $K_t\geq3$ and $r_t\neq a_t^*$,
Lemmas~\ref{lemma:low_noise_slack} and~\ref{lemma:low_noise_second_moment} give
\[
\frac{\eta_t}{2}\mathbb E_t\|\widehat G_t^{\mathrm{proj}}\|_F^2-\delta_t
\leq\frac18-\frac14\leq0.
\]
When $K_t=2$, no expert is active and binary correctness feedback reveals the
label; using the exact full-feedback gradient contributes at most
$2\lambda\rho^2\kappa\sqrt T$ in total.

On rounds with $r_t=a_t^*$, Lemma~\ref{lemma:low_noise_second_moment},
$\sum_{t=1}^T\gamma_t\leq2\kappa\sqrt T$, and $\nu=T^{-1/2}$ imply
\[
\frac12\sum_{t:r_t=a_t^*}\eta_t\mathbb E_t\|\widehat G_t^{\mathrm{proj}}\|_F^2
\leq 2\lambda N^2\rho^2\kappa\sqrt T
+\frac{\lambda N^3\rho^2}{2}\sqrt T.
\]
Finally,
\[
\frac{2B^2}{\eta_T}
\leq16B^2N^3\rho^2+\frac{8B^2N^3\rho^2}{\kappa}\sqrt T.
\]
Substituting $\kappa=BN^{3/2}\rho$ in \eqref{eq:low_noise_master} gives
\[
\mathbb E R_{\ell_{\mathrm{def}},\mathfrak H_B}(T)
=O\!\left(BN^{3/2}\rho\sqrt T+B^2N^3\rho^2\right),
\]
as claimed.
\end{proof}

\subsection{Analysis of the minimizability gap}
\label{appendix:minimizability_gap}

\begin{restatable}{theorem}{theoremMinimizabilityGap}(Characterization of minimizability gaps)
\label{thm:minimizability_gap}
Assume that every $h\in\mathcal H$ has zero active score sum and that, for
every $\overline x=(x,A)$, the class $\mathcal H$
contains a hypothesis whose active score vector equals
$|\overline{\mathcal Y}_A|-1$ on a maximizer $y_{\max}$ of
$s(\overline x,\cdot)$ and equals $-1$ on every other active action. Then, for
any sequence of inputs \(\overline{\mathbf{x}} = (\overline{x}_1, \overline{x}_2, \dots, \overline{x}_T)\) and any distribution, the minimizability gap is
characterized as follows:
\[
\mathcal{M}_{\Phi_{\text{def}}, \mathcal{H}}(\overline{\mathbf{x}})
= \mathcal{R}^{*}_{\Phi_{\text{def}}, \mathcal{H}}(\overline{\mathbf{x}}) - \sum_{t \in [T]} |\overline{\mathcal{Y}}_{A_t}| \big(S(\overline{x}_t) - s(\overline{x}_t, y_{t,\max})\big).
\]
\end{restatable}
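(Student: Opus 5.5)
By definition, $\mathcal M_{\Phi_{\mathrm{def}},\mathcal H}(\overline{\mathbf x})=\mathcal R^*_{\Phi_{\mathrm{def}},\mathcal H}(\overline{\mathbf x})-\sum_{t}\mathcal C^*_{\Phi_{\mathrm{def}},\mathcal H}(\overline x_t)$. The theorem therefore reduces to the pointwise identity
\[
\mathcal C^*_{\Phi_{\mathrm{def}},\mathcal H}(\overline x)=|\overline{\mathcal Y}_A|\bigl(S(\overline x)-s(\overline x,y_{\max})\bigr)
\]
for every $\overline x=(x,A)$. I would prove this identity by computing the conditional surrogate risk explicitly, lower-bounding it over zero-sum score vectors, and then showing that the canonical hypothesis assumed to lie in $\mathcal H$ attains the bound.

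\textbf{Rewriting the risk.} Fix $\overline x$ and write $K=|\overline{\mathcal Y}_A|$, $s_a=s(\overline x,a)$, $S=S(\overline x)$, and $u_a=h(\overline x,a)$ for $a\in\overline{\mathcal Y}_A$. By Lemma~\ref{lemma:cali_gap_L},
\[
\mathcal C_{\Phi_{\mathrm{def}}}(h,\overline x)=\sum_{y}s_y\sum_{a\neq y}[1+u_a]_+ .
\]
Exchanging the order of summation gives
\[
\mathcal C_{\Phi_{\mathrm{def}}}(h,\overline x)=C(u):=\sum_{a\in\overline{\mathcal Y}_A}(S-s_a)[1+u_a]_+ ,
\]
which is the same function $C$ used in Lemma~\ref{lemma:low_noise_slack}. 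The weights $S-s_a$ are nonnegative since every $s_a\geq0$. This holds because $c_j\le1$ forces $s(\overline x,n+j)\geq0$.

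\textbf{Lower bound.} Write $v_a=[1+u_a]_+\ge 1+u_a$, with $v_a\geq0$. The zero-sum constraint then gives $\sum_a v_a\ge K$. Since $S-s_a\ge S-s_{y_{\max}}$ for every $a$,
\[
C(u)\ge (S-s_{y_{\max}})\sum_a v_a\ge K(S-s_{y_{\max}}).
\]

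\textbf{Attainment.} Take the hypothesis assumed to exist, with score $K-1$ on $y_{\max}$ and $-1$ elsewhere. Then $v_{y_{\max}}=K$ and $v_a=0$ for $a\neq y_{\max}$, so $C(u)=(S-s_{y_{\max}})K$. Hence the infimum over $\mathcal H$ equals $K(S-s_{y_{\max}})$. Substituting this pointwise value into the definition of the gap and summing over $t$ yields the displayed formula.

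The main point needing care is the reindexing step. I must check that $\widetilde\Phi_{01}$ restricted to active labels is exactly the constrained hinge sum over $\overline{\mathcal Y}_A\setminus\{y\}$, so that inactive experts contribute nothing. I must also check the sign convention $\widetilde\Phi_{\mathrm{hinge}}(-u)=[1+u]_+$. Neither step looks difficult.
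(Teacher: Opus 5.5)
Your proposal is correct and follows the paper's proof essentially step for step. Both arguments reduce to the pointwise identity for $\mathcal C^*_{\Phi_{\mathrm{def}},\mathcal H}(\overline x)$, rewrite the conditional risk via Lemma~\ref{lemma:cali_gap_L} as $\sum_a (S-s_a)[1+u_a]_+$, lower-bound it using $S-s_a\geq S-s_{y_{\max}}$ together with the zero-sum constraint, and attain the bound with the canonical score vector. The only cosmetic difference is that you get $\sum_a[1+u_a]_+\geq K$ directly from $[1+u_a]_+\geq 1+u_a$, whereas the paper splits the actions into those with score $\leq -1$ and those with score $>-1$; the two arguments are equivalent.
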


\begin{proof}
By Lemma~\ref{lemma:cali_gap_L} we can write the conditional surrogate \(\Phi_{\text{def}}\)-risk as follows:
\begin{equation*}
    \begin{split}
        \mathcal{C}_{\Phi_{\text{def}}}(h,\overline{x}) & = \sum_{y \in \overline{\mathcal{Y}}_A} s(\overline x,y) \widetilde\Phi_{01}(h,\overline x,y) \\
        & = \sum_{y \in \overline{\mathcal{Y}}_A} s(\overline x,y) \left( \sum_{y' \neq y, y' \in \overline{\mathcal{Y}}_A} \max \{0, 1 + h(\overline{x},y')\}\right) \\
        & = \sum_{y \in \overline{\mathcal{Y}}_A}\max \{0, 1 + h(\overline{x},y)\}(S(\overline{x}) - s(\overline{x},y)),
    \end{split}
\end{equation*}

where \(\sum_{y \in \overline{\mathcal{Y}}_A} h(\overline{x}, y) = 0\). Define \(\mathcal{Y}_{\leq -1} = \{y \in \overline{\mathcal{Y}}_A: h(\overline{x},y) \leq -1\}\) and \(\mathcal{Y}_{> -1} = \overline{\mathcal{Y}}_A \setminus \mathcal{Y}_{\leq -1}\). We can rewrite the conditional surrogate risk as

\[
\mathcal{C}_{\Phi_{\text{def}}}(h,\overline{x}) = \sum_{y \in \mathcal{Y}_{> -1}}\big(1 + h(\overline{x},y)\big) \big(S(\overline{x}) - s(\overline{x},y)\big).
\]

Define \(y_{\max} = \arg\max_{y \in \overline{\mathcal{Y}}_{A}} s(\overline{x}, y)\), we have \(S(\overline{x}) - s(\overline{x}, y) \geq S(\overline{x}) - s(\overline{x}, y_{\max})\) and 
\[
\sum_{y \in \mathcal{Y}_{> -1}}\big(1 + h(\overline{x},y)\big) = |\mathcal{Y}_{> -1}| - \big(\sum_{y \in \mathcal{Y}_{\leq -1}} h(\overline{x},y)\big) \geq |\mathcal{Y}_{> -1}| + |\mathcal{Y}_{\leq -1}| = |\overline{\mathcal{Y}}_A|.
\]

Combining the two aforementioned inequalities gives us
\[
\mathcal{C}_{\Phi_{\text{def}}}(h,\overline{x}) \geq |\overline{\mathcal{Y}}_A| \big(S(\overline{x}) - s(\overline{x}, y_{\max})\big).
\]

Equality occurs when \(h(\overline{x}, y) = \begin{cases}
    |\overline{\mathcal{Y}}_A| - 1 & \text{if } y = y_{\max} \\
    -1 & \text{otherwise}.
\end{cases}\)

Note that there exists such \(h\) in \(\mathcal{H}\), which ends our proof.

\end{proof}

\section{Experiments}
\label{appendix:experiments}

\subsection{Details of Experiment Settings}
\label{appendix:exp_settings}

\newcolumntype{C}[1]{>{\centering\arraybackslash}m{#1}}
\begin{table}[H]
\centering
\caption{Characteristics of the three experimental settings.}
\begin{tabularx}{0.88\textwidth}{C{0.18\textwidth}C{0.20\textwidth}X}
\toprule
\textbf{Varying expertise} & \textbf{Varying availability} & \textbf{Description} \\
\midrule
\ding{55} & \ding{55} & Expertise and availability are fixed. \\
\ding{55} & \ding{51} & Expertise is fixed and availability drifts. \\
\ding{51} & \ding{51} & Both expertise and availability drift. \\
\bottomrule
\end{tabularx}
\label{tab:expertise_availability}
\end{table}

\subsubsection{Setting 1: Fixed Availability and Expertise}

Here, all experts are continuously available, and each expert’s domain of competence remains stationary throughout the learning horizon. That is each expert achieves perfect accuracy on its region of expertise, but is uniformly random outside of it. This setting corresponds to the standard assumption in classical L2D frameworks \citep{madras2018predict, mozannar2021consistent}, but we study it here under the online learning protocol, where data arrives sequentially.

A representative example is a clinical decision-support system that integrates several automated diagnostic models, each specialized for a distinct imaging modality (e.g., chest X-rays, MRIs, or CT scans).  
These expert models remain permanently accessible and their competence boundaries do not change over time, while the system continuously processes incoming patient data.  
This setup reflects realistic deployments where the expert pool is fixed but the input stream evolves, such as hospital triage platforms or other continuous-monitoring systems.

\subsubsection{Setting 2: Drifting Availability and Fixed Expertise}

Let $G_{j,t}\in\{0,1\}$ denote whether expert $g_j$ is available at round $t$. Availability is sampled through $G_{j,t}\sim \text{Bern}(p_{j,t})$, with $p_{j,t}$ following a random walk initialized at $0.7$ with standard deviation $\sigma=2\times 10^{-3}$ on $[0,1]$.

A practical realization of this setting occurs in multi-model inference systems deployed under dynamic resource constraints.  
For example, a pool of pre-trained expert networks—each specialized for a distinct input modality or domain—may share a limited GPU or compute budget.  
At any given time, only a subset of models can be queried due to bandwidth or scheduling limitations, even though their predictive functions remain unchanged.  
Similar behavior appears in edge-device ensembles, where network connectivity or energy constraints temporarily restrict access to otherwise stable models.

\subsubsection{Setting 3: Drifting Availability and Expertise}

Expert availability evolves as before, following the same random-walk process. 
Meanwhile, each expert's region of expertise drifts gradually over time. 
We fix both the initial and final regions, separated by $10^5$ rounds. 
With these boundaries fixed, the experts' regional accuracies evolve according to a Brownian bridge.
This construction models environments where both participation and competence vary over time while preserving the overall structure of the expert population.  

A concrete analogue arises in collaborative clinical decision-support systems.  
Clinicians differ in their availability due to shifts or workload, and their diagnostic expertise naturally evolves as they gain experience, adapt to new medical guidelines, or incorporate feedback from prior cases.  
The learning system must therefore adapt continuously to changing access to human experts and to gradual shifts in their decision boundaries.  
This dynamic also captures other human–AI ecosystems—such as annotation pipelines or hybrid moderation teams—where both availability and skill levels evolve throughout deployment.

\subsection{Details of Synthetic Datasets Experiments}
\label{appendix:synthetic_dataset}

We evaluate our approach on synthetic datasets with \( n_e = 3 \) experts and \( n = 6 \) classes. Following the SynSep dataset in \cite{kakade2008efficient,van2021beyond}, we generate inputs \( x_t \in \mathbb{R}^d \) with \( d = 120 \). Each input–label pair \( (x_t, y_t)_{t=1}^{T} \) is sampled from one of \( n \) linearly separable clusters \( \Omega_y = \{(x_t, y_t): y_t = y\} \).

To introduce label noise, for each \( x_t \in \Omega_y \) we flip the true label to a uniformly random alternative class in \( [n] \setminus \{y\} \) with probability \( p_{y,t} \). The noise vector at round \( t \) is \( \mathbf{p}_t = (p_{y,t})_{y \in [n]} \). We initialize \( \mathbf{p}_0 = [0.3, 0.3, 0.3, 0.3, 0.0, 0.0] \) and evolve it via a random walk with Gaussian perturbations of mean \( 0 \) and standard deviation \( \sigma = 2 \times 10^{-3} \).

At initialization \(t=0\), expert \(g_1\) is knowledgeable on classes \(\{1,2\}\) and predicts the \emph{post-noise} labels correctly on inputs from \(\Omega_1\) and \(\Omega_2\), while predicting uniformly at random on other clusters. Expert \(g_2\) is knowledgeable on \(\{3,4\}\) with the same behavior. Expert \(g_3\) is a weak expert that predicts uniformly at random on all inputs.

In all settings, we run our algorithm with learning rate \(\eta_t = \frac{\eta_0}{\sqrt{t}}\) where \(\eta_0\) is tuned to \(5 \times 10^{-4}\). The exploration is fixed to \(\gamma_t = \min\{\frac{1}{2}, \frac{10}{\sqrt{t}}\}\).
All reported true deferral losses use the normalized costs defined in Section~\ref{section:problem_formulation}. The random baseline samples uniformly from the active set of classifier and available-expert actions at each round.

In the first two settings, where expertise remains constant, the lowest-cost active action is an available knowledgeable expert on regions \(\{1,2,3,4\}\) and the classifier on regions \(\{5,6\}\); if the knowledgeable expert is unavailable, the learner selects among the remaining active actions. In the third setting, the experts' expertise regions gradually drift among \(\{1,2,3,4\}\).

\paragraph{Setting 1: Fixed Availability and Expertise.} We report the results of this setting in Figure~\ref{fig:syn_exp1}. Accuracies of experts are given in Table~\ref{tab:average_acc_setting_1}. These unconditional averages are lower than the accuracies on queried rounds in Figure~\ref{fig:syn_exp1_expert_acc_queried}; the queried-round curves describe which experts the learned policy selects.

\begin{table}[H]
\centering
\caption{Average Accuracies of Experts for setting 1. }
\renewcommand{\arraystretch}{1.4}
\begin{tabular}{cccc}
\toprule
& \textbf{Expert $g_1$} & \textbf{Expert $g_2$} & \textbf{Expert $g_3$} \\
\midrule
Accuracy (\%) & $45.55$ & $45.58$ & $16.66$ \\
\bottomrule
\end{tabular}
\label{tab:average_acc_setting_1}
\end{table}


Our method's predictive accuracy increases while its normalized true deferral loss decreases over the observed rounds (Figures~\ref{fig:syn_exp1_mean_acc} and~\ref{fig:syn_exp1_mean_def_loss}).
Figures~\ref{fig:syn_exp1_expert_acc_queried} and~\ref{fig:syn_exp1_def_ratio} show that experts \(g_1\) and \(g_2\) are queried most frequently and have high accuracy on selected rounds.
In contrast, the weaker expert \(g_3\) has both lower queried accuracy and a small deferral ratio.
As expected in this fixed setting, expert availability remains constant across all rounds (Figure~\ref{fig:syn_exp1_expert_avail}).

\begin{figure}[H]
    \centering
    \captionsetup{font=footnotesize}
    \begin{subfigure}[t]{0.32\textwidth}
        \centering
        \includegraphics[height=3.2cm]{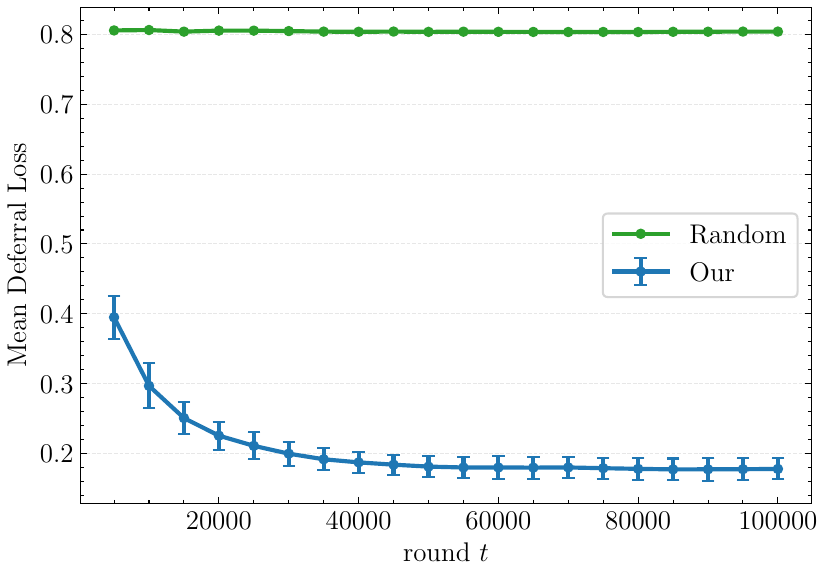}
        \caption{Average true deferral loss.}
        \label{fig:syn_exp1_mean_def_loss}
    \end{subfigure}\hfill
    \begin{subfigure}[t]{0.32\textwidth}
        \centering
        \includegraphics[height=3.2cm]{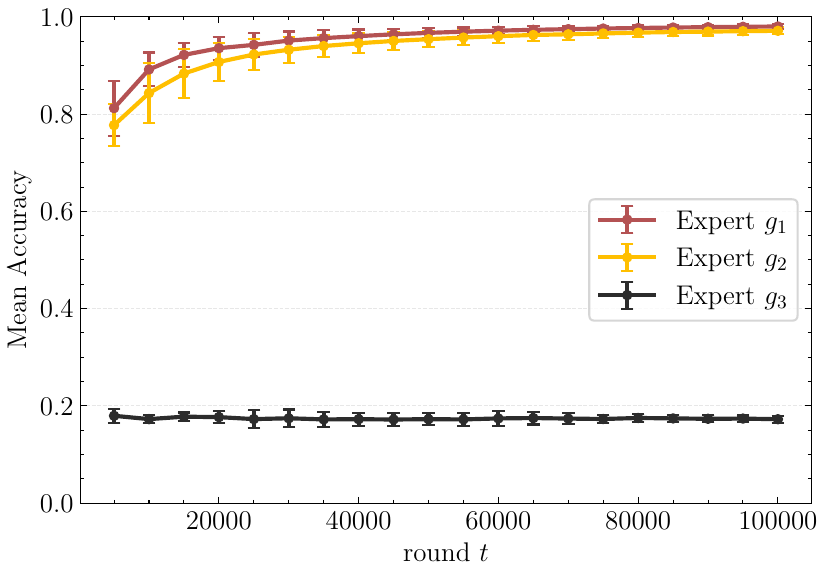}
        \caption{Average accuracy of the experts queried by the algorithm.}
        \label{fig:syn_exp1_expert_acc_queried}
    \end{subfigure}\hfill
    \begin{subfigure}[t]{0.32\textwidth}
        \centering
        \includegraphics[height=3.2cm]{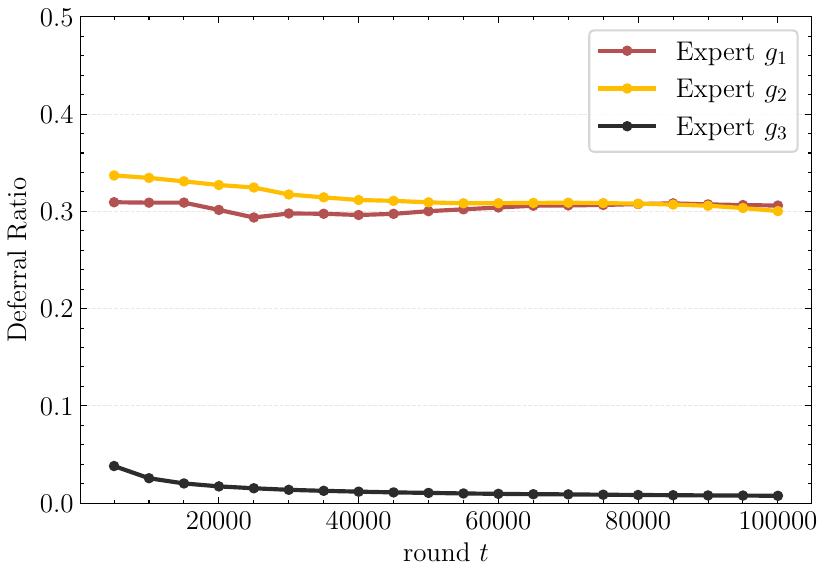}
        \caption{Average expert deferral ratio.}
        \label{fig:syn_exp1_def_ratio}
    \end{subfigure}

    \vspace{0.35em}

    \begin{subfigure}[t]{0.48\textwidth}
        \centering
        \includegraphics[height=3.2cm]{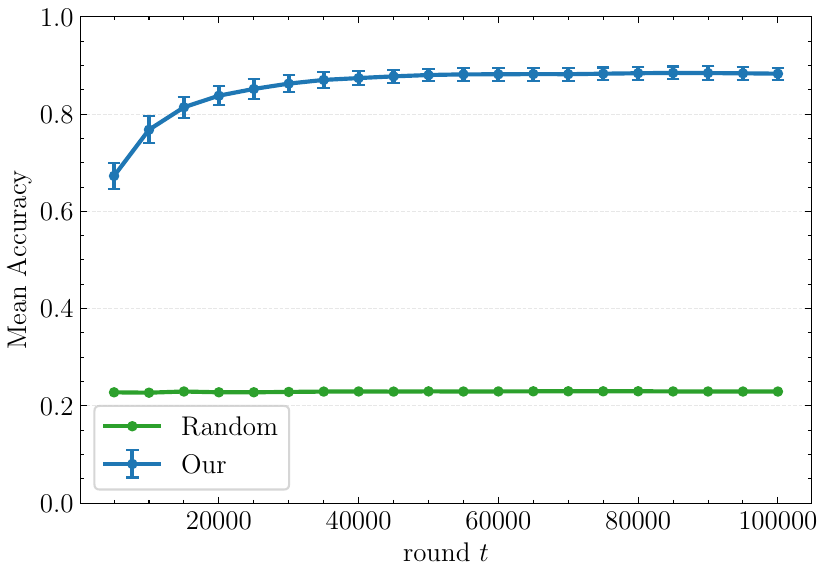}
        \caption{Average accuracy. }
        \label{fig:syn_exp1_mean_acc}
    \end{subfigure}\hfill
    \begin{subfigure}[t]{0.48\textwidth}
        \centering
        \includegraphics[height=3.2cm]{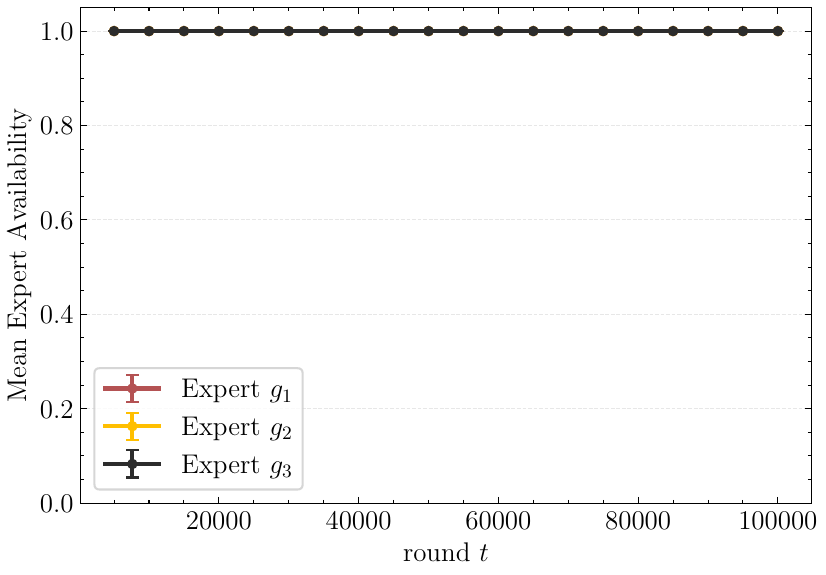}
        \caption{Expert availability. }
        \label{fig:syn_exp1_expert_avail}
    \end{subfigure}

    \caption{Results of the synthetic experiment for \textbf{setting 1: fixed availability and expertise}. 
    Error bars represent standard deviations across five independent runs.}
    \label{fig:syn_exp1}
\end{figure}

\paragraph{Setting 2: Drifting Availability and Fixed Expertise.} We report results for this setting in Figure~\ref{fig:syn_exp2}. Accuracies of experts are given in Table~\ref{tab:average_acc_setting_2}. 

\begin{table}[H]
\centering
\caption{Average Accuracies of Experts for setting 2. }
\renewcommand{\arraystretch}{1.4}
\begin{tabular}{cccc}
\toprule
& \textbf{Expert $g_1$} & \textbf{Expert $g_2$} & \textbf{Expert $g_3$} \\
\midrule
Accuracy (\%) & $45.55$ & $45.58$ & $16.66$ \\
\bottomrule
\end{tabular}
\label{tab:average_acc_setting_2}
\end{table}


Compared with the previous experiment (Figure~\ref{fig:syn_exp1}), the plotted accuracy decreases slightly under reduced and time-varying expert availability.
Figure~\ref{fig:syn_exp2_def_ratio} reports the resulting deferral ratios, while Figure~\ref{fig:syn_exp2_expert_acc_queried} shows high expert accuracy on the rounds assigned to each expert.

\begin{figure}[H]
    \centering
    \captionsetup{font=footnotesize}
    \begin{subfigure}[t]{0.32\textwidth}
        \centering
        \includegraphics[height=3.2cm]{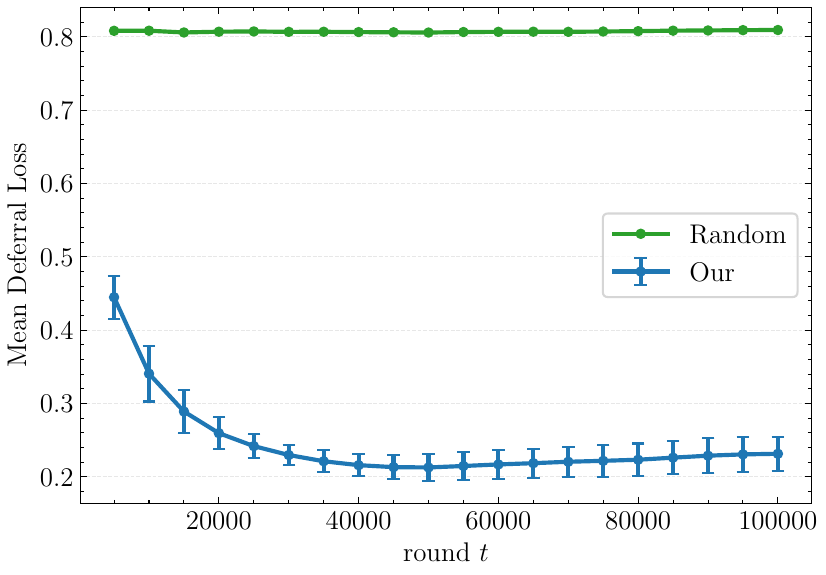}
        \caption{Average true deferral loss.}
        \label{fig:syn_exp2_mean_def_loss}
    \end{subfigure}\hfill
    \begin{subfigure}[t]{0.32\textwidth}
        \centering
        \includegraphics[height=3.2cm]{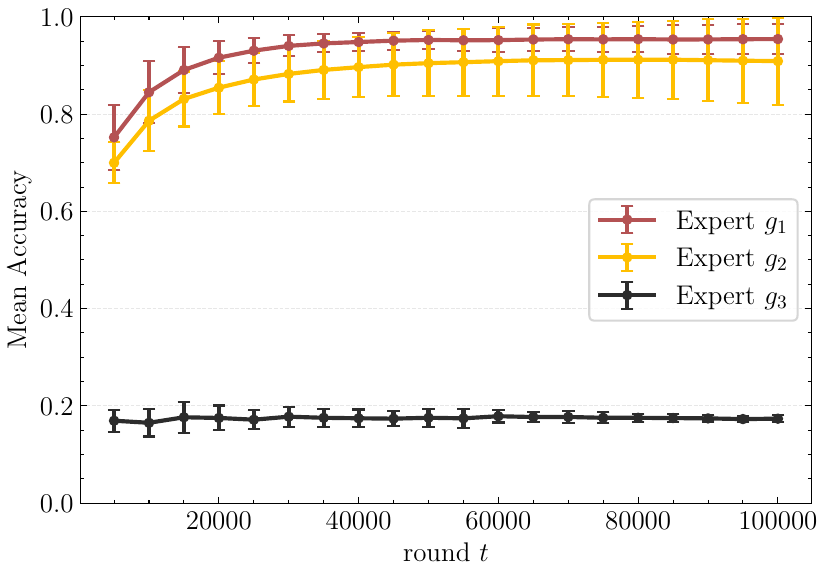}
        \caption{Average accuracy of the experts queried by the algorithm.}
        \label{fig:syn_exp2_expert_acc_queried}
    \end{subfigure}\hfill
    \begin{subfigure}[t]{0.32\textwidth}
        \centering
        \includegraphics[height=3.2cm]{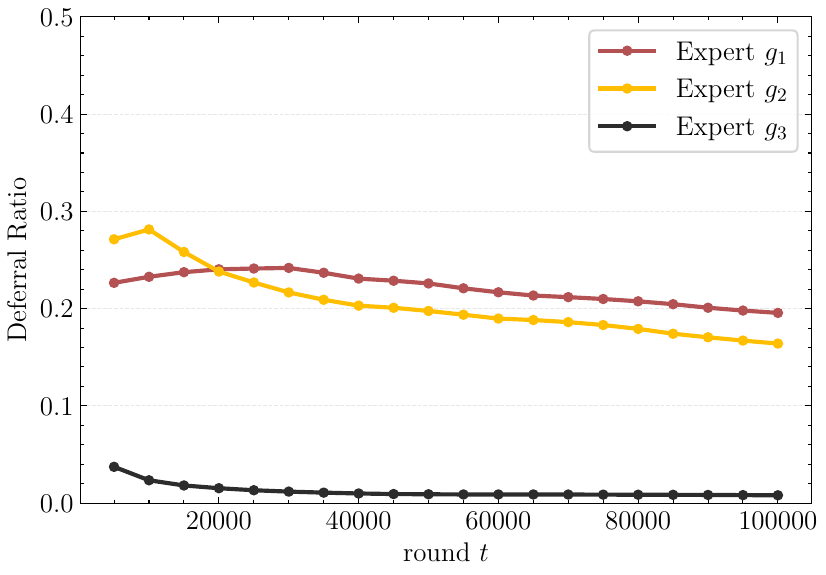}
        \caption{Average expert deferral ratio.}
        \label{fig:syn_exp2_def_ratio}
    \end{subfigure}

    \vspace{0.35em}

    \begin{subfigure}[t]{0.48\textwidth}
        \centering
        \includegraphics[height=3.2cm]{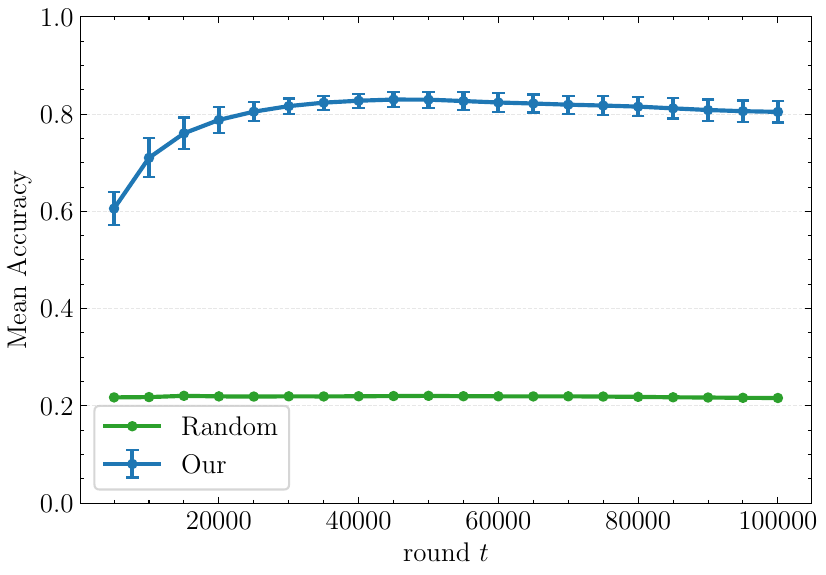}
        \caption{Average accuracy. }
        \label{fig:syn_exp2_mean_acc}
    \end{subfigure}\hfill
    \begin{subfigure}[t]{0.48\textwidth}
        \centering
        \includegraphics[height=3.2cm]{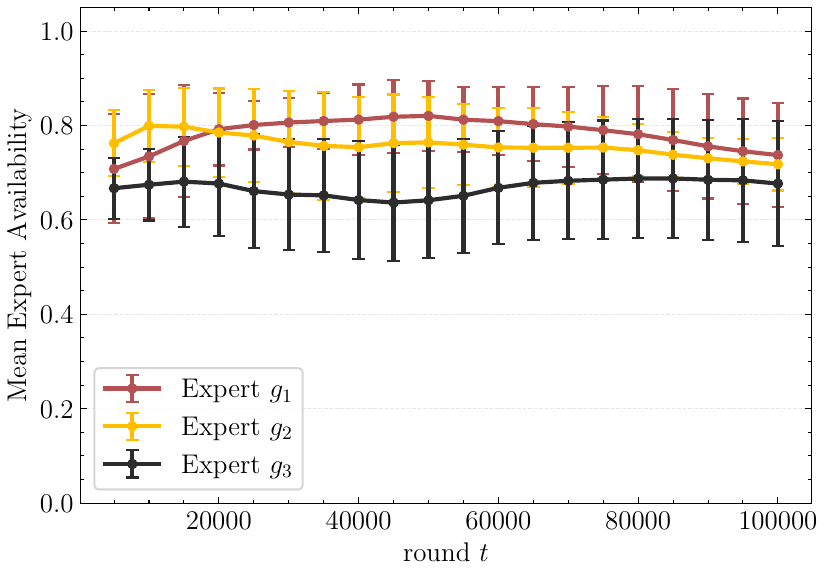}
        \caption{Expert availability. }
        \label{fig:syn_exp2_expert_avail}
    \end{subfigure}

    \caption{Results of the synthetic experiment for \textbf{setting~2: drifting availability and fixed expertise}.  
    Error bars represent standard deviations across five independent runs.}
    \label{fig:syn_exp2}
\end{figure}

\paragraph{Setting 3: Drifting Availability and Expertise} We report results for this setting in Figure~\ref{fig:syn_exp3}. Varying expertise of expert is shown in Figure \ref{fig:syn_exp3_expert_acc_by_region}.  


\begin{figure}[htbp]
  \centering
  \includegraphics[width=0.25\linewidth]{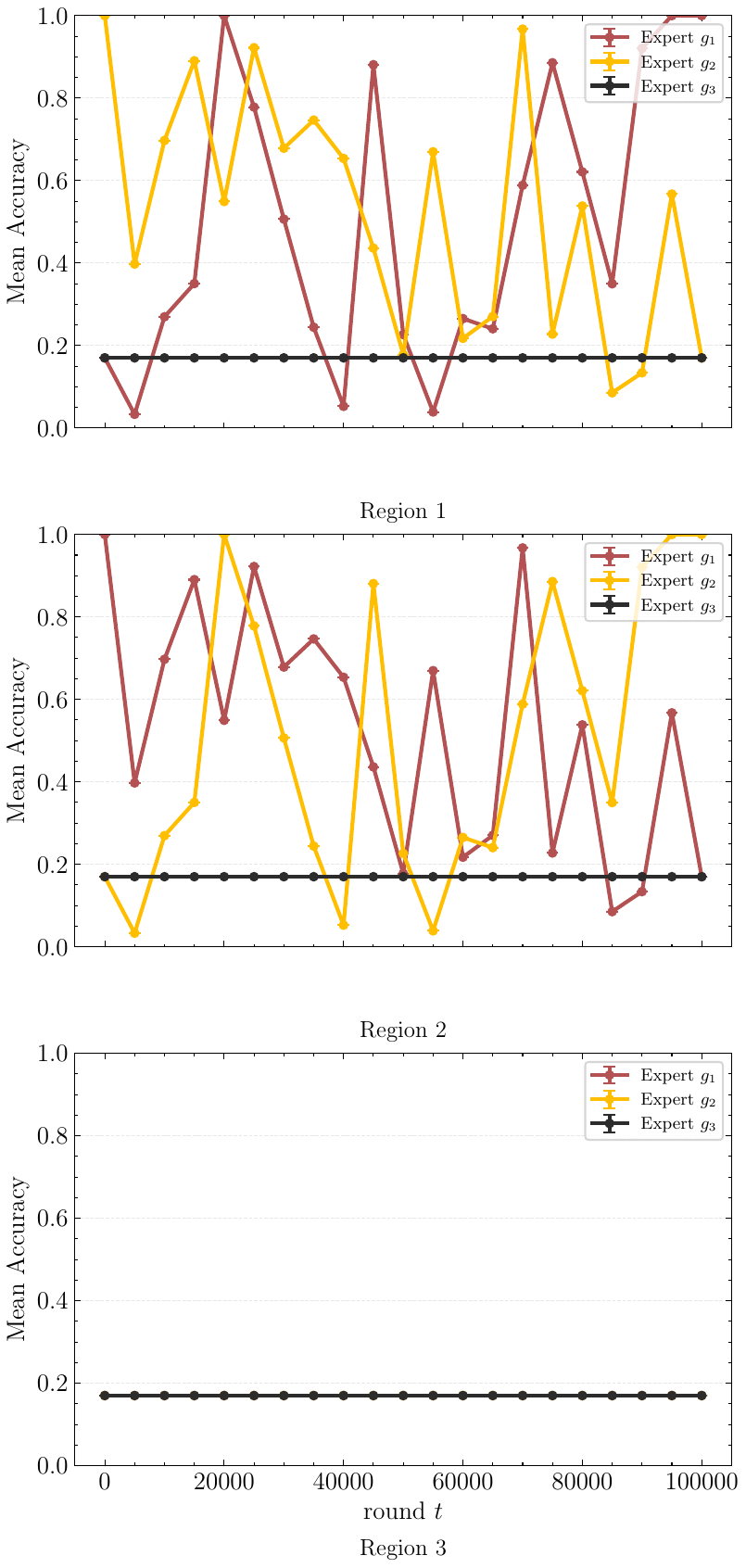}
  \caption{Evolution of the Expert Accuracies by Regions}
  \label{fig:syn_exp3_expert_acc_by_region}
\end{figure}

Figure~\ref{fig:syn_exp3_expert_acc_by_region} shows that expert \(g_1\) gradually shifts its expertise from region~2 to region~1, while expert \(g_2\) moves in the opposite direction, from region~1 to region~2. Expert \(g_3\) has no region of expertise, as it is uniformly random.
The queried-expert accuracy and deferral-ratio curves change with these regional shifts (Figures~\ref{fig:syn_exp3_expert_acc_queried} and~\ref{fig:syn_exp3_def_ratio}).
The corresponding task-accuracy and normalized deferral-loss curves are shown in Figures~\ref{fig:syn_exp3_mean_acc} and~\ref{fig:syn_exp3_mean_def_loss}.

\begin{figure}[H]
    \centering
    \captionsetup{font=footnotesize}
    \begin{subfigure}[t]{0.32\textwidth}
        \centering
        \includegraphics[height=3.2cm]{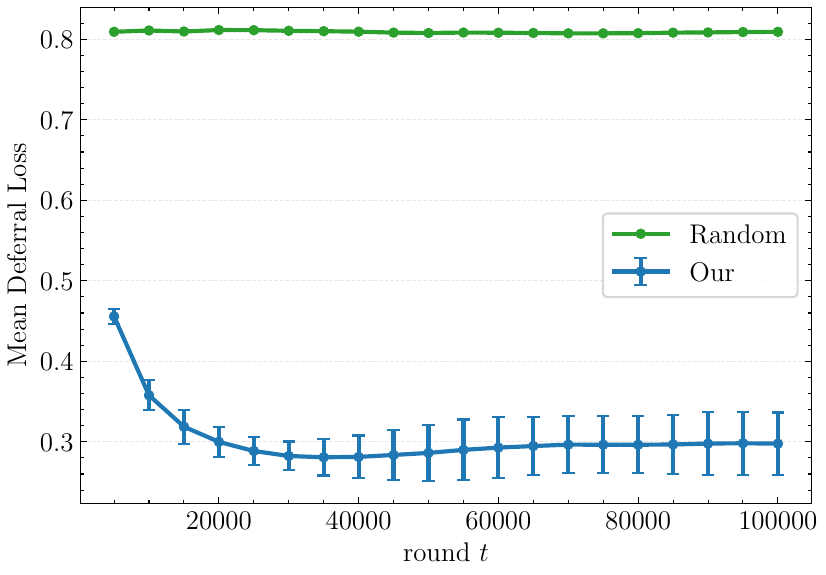}
        \caption{Average true deferral loss. }
        \label{fig:syn_exp3_mean_def_loss}
    \end{subfigure}\hfill
    \begin{subfigure}[t]{0.32\textwidth}
        \centering
        \includegraphics[height=3.2cm]{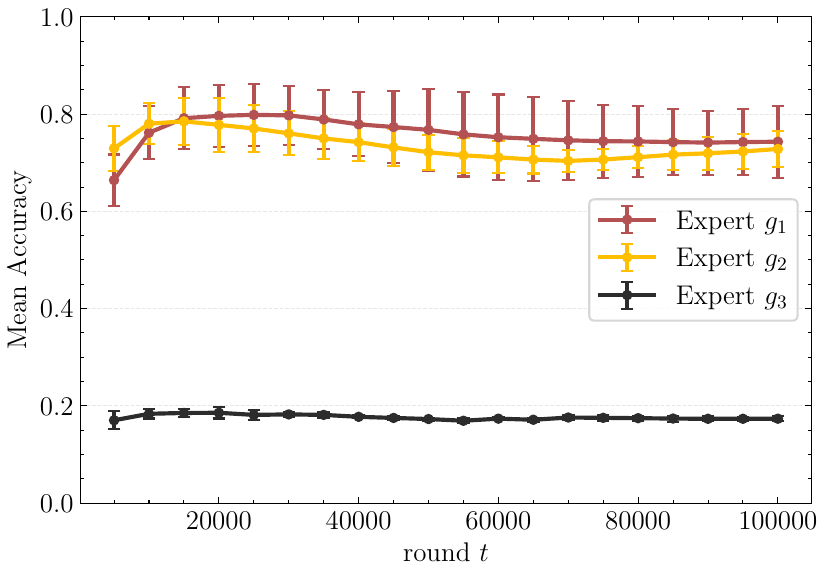}
        \caption{Average accuracy of the experts queried by the algorithm.}
        \label{fig:syn_exp3_expert_acc_queried}
    \end{subfigure}\hfill
    \begin{subfigure}[t]{0.32\textwidth}
        \centering
        \includegraphics[height=3.2cm]{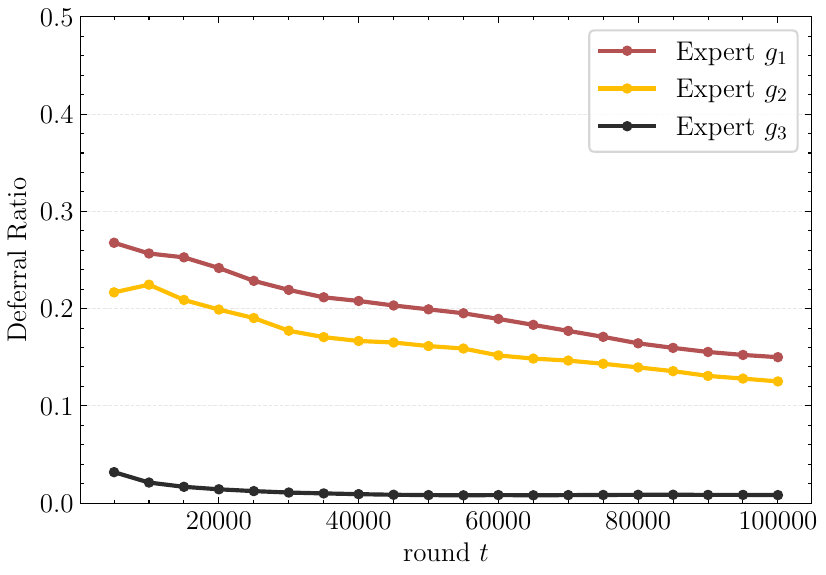}
        \caption{Average expert deferral ratio.}
        \label{fig:syn_exp3_def_ratio}
    \end{subfigure}

    \vspace{0.35em}

    \begin{subfigure}[t]{0.48\textwidth}
        \centering
        \includegraphics[height=3.2cm]{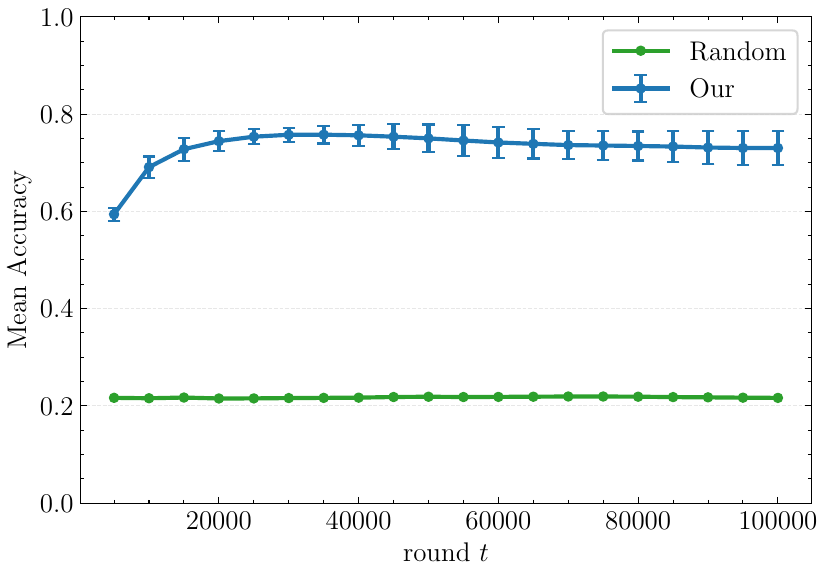}
        \caption{Average accuracy. }
        \label{fig:syn_exp3_mean_acc}
    \end{subfigure}\hfill
    \begin{subfigure}[t]{0.48\textwidth}
        \centering
        \includegraphics[height=3.2cm]{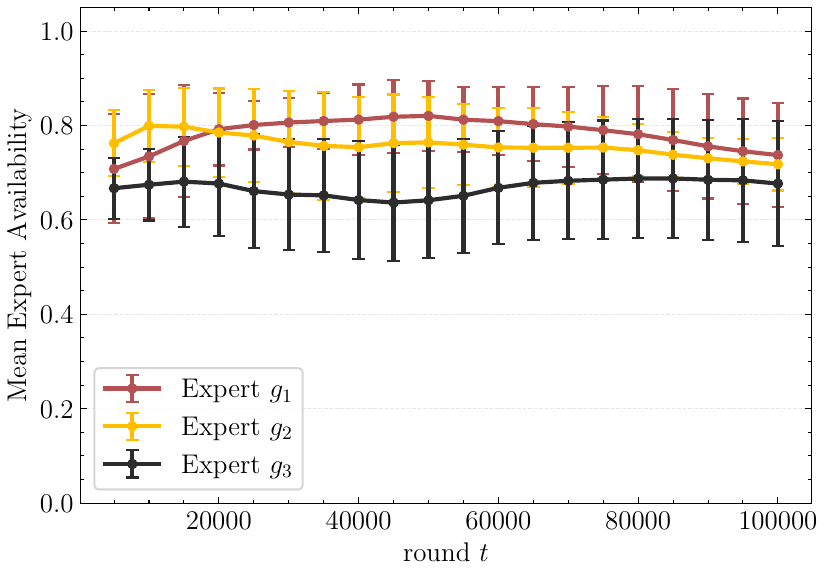}
        \caption{Expert availability. }
        \label{fig:syn_exp3_expert_avail}
    \end{subfigure}

    \caption{Results of the synthetic experiment for \textbf{setting~3: drifting availability and drifting expertise}. Whiskers denote standard deviations computed over 5 independent runs.
    }
    \label{fig:syn_exp3}
\end{figure}

\subsection{Details of Reuters4 Dataset Experiments}
\label{appendix:real_world_dataset}

\paragraph{Setting 1: Fixed Availability and Expertise.} The results are summarized in Figure~\ref{fig:real_exp1}. Accuracies of experts are given in Table~\ref{tab:average_acc_setting_1_reuters}. The plotted accuracy rises faster than the random baseline, while queried rounds have high expert accuracy and substantial expert deferral ratios.

\begin{table}[H]
\centering
\caption{Average Accuracies of Experts for setting 1 in Reuters4. }
\renewcommand{\arraystretch}{1.4}
\begin{tabular}{cccc}
\toprule
& \textbf{Expert $g_1$} & \textbf{Expert $g_2$} & \textbf{Expert $g_3$} \\
\midrule
Accuracy (\%) & $62.21$ & $61.09$ & $50.01$ \\
\bottomrule
\end{tabular}
\label{tab:average_acc_setting_1_reuters}
\end{table}

\begin{figure}[ht]
    \centering
    \captionsetup{font=footnotesize}
    \begin{subfigure}[t]{0.32\textwidth}
        \centering
        \includegraphics[height=3.2cm]{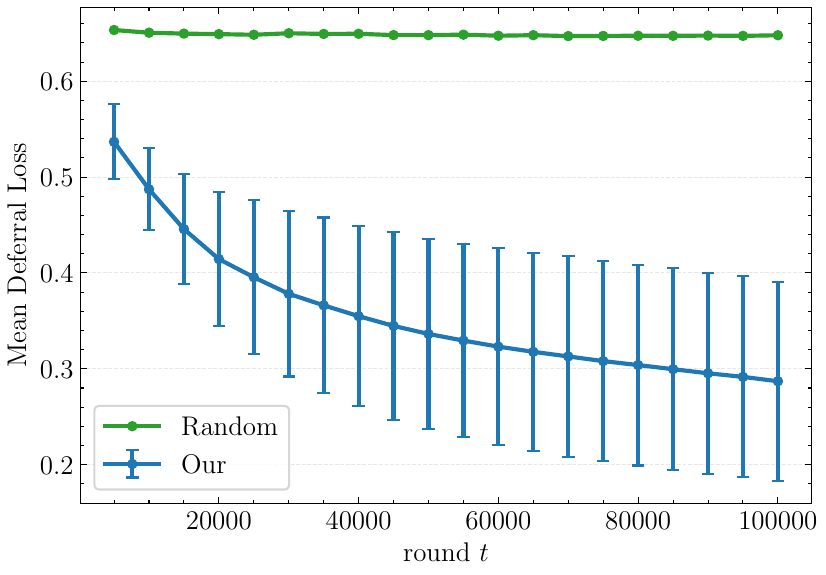}
        \caption{Average true deferral loss. }
        \label{fig:real_exp1_mean_def_loss}
    \end{subfigure}\hfill
    \begin{subfigure}[t]{0.32\textwidth}
        \centering
        \includegraphics[height=3.2cm]{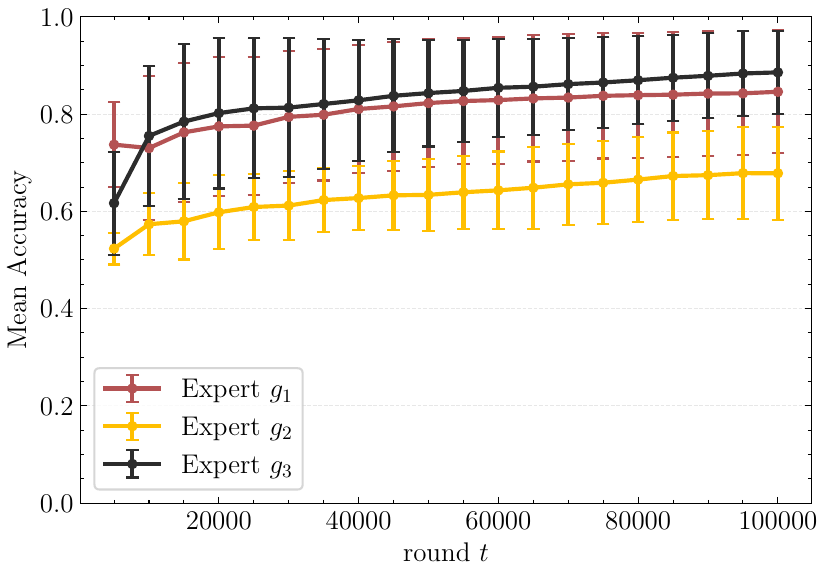}
        \caption{Average accuracy of the experts queried by the algorithm.}
        \label{fig:real_exp1_expert_acc_queried}
    \end{subfigure}\hfill
    \begin{subfigure}[t]{0.32\textwidth}
        \centering
        \includegraphics[height=3.2cm]{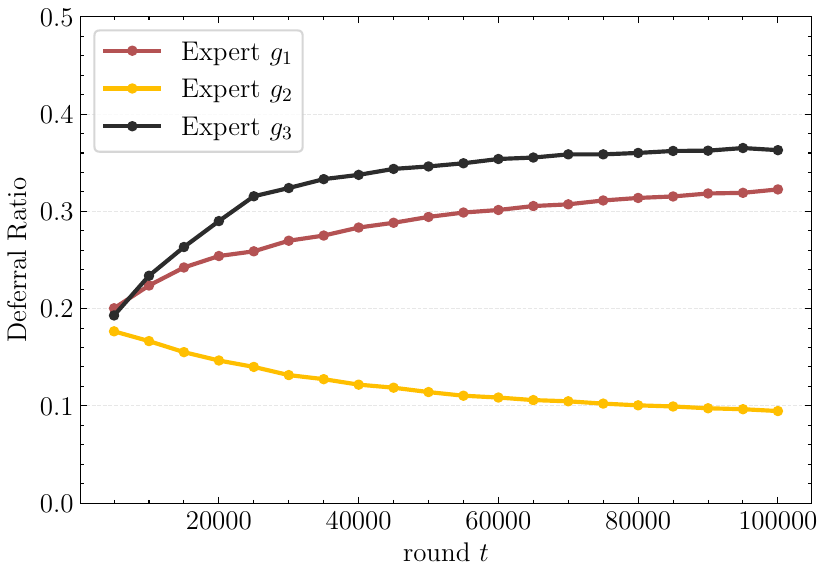}
        \caption{Average expert deferral ratio.}
        \label{fig:real_exp1_def_ratio}
    \end{subfigure}

    \vspace{0.35em}

    \begin{subfigure}[t]{0.48\textwidth}
        \centering
        \includegraphics[height=3.2cm]{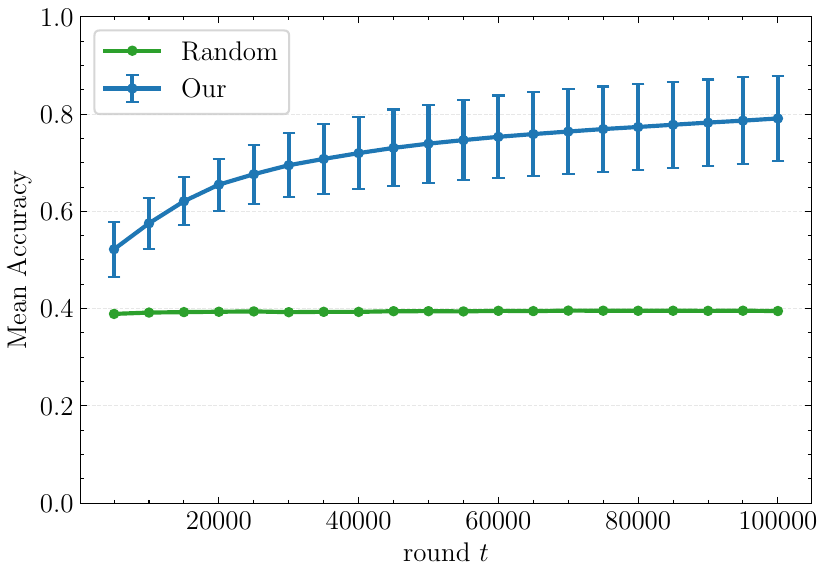}
        \caption{Average accuracy. }
        \label{fig:real_exp1_mean_acc}
    \end{subfigure}\hfill
    \begin{subfigure}[t]{0.48\textwidth}
        \centering
        \includegraphics[height=3.2cm]{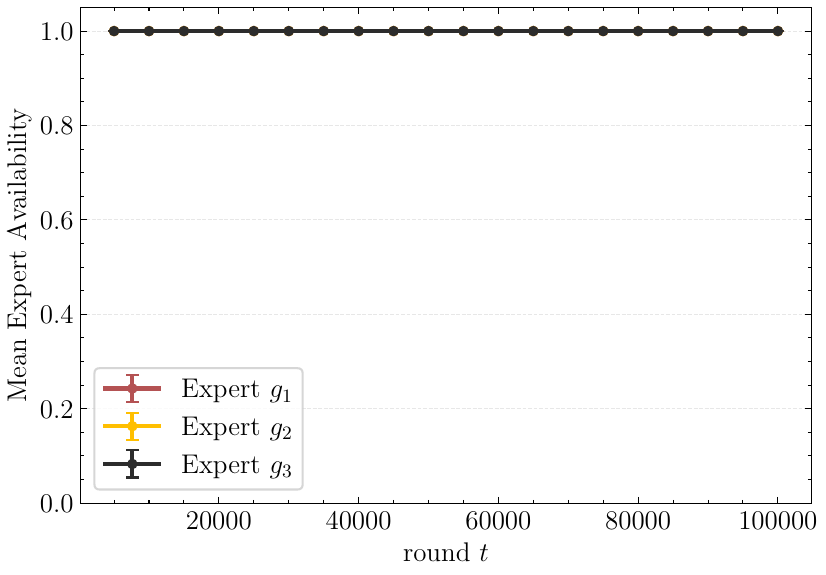}
        \caption{Expert availability. }
        \label{fig:real_exp1_expert_avail}
    \end{subfigure}

    \caption{
        Results of experiment on Reuters4 for \textbf{setting 1: fixed availability and expertise}. Whiskers denote standard deviations computed over 5 independent runs.
    }
    \label{fig:real_exp1}
\end{figure}

\paragraph{Setting 2: Drifting Availability and Fixed Expertise.} Accuracies of experts are given in Table~\ref{tab:average_acc_setting_2_reuters}. 
Figure~\ref{fig:real_exp2} shows the accuracy, normalized deferral loss, queried-expert accuracy, and deferral ratios under varying expert availability.

\begin{table}[H]
\centering
\caption{Average Accuracies of Experts for setting 2 in Reuters4. }
\renewcommand{\arraystretch}{1.4}
\begin{tabular}{cccc}
\toprule
& \textbf{Expert $g_1$} & \textbf{Expert $g_2$} & \textbf{Expert $g_3$} \\
\midrule
Accuracy (\%) & $62.21$ & $61.09$ & $50.01$ \\
\bottomrule
\end{tabular}
\label{tab:average_acc_setting_2_reuters}
\end{table}

\begin{figure}[ht]
    \centering
    \captionsetup{font=footnotesize}
    \begin{subfigure}[t]{0.32\textwidth}
        \centering
        \includegraphics[height=3.2cm]{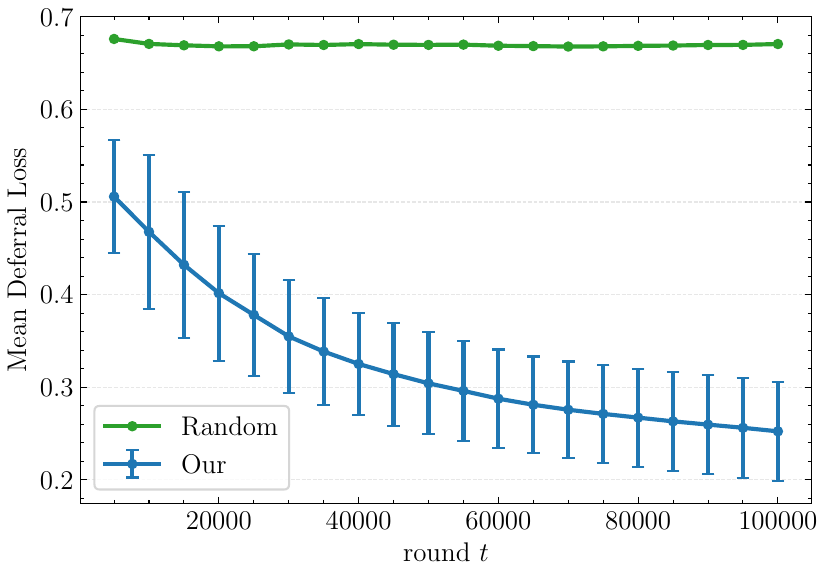}
        \caption{Average true deferral loss. }
        \label{fig:real_exp2_mean_def_loss}
    \end{subfigure}\hfill
    \begin{subfigure}[t]{0.32\textwidth}
        \centering
        \includegraphics[height=3.2cm]{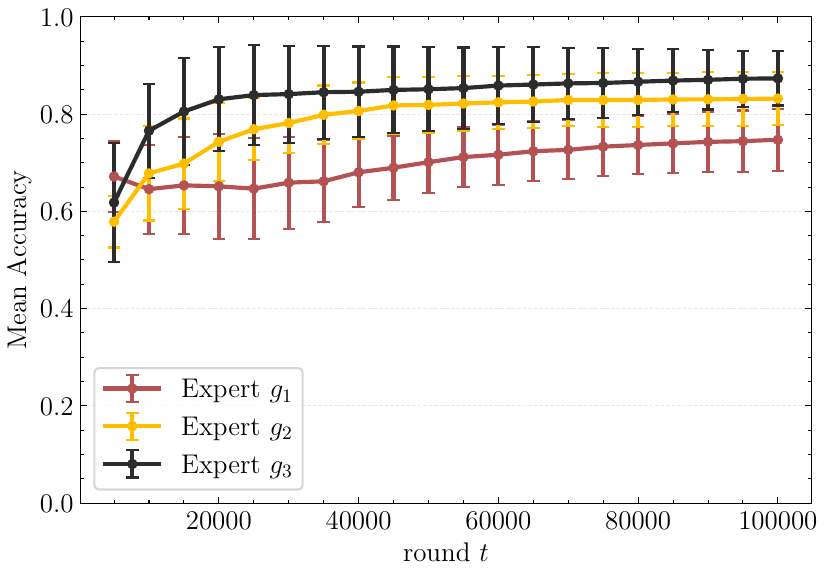}
        \caption{Average accuracy of the experts queried by the algorithm.}
        \label{fig:real_exp2_expert_acc_queried}
    \end{subfigure}\hfill
    \begin{subfigure}[t]{0.32\textwidth}
        \centering
        \includegraphics[height=3.2cm]{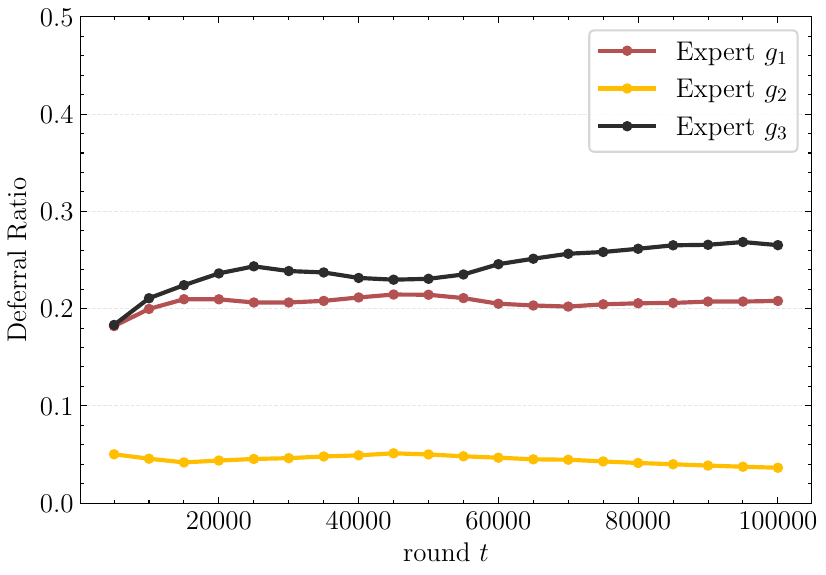}
        \caption{Average expert deferral ratio.}
        \label{fig:real_exp2_def_ratio}
    \end{subfigure}

    \vspace{0.35em}

    \begin{subfigure}[t]{0.48\textwidth}
        \centering
        \includegraphics[height=3.2cm]{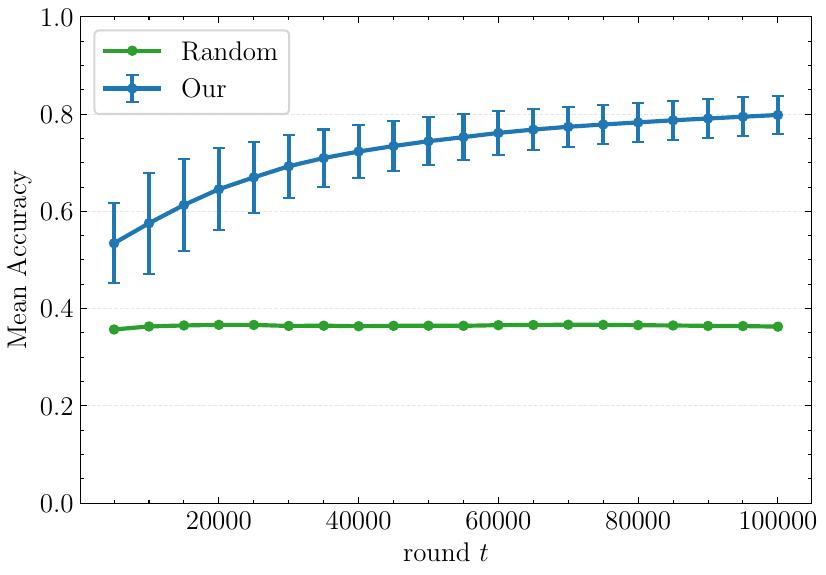}
        \caption{Average accuracy. }
        \label{fig:real_exp2_mean_acc}
    \end{subfigure}\hfill 
    \begin{subfigure}[t]{0.48\textwidth}
        \centering
        \includegraphics[height=3.2cm]{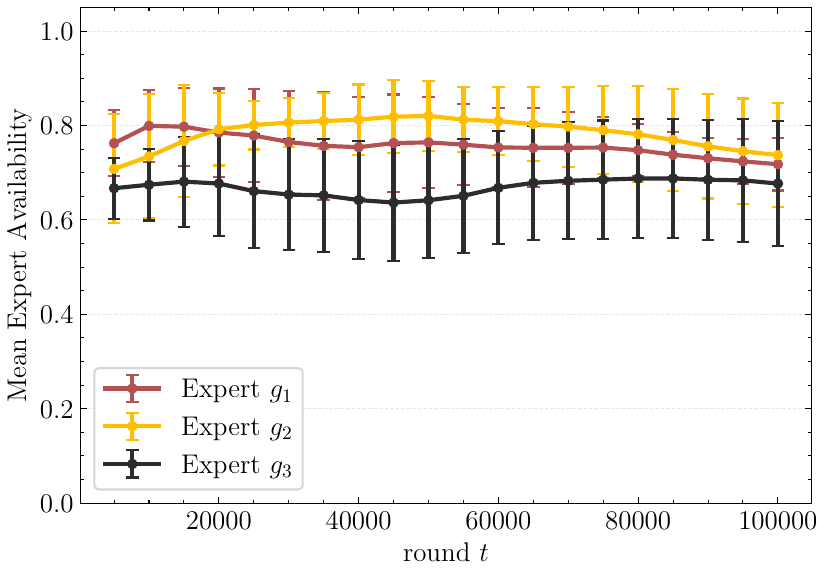}
        \caption{Expert availability. }
        \label{fig:real_exp2_expert_avail}
    \end{subfigure}

    \caption{
        Results of experiment on Reuters4 for \textbf{setting 2: drifting availability and fixed expertise}. Whiskers denote standard deviations computed over 5 independent runs.
    }
    \label{fig:real_exp2}
\end{figure}


\paragraph{Setting 3: Drifting Availability and Drifting Expertise.} 
Figure~\ref{fig:real_exp3} summarizes the results. The changing regional expert accuracies are shown in Figure~\ref{fig:real_exp3_expert_acc_by_region}.

Figure~\ref{fig:real_exp3_expert_acc_by_region} shows that expert \(g_1\) gradually shifts its expertise from classes~1 and 2 to classes~3 and 4, expert \(g_2\) shifts from classes~2 and 3 to classes~1 and 4, and expert \(g_3\) shifts from classes~3 and 4 to classes~1 and 2.
Figure~\ref{fig:real_exp3} reports the corresponding queried-expert accuracies, deferral ratios, task accuracy, and normalized deferral loss.

\begin{figure}[htbp]
  \centering
  \includegraphics[width=0.6\linewidth]{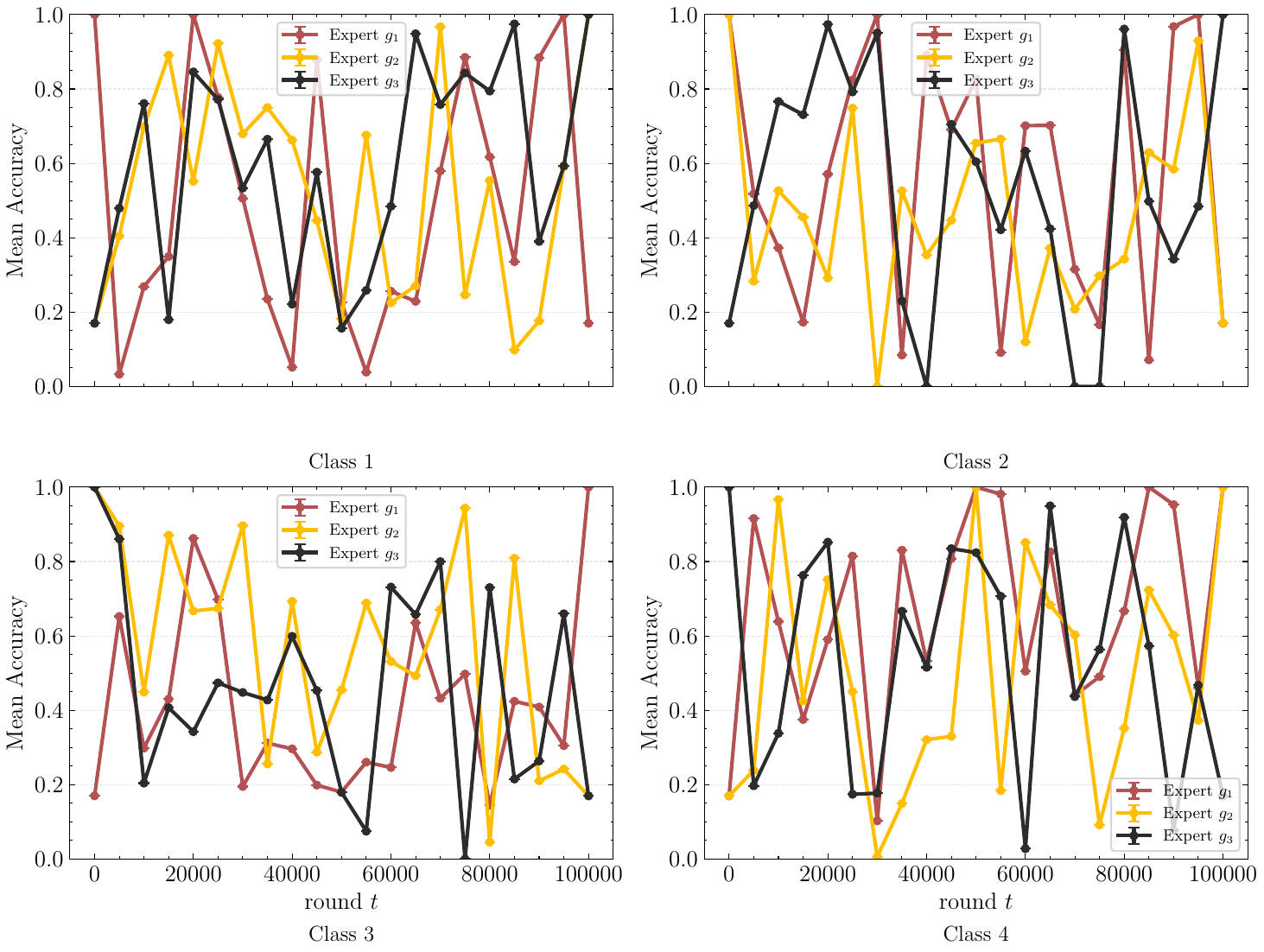}
  \caption{Expert Accuracies by Regions on Reuters4.}
  \label{fig:real_exp3_expert_acc_by_region}
\end{figure}

\begin{figure}[ht]
    \centering
    \captionsetup{font=footnotesize}
    \begin{subfigure}[t]{0.32\textwidth}
        \centering
        \includegraphics[height=3.2cm]{figures/real_exp3/mean_def_loss.pdf}
        \caption{Average true deferral loss. }
        \label{fig:real_exp3_mean_def_loss}
    \end{subfigure}\hfill
    \begin{subfigure}[t]{0.32\textwidth}
        \centering
        \includegraphics[height=3.2cm]{figures/real_exp3/expert_acc_queried.pdf}
        \caption{Average accuracy of the experts queried by the algorithm.}
        \label{fig:real_exp3_expert_acc_queried}
    \end{subfigure}\hfill
    \begin{subfigure}[t]{0.32\textwidth}
        \centering
        \includegraphics[height=3.2cm]{figures/real_exp3/expert_def_ratio.pdf}
        \caption{Average expert deferral ratio.}
        \label{fig:real_exp3_def_ratio}
    \end{subfigure}

    \vspace{0.35em}

    \begin{subfigure}[t]{0.48\textwidth}
        \centering
        \includegraphics[height=3.2cm]{figures/real_exp3/mean_acc.pdf}
        \caption{Average accuracy. }
        \label{fig:real_exp3_mean_acc}
    \end{subfigure}\hfill
    \begin{subfigure}[t]{0.48\textwidth}
        \centering
        \includegraphics[height=3.2cm]{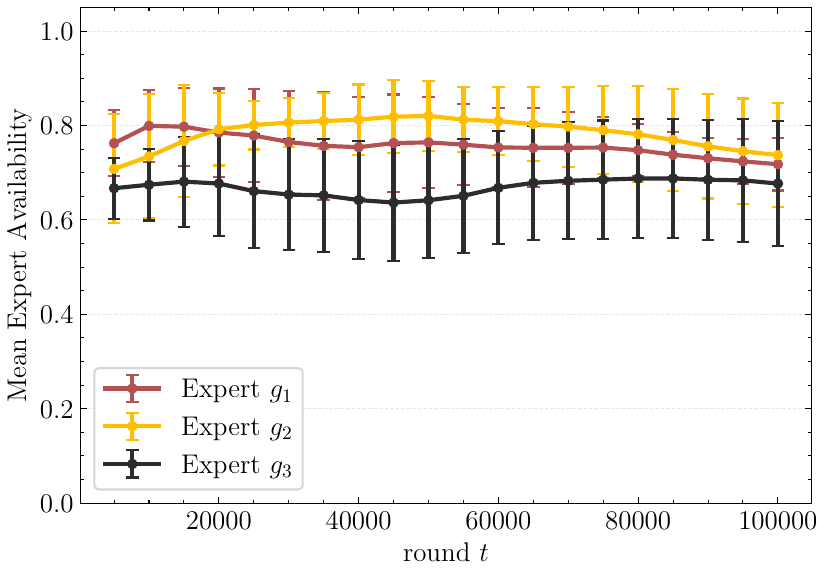}
        \caption{Expert availability. }
        \label{fig:real_exp3_expert_avail}
    \end{subfigure}

    \caption{
        Results of experiment on Reuters4 for \textbf{setting 3: drifting availability and drifting expertise}. Whiskers denote standard deviations computed over 5 independent runs.
    }
    \label{fig:real_exp3}
\end{figure}

\subsection{Details of CIFAR10H Dataset Experiments}
\label{appendix:cifar10h_dataset}

In this section, we report additional results on the CIFAR10H dataset \citep{peterson2019human}, which augments CIFAR10 \citep{krizhevsky2009learning} with real human-annotated labels. We aggregate the human annotations into three groups, each treated as a human expert \(g_i, \; i=1,2,3\). For an input image \(x\), expert \(g_i\) predicts by sampling uniformly at random from the labels in group \(i\). We further introduce expert-specific noise regions to model heterogeneous expertise: \(g_1\) is knowledgeable on labels \(\{6,7,8,9\}\), \(g_2\) is knowledgeable on labels \(\{0,1,2,9\}\), \(g_3\) is knowledgeable on labels \(\{1,2,3,4\}\). The mean accuracy of each expert after noise injection is reported in Table~\ref{tab:cifar10h:expert_acc}. Each expert uses the normalized cost \(c_j(x,y) = (\mathbf{1}\{g_j(x) \neq y\} + 0.05)/1.05\).

\begin{table}[h]
\centering
\caption{Average accuracies of experts in CIFAR10H post-noise.}
\label{tab:cifar10h:expert_acc}
\begin{tabular}{lccc}
\toprule
\textbf{Metric} & \textbf{Expert \(g_1\)} & \textbf{Expert \(g_2\)} & \textbf{Expert \(g_3\)} \\
\midrule
Mean Accuracy & 0.445 & 0.443 & 0.432 \\
\bottomrule
\end{tabular}
\end{table}

For this experiment, we use a Wide Residual Network (WideResNet) as the predictor \(h \in \mathcal{H}\), instead of a linear hypothesis. To make expert consultation meaningful, we evaluate on CIFAR10C inputs corrupted with Gaussian noise at severity level~2 \cite{hendrycks2019benchmarking}. As a benchmark, we compare against a baseline that uses the classifier when its confidence is at least \(0.5\) and otherwise defers uniformly to one of the available experts.

The results under fixed availability and expertise settings are summarized in Table~\ref{tab:cifar10h:main_res}, and the learning curves are shown in Figure~\ref{fig:cifar10h:exp}. Our method achieves both lower mean deferral loss and higher mean accuracy than the baseline. This indicates that the learned policy attains a better trade-off between prediction performance and consultation cost, rather than relying on deferral in a naive manner. The queried expert accuracies are also substantially higher under our method, showing that the algorithm does not defer uniformly, but instead routes examples toward more suitable experts. The deferral ratios further support this behavior: the algorithm learns a non-uniform consultation pattern, which suggests that it is able to distinguish between examples that should be handled by the classifier and those for which expert advice is more beneficial.

\begin{table}[ht]
\centering
\caption{Comparison between our method and the baseline, averaged over 5 runs. Lower mean deferral loss and higher accuracy are highlighted; deferral ratios are descriptive.}
\label{tab:cifar10h:main_res}
\begin{tabular}{lcc}
\toprule
\textbf{Metric} & \textbf{Our Method} & \textbf{Baseline} \\
\midrule
Mean Deferral Loss               & \textbf{0.3953} & 0.4133 \\
Mean Accuracy                    & \textbf{0.6519} & 0.6145 \\
Expert \(g_1\) Queried Accuracy  & \textbf{0.7253} & 0.4228 \\
Expert \(g_2\) Queried Accuracy  & \textbf{0.7967} & 0.3454 \\
Expert \(g_3\) Queried Accuracy  & \textbf{0.5764} & 0.4291 \\
Expert \(g_1\) Deferral Ratio    & 0.3233 & 0.1843 \\
Expert \(g_2\) Deferral Ratio    & 0.2266 & 0.1850 \\
Expert \(g_3\) Deferral Ratio    & 0.3981 & 0.1847 \\
\bottomrule
\end{tabular}
\end{table}

\begin{figure}[ht]
    \centering
    \captionsetup{font=footnotesize}

    \begin{subfigure}[t]{0.48\textwidth}
        \centering
        \includegraphics[height=4.2cm]{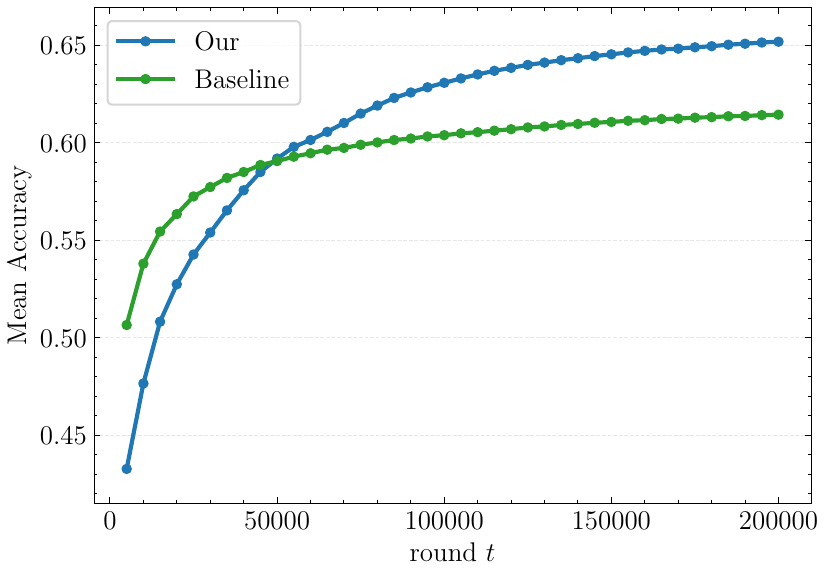}
        \caption{Average accuracy.}
        \label{fig:cifar10h:mean_acc}
    \end{subfigure}\hfill
    \begin{subfigure}[t]{0.48\textwidth}
        \centering
        \includegraphics[height=4.2cm]{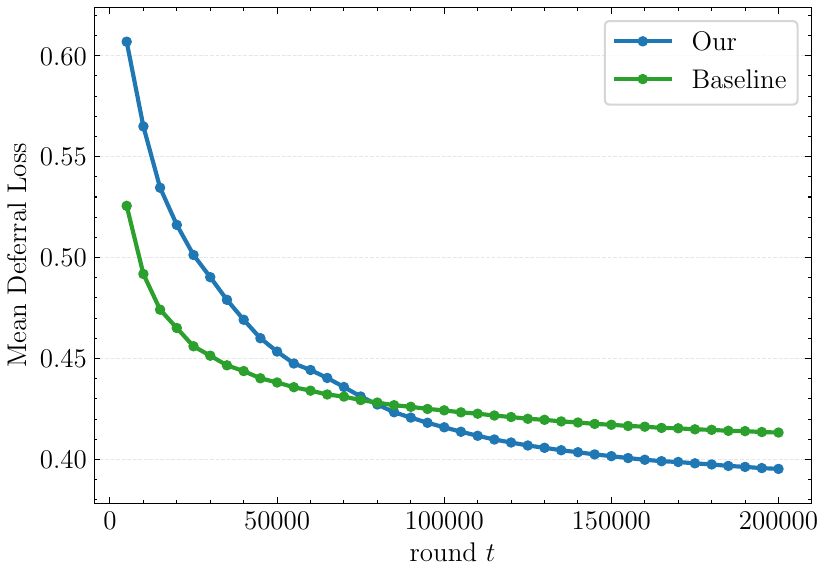}
        \caption{Average true deferral loss.}
        \label{fig:cifar10h:mean_def_loss}
    \end{subfigure}

    \caption{
        Results of experiment on CIFAR10H with image corruption from CIFAR10C. Results are averaged over 5 independent runs.
    }
    \label{fig:cifar10h:exp}
\end{figure}

\end{document}